\documentclass[12pt]{article}
\usepackage{times}

\usepackage{amsmath,amsfonts,bm}
\usepackage{amsthm,amssymb}
\usepackage{mathtools}
\usepackage{xcolor}

\DeclareMathOperator*{\argmin}{arg\,min}
\newcommand{\eg}{{\emph{e.g.}}}
\newcommand{\cA}{\mathcal{A}}

\newcommand{\cD}{\mathcal{D}}

\newcommand{\cL}{\mathcal{L}}

\newcommand{\cN}{\mathcal{N}}
\newcommand{\cO}{\mathcal{O}}
\newcommand{\cP}{\mathcal{P}}

\newcommand{\cU}{\mathcal{U}}

\newcommand{\bE}{\mathbb{E}}

\newcommand{\bR}{\mathbb{R}}

\newcommand{\X}{{X}}
\newcommand{\Y}{{Y}}

\newcommand{\vw}{{\boldsymbol{w}}}
\newcommand{\vv}{{\boldsymbol{v}}}
\newcommand{\homo}{H}

\usepackage{url}
\usepackage{fancyhdr}
\usepackage{booktabs}
\usepackage{nicefrac}
\usepackage{microtype}
\usepackage[round]{natbib}
\usepackage{enumitem}
\usepackage{graphicx}
\usepackage{algorithmic}
\usepackage{bbm}
\usepackage{diagbox}
\usepackage{float}
\usepackage{lipsum}
\usepackage{multirow}
\usepackage{caption}
\usepackage{wrapfig}
\usepackage{selectp}
\renewcommand{\bibname}{References}
\renewcommand{\bibsection}{\subsubsection*{\bibname}}
\usepackage{eso-pic}
\usepackage{forloop}
\usepackage[bb=pazo]{mathalpha}
\usepackage{subcaption}
\newtheorem*{theorem*}{Theorem} 
\newtheorem{theorem}{Theorem}
\newtheorem{corollary}{Corollary}
\newtheorem{lemma}{Lemma}
\newtheorem{definition}{Definition}
\newtheorem{assumption}{Assumption}

\newtheorem{proposition}{Proposition}

\theoremstyle{definition} 

\newtheorem{example}[theorem]{Example}

\usepackage{titletoc}
\usepackage{appendix}
\usepackage[most]{tcolorbox}
\usepackage[align=center,nobreak=true,framemethod=tikz,skipabove=2pt,skipbelow=1pt,innertopmargin=-3pt,innerbottommargin=3pt,innerleftmargin=5pt,innerrightmargin=5pt,leftmargin=-2pt,rightmargin=-2pt,tikzsetting={
    draw=black, 
    line width=1pt}]{mdframed}
\usepackage{thm-restate}
\definecolor{mygrey}{rgb}{0.8, 0.8, 0.8}
\definecolor{darkgrey}{rgb}{0.5, 0.5, 0.5}
\usepackage{makecell}
\usepackage{xspace}
\usepackage{tabularx}       
\usepackage{tikz}           
\usepackage[colorlinks=true, linkcolor=blue, citecolor=blue]{hyperref}

\newcounter{observationcounter}
\makeatletter
\newcommand{\obsref}[1]{%
    \@ifundefined{obs@#1}{%
        ??%
    }{%
        \hyperref[#1]{\csname obs@#1\endcsname}%
    }%
}

\makeatother

\usepackage{parskip}
\title{Generalization Bounds of Stochastic Gradient Descent in Homogeneous Neural Networks}

\author{
  \small
  Wenquan Ma
  \and
  \small
  Yang Sui
  \and
  \small
  Jiaye Teng\thanks{Author names are in alphabetical order. Correspondence to: Jiaye Teng \texttt{<tengjiaye@sufe.edu.cn>}}
  \and
  \small
  Bohan Wang
  \and
  \small
  Jing Xu
  \and
  \small
  Jingqin Yang
}
\date{}

\ifdefined\usebigfont

\usepackage{times}
\usepackage[fontsize=13pt]{scrextend}
\usepackage[left=1.56in,right=1.56in,top=1.6in,bottom=1.74in]{geometry}

\else
\usepackage[margin=1in]{geometry}
\fi

\begin{document}

\maketitle

\begin{abstract}
Algorithmic stability is among the most potent techniques in generalization analysis.
However, its derivation usually requires a stepsize $\eta_t = \cO(1/t)$ under non-convex training regimes, where $t$ denotes iterations.
This rigid decay of the stepsize potentially impedes optimization and may not align with practical scenarios.
In this paper, we derive the generalization bounds under the homogeneous neural network regimes, proving that this regime enables slower stepsize decay of order $\Omega(1/\sqrt{t})$ under mild assumptions.
We further extend the theoretical results from several aspects, \eg, non-Lipschitz regimes. 
This finding is broadly applicable, as homogeneous neural networks encompass fully-connected and convolutional neural networks with ReLU and LeakyReLU activations.
\end{abstract}

\section{Introduction}\label{sec: intro}

Stochastic gradient descent (SGD) has become a cornerstone in the field of deep learning. Extensive empirical applications have demonstrated its remarkable success in both optimization and generalization~\citep{DBLP:journals/corr/GoyalDGNWKTJH17, DBLP:conf/nips/BrownMRSKDNSSAA20}. On the theoretical side, researchers have proposed various approaches to address the challenge of generalization~\citep{DBLP:conf/colt/McAllester99,DBLP:conf/aistats/RussoZ16,bartlett2020benign}. Among these, one of the most popular is uniform convergence~\citep{DBLP:conf/nips/BartlettFT17,DBLP:conf/iclr/WeiM20}, which, unfortunately, has proven to be inadequate for understanding generalization in deep learning~\citep{shalev2010learnability,DBLP:conf/nips/NagarajanK19,DBLP:conf/iclr/GlasgowWW023}.

As an alternative to uniform convergence, algorithmic stability has emerged as a potential solution. In convex training scenarios, it provides guaranteed generalization for a reasonable number of training iterations~\citep{DBLP:journals/jmlr/BousquetE02, DBLP:conf/icml/HardtRS16}. However, real-world applications typically occur in non-convex landscapes, where the generalization behavior of SGD remains elusive. Existing works on algorithmic stability under non-convex training regimes usually apply to the case with a stepsize of $\cO(1/t)$, where $t$ denotes the training iteration~\citep{DBLP:conf/icml/HardtRS16, DBLP:conf/icml/KuzborskijL18, DBLP:conf/uai/ZhangZBP0022}. Unfortunately, this stepsize is rarely used in practice due to its slow training speed (see Section \ref{sec: optimization} for theoretical insights and Figure~\ref{fig:cifar10} for illustration). This creates a gap between theory and practice, significantly reducing the practical utility of algorithmic stability.

\begin{figure*}[t]  
 \centering
    \subfloat[Training accuracy]{\includegraphics[width=0.35\textwidth]{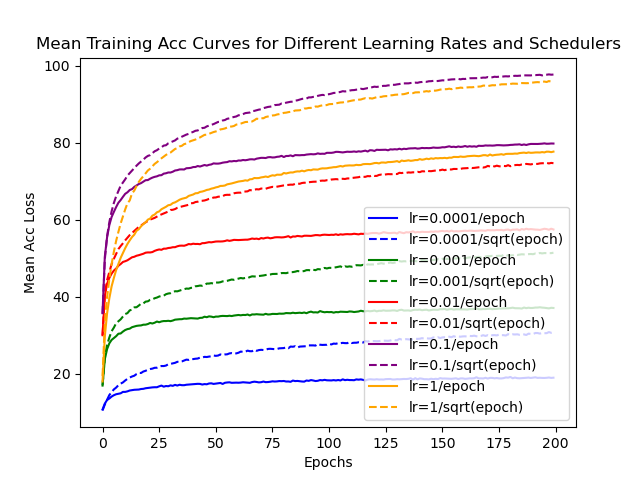}}
    \hspace{4em}
    \subfloat[Test Accuracy]{\includegraphics[width=0.35\textwidth]{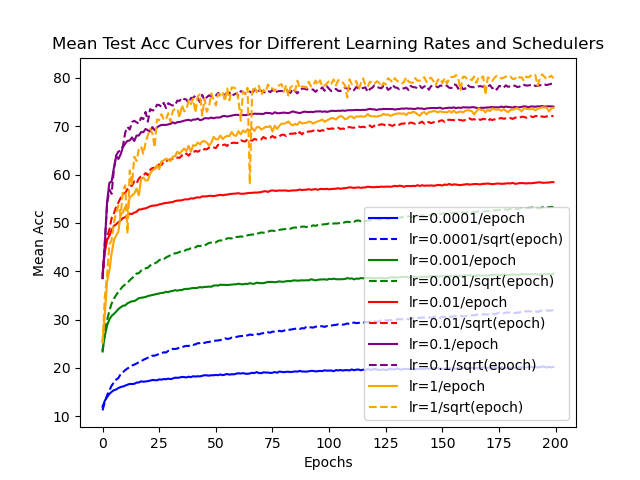}}
  \caption{The training and test accuracy curves for SGD with various stepsize schedulers on CIFAR-10 using ResNet 18. Models trained with stepsize $\Theta(1/\sqrt{t})$ (dotted line) converge faster and perform better than those trained with stepsize $\Theta(1/t)$ (solid line).}  
  \label{fig:cifar10}  
\end{figure*}

To bridge this gap, there are three main approaches: (a) employing alternative algorithms such as Stochastic Gradient Langevin Dynamics (SGLD) instead of SGD~\citep{DBLP:conf/colt/Mou0Z018,DBLP:conf/iclr/LiLQ20,DBLP:conf/nips/FarghlyR21,DBLP:conf/nips/BassilyGM21}; (b) utilizing Polyak-Lojasiewicz (PL) conditions, which are closely associated with convexity~\citep{DBLP:conf/icml/CharlesP18,DBLP:conf/iclr/NikolakakisHKK23,zhu2024stabilitysharperriskbounds}; (c) adopting alternative generalization metrics such as the gradient norm rather than the traditional generalization gap~\citep{lei2023stability,zhang2024generalization}.
However, 
(a) is challenging to extend to SGD settings;
(b) relies on a restrictive assumption that is difficult to relax;
and (c) typically fails to generalize to multi-pass scenarios.
Consequently, it remains under-explored how SGD effectively generalizes in multi-epoch regimes without convexity-related assumptions beyond $\cO(1/t)$ stepsize.

In this paper, we take a step forward in bridging the gap by proving that SGD can generalize with stepsizes of order $\Omega(1/\sqrt{t})$. 
To the best of our knowledge, this is the first result that relaxes the stepsize requirement while allowing multiple uses of each sample under non-convex SGD training regimes without relying on additional assumptions like PL conditions. 
The derivation of this result relies on a notion called \emph{homogeneous} neural networks~\citep{DBLP:conf/nips/DuHL18,DBLP:conf/iclr/LyuL20}. 
Informally, a neural network $\Phi$ is $H$-homogeneous if
\begin{equation*}
    \forall c>0: \Phi(c\vw; \X) = c^H\Phi(\vw; \X) \ \text{for} \ \text{all} \ \vw \ \text{and} \ \X,
\end{equation*}
where $\vw$ denotes the weight and $\X$ denotes the input. 
Our paper proves that for homogeneous neural networks with
a loss function that preserves homogeneity, if the loss is Lipschitz and approximately smooth, its generalization gap can be bounded over a reasonable number of training iterations. 

\subsection{Our Results in More Detail.}
In particular, we consider a binary classification task with noisy labels $\{-1,1\}$, leading to a non-zero Bayesian optimal within the sphere. This condition is imposed to regulate the decay rate of the weight norm. To harness the homogeneity of $H$-homogeneous neural networks defined above, we focus on a loss function of the form $\ell(y, \hat{y}) = \max\{0, -y\hat{y}\}$.
Despite the fact that the loss function might encourage a small prediction $\hat{y}$ and correspondingly small weights, we concentrate on the direction of the weights rather than their absolute values, which is valid in classification regimes. 
To simplify our discussion, we define all generalization metrics within the sphere, as shown in \eqref{eqn:lossevaluation}.
Besides homogeneity, we anchor our analysis on two mild assumptions:

\begin{itemize}[nosep]
    \item \textbf{Bounded Loss} (Assumption~\ref{assump: loss bound}) We assume that the individual loss is upper bounded, and the whole training loss is lower bounded by its Bayesian optimal value. This assumption is necessary because, without it, achieving a consistent generalization gap in the presence of noisy labels would be unattainable.
    \item \textbf{Lipschitz and Approximately Smooth} (Assumption~\ref{assump: approximate smooth and lipshitz}) This assumption is standard in algorithmic stability analysis. We extend the previous smoothness assumption to approximate smoothness, which aligns better with ReLU activations.
\end{itemize}

We derive Theorem~\ref{thm: homo leads to generalization} based on the above assumptions, elucidating the correlation between homogeneous neural networks and generalization: 
\begin{equation*}
\bE_{\cA,\cD} \cL^s(\vw_T) - \hat{\cL}^s(\vw_T) \leq \left[L m_2\right]^{\frac{1}{m_1 + 1}}\left[1 + \frac{1}{m_1}\right] \frac{T^\frac{m_1}{m_1 + 1}}{n},
\end{equation*}
where $T$ is the number of iterations, $n$ is the sample size. 
We use $\cA$ and $\cD$ to represent the randomized algorithm and the distribution of training dataset, respectively. 
A comprehensive summary of the notation used throughout this paper is provided in Appendix~\ref{app:notations}. 
This bound is derived under stepsizes satisfying $\bE \eta_t = \Omega(t^{\frac{H-4}{2}})$ and applies to the multi-pass training regime. 
For the specific case of $H = 3$, the required stepsize simplifies to $\bE \eta_t = \Omega(1/\sqrt{t})$. 
Notably, these homogeneous neural networks can be constructed by normalizing multiple layers while leaving the final several layers unnormalized (see Proposition~\ref{prop: homo-property}). 
As such, our results encompass a wide range of scenarios. 

\textbf{Extending the Reach of Our Findings.} Our findings can be extended in several directions. 
\emph{First}, we demonstrate the broad applicability of our framework.  While our main result focuses on the loss function $\ell(y, \hat{y}) = \max\{0, -y\hat{y}\}$, Corollary~\ref{cor: extension of loss} extends our findings to other loss functions, including smooth homogeneous variants, \eg, $\ell(y, \hat{y}) = \max\{0, -y\hat{y}\}^u$ with $u \geq 1$, and functions with relaxed homogeneity. 
Similarly, we expand the scope of activation functions beyond ReLU, for instance $\sigma(\cdot) = \max\{0, \cdot\}^u$, and architectures beyond linear connections, such as normalized ResNet models, as described in Proposition~\ref{prop: homo-property}. 
\emph{Second}, for the specific case of three-homogeneous networks, Corollary~\ref{cor: Three-Homogeneous Leads to Generalization} demonstrates that the test error consistently converges to the Bayesian optimal if the training loss approximates the Bayesian optimal within a reasonable number of iterations. 
\emph{Third}, Theorem~\ref{thm: separation} shows a separation in optimization performance, indicating that our proposed stepsizes of $\bE \eta_t=\Omega(1 / \sqrt{t})$ achieve polynomial convergence, potentially outperforming the logarithmic convergence of $\bE \eta_t=\cO(1 / t)$. 
Example~\ref{ex: compatibility_2d} further illustrates the compatibility of our setting regarding generalization and optimization. 
\emph{Finally}, Section~\ref{sec:relax-lipschitz} proves that our framework ensures generalization without Lipschitz assumptions via an adapted on-average model stability, while maintaining the applicability to the multi-pass training regime. 

\textbf{Proof Key Insights.}
Our key insights stem from a straightforward observation:
For classification problems with homogeneous neural networks, accuracy depends solely on the direction of the weights.
To utilize this insight, we project the training trajectory onto a sphere. 
While the observed stepsize is in order $\Omega(1/\sqrt{t})$, the effective stepsize of the projected trajectory can be in order $\Theta(1/t)$. 
This leads to generalization, as demonstrated by algorithmic stability. 
One may question whether this effective stepsize $\Theta(1/t)$ negatively affects optimization. 
To address this concern, we provide additional discussions on optimization performance in Section~\ref{sec: optimization} and present the training loss curve across different stepsize regimes in Figure~\ref{fig: loss curve}~(left), which illustrates that the training loss can be optimized efficiently with the stepsize $\Omega(1/\sqrt{t})$. 

\subsection{Related Works}
\label{sec:related_works}
\begin{table*}[t] 
\caption{Comparison of our method with related works on non-convex training regimes with SGD.}
\label{tab:comparison}
\begin{center}
\begin{small}
\begin{tabular*}{\textwidth}{@{\extracolsep{\fill}}ccccccc@{}}
\toprule
&
\begin{tabular}[c]{@{}c@{}}Stepsize\\ Order\end{tabular}&
\begin{tabular}[c]{@{}c@{}} PL \\ Condition\end{tabular} &
\begin{tabular}[c]{@{}c@{}}Multi-\\ pass\end{tabular}&
\begin{tabular}[c]{@{}c@{}}Overpara-\\meterization\end{tabular}&
\begin{tabular}[c]{@{}c@{}} Generalization \\ Metric \end{tabular} \\
\midrule
\midrule
\citet{DBLP:conf/icml/HardtRS16}   & {\color{red}$\cO(1/t)$} & No& Yes & Yes& Loss \\
\citet{DBLP:conf/icml/KuzborskijL18} & {\color{red}$\cO(1/t)$} & No & Yes & Yes& Loss \\
\citet{DBLP:conf/uai/ZhangZBP0022}  & {\color{red}$\cO(1/t)$} & No & Yes& Yes& Loss\\
\citet{DBLP:conf/icml/CharlesP18}   & $\cO(1)$  & {\color{red}Yes}& Yes & Yes& Loss\\
\citet{DBLP:conf/iclr/NikolakakisHKK23}   & $\cO(1)$  & {\color{red}Yes}& Yes & Yes& Loss\\
\citet{zhu2024stabilitysharperriskbounds}   & $\cO(1)$  & {\color{red}Yes}& Yes & Yes& Loss\\
\citet{lei2021learning}          & $o(1)$ & No & Yes& {\color{red}No}& Gradient  \\ 
\citet{lei2023stability}       & $\cO(1)$ & No& {\color{red}No} & Yes& Gradient  \\ 
\citet{zhang2024generalization} & $\cO(1)$ & No& {\color{red}No} & Yes& Gradient  \\ 
\midrule
This Paper     & $\Omega(1/\sqrt{t})$  & No& Yes& Yes& Loss \\
\bottomrule
\end{tabular*}
\end{small}
\end{center}
\end{table*}
\textbf{Algorithmic Stability under Non-convex Training. }
To derive algorithmic stability in non-convex regimes with SGD, a typical requirement is to employ stepsize $\cO(1/t)$~\citep{DBLP:conf/icml/HardtRS16,DBLP:conf/icml/KuzborskijL18,DBLP:conf/uai/ZhangZBP0022}.
To relax the stepsize requirement, one possible approach is to utilize alternative algorithms (\eg, SGLD)~\citep{DBLP:conf/colt/Mou0Z018,DBLP:conf/iclr/LiLQ20,DBLP:conf/nips/FarghlyR21,DBLP:conf/nips/BassilyGM21}.
When restricted to SGD training with $\omega(1/t)$ stepsize, a line of work leverages the PL condition, which closely relates to convexity~\citep{DBLP:conf/icml/CharlesP18,DBLP:conf/iclr/NikolakakisHKK23,zhu2024stabilitysharperriskbounds}. 
Another line of work concentrates on alternative generalization metrics, such as the gradient norm, rather than the conventional generalization gap. However, they may face challenges when applied to multi-pass training regimes~\citep{lei2023stability,zhang2024generalization}.
In contrast, this paper introduces a novel approach parallel to the aforementioned two lines, based on the notion of \emph{homogeneity}. 
Table~\ref{tab:comparison} summarizes the comparison.

\textbf{Homogeneous Neural Networks.}
The concept of homogeneous neural networks has been explored in the literature~\citep{DBLP:conf/nips/WeiLLM19,DBLP:conf/icassp/RangamaniB22,DBLP:conf/nips/VardiSS22a}.
A line of work focuses on the implicit bias towards max-margin solutions in homogeneous neural networks ~\citep{DBLP:conf/icml/NacsonGLSS19,DBLP:conf/iclr/LyuL20,DBLP:conf/nips/JiT20} and its variants~\citep{DBLP:conf/iclr/KuninYMG23}. 
Of particular relevance here is \citet{DBLP:journals/nn/PaquinCG23}, which investigates algorithmic stability for SGD within the context of homogeneous neural networks. 
Different from their analysis, this paper (a) aims to improve stepsize requirements in stability analysis, and (b) introduces the concept of approximate smoothness to extend the scope of previous regimes.

\section{Homogeneity}
\textbf{Distributions, Datasets and Algorithms.}
Let $(\X, \Y) \sim \cP  \in \bR^{d} \times \bR$ denote the feature-response pair, where $\cP$ denotes the joint distribution. 
We consider a noisy classification problem where $\Y \in \{-1, +1\}$ contains label noise. 
The model $f_\vw(\cdot)$ is trained on an $n$-sample dataset $S = \{ (\X_i, \Y_i)\}_{i \in [n]}$ where the samples are drawn i.i.d. from the distribution $\cP$. 
Here \( [n] \) denotes the set $\{1, 2, \cdots, n\}$. 
We denote $\cD$ as the distribution of dataset $S$. 

We consider a non-negative loss function $\ell(\vw; \X, \Y)$ with the population loss $\cL(\vw) \triangleq \bE_{(\X, \Y) \sim \cP} \allowbreak \ell(\vw; \X, \Y)$ and the empirical loss $\hat{\cL}(\vw) \triangleq \frac{1}{n} \sum_{i \in [n]} \ell(\vw; \X_i, \Y_i)$. 
We assume that the Bayesian optimal on the unit sphere is non-zero, namely $\underline{\sigma}^2 \triangleq \min_{\vw} \cL(\vw/\|\vw\|) \neq 0$. 
This condition generally holds in scenarios with label noise. 

The model is optimized via the SGD algorithm $\cA$. 
Let $\vw_t$ denote the trained weight at iteration $t$, and let $\vv_t \triangleq \vw_t / \| \vw_t\| $ be its corresponding direction. 
Starting from an initialization $\bE_\cA \|\vw_0\|^2 = \cO(1)$, the weight update follows:
\begin{equation}
\label{eqn: iteration}
\vw_{t+1} = \vw_{t} - \eta_t \nabla_{\vw_{t}} \ell_t (\vw_t),
\end{equation}
where $\eta_t$ is the stepsize and $\ell_t$ is the loss function at iteration $t$. 
In this paper, we consider SGD with replacement, and therefore, $\ell_t(\cdot) \triangleq \ell(\cdot; \X_i, \Y_i)$ with probability $1/n$ for each index $i \in [n]$. 
Let $\bE_I$ denote the expectation over the choice of the index $i$ at the current iteration, given the parameters from all previous iterations. 
Let $\bE_\cD$ denote the expectation over the training dataset $S \sim \cD$, and $\bE_\cA$ denote the expectation over the randomness of the algorithm $\cA$, including the initialization and the sampling of indices across all previous iterations. 
Standard asymptotic notations are defined in Appendix~\ref{app:notations}.

\textbf{Homogeneity.}
Our results are based on the homogeneity of neural networks. Next, we provide a detailed definition of homogeneous neural networks and explain how they can be constructed.
\begin{definition}
   \label{def: homogeneous} \textnormal{\textbf{(Homogeneous function and homogeneous neural network)}}
A function $h$ is $\homo$-homogeneous ($H\geq 0$) if $h(c \vw) = c^\homo h(\vw)$.
Specifically, a neural network $\Phi(\vw; \X)$ is $\homo$-homogeneous ($H\geq 0$) if 
    \begin{equation*}
        \Phi(c \vw; \X) = c^\homo \Phi(\vw; \X).
    \end{equation*}
\end{definition}
It's worth noting that the concept of homogeneity is evident across a range of neural network architectures. 
Specifically, neural networks employing ReLU, LeakyReLU, linear layers, and max-pooling layers inherently exhibit homogeneity. 
One could construct an $\homo$-homogeneous neural networks based on Proposition~\ref{prop: homo-property}, and we refer to Figure~\ref{fig: illustration of construction} in Appendix~\ref{app:illustration_sec} for an illustration.

\begin{proposition}[Homogeneity Construction]
\label{prop: homo-property}
Consider two types of layers: 
\begin{enumerate}[leftmargin=12pt]
    \item \textbf{Normalized layers\footnote{We only present linear connections here for simplicity but it allows for different structures, \eg, ResNet connections $\cN_\vw(a) \triangleq a + \frac{\vw}{\| \vw\|}  \sigma_n(a)$ and max-pooling layers.}} $\cN_\vw(a) = \frac{\vw}{\| \vw\|}  \sigma_n(a)$, with arbitrary activation $\sigma_n$, \eg, ReLU, sigmoid; 
    \item \textbf{Unnormalized layers} {$\cU_\vw(a) \triangleq \vw \sigma_u(a)$, where $\sigma_u$ is ReLU, LeakyReLU, Linear activation, or $\sigma(\cdot)=\max\{0,\cdot\}^u$ with $u \ge 1$. }
\end{enumerate}
For convenience, we omit the bias when the context is clear. 
The $h$-layer neural network $\Phi_\homo(\vw; \X)$ with $h-\homo$ normalized layers $\cN_\vw$ and $\homo$ unnormalized layers $\cU_\vw$ is $\homo$-homogeneous, where

\begin{equation*}
\begin{split}
    \Phi_H(\vw; \X) &= \underbrace{\cU_{\vw_{h}} \circ \cU_{\vw_{h-1}} \circ \cdots \circ \cU_{\vw_{h-H+1}}}_{H \text{ unnormalized layers}} \\
    &\quad \circ \underbrace{\cN_{\vw_{h-H}} \circ \cdots \circ \cN_{\vw_{1}}(\X)}_{h-H \text{ normalized layers}}.
\end{split}
\end{equation*}
\end{proposition}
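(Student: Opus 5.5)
The plan is to track, layer by layer, how each intermediate activation scales when the full parameter vector $\vw=(\vw_1,\dots,\vw_h)$ is replaced by $c\vw$ with $c>0$. Write $a_0=\X$, $a_k=\cN_{\vw_k}(a_{k-1})$ for $k\le h-\homo$, and $a_k=\cU_{\vw_k}(a_{k-1})$ for $k>h-\homo$. Let $a_k(c)$ denote the same quantities computed with weights $c\vw$. I will prove by induction on $k$ that
\begin{equation*}
a_k(c)=\begin{cases} a_k, & k\le h-\homo,\\ c^{\,k-(h-\homo)}\,a_k, & k> h-\homo.\end{cases}
\end{equation*}
Setting $k=h$ then gives $\Phi_\homo(c\vw;\X)=c^{\homo}\Phi_\homo(\vw;\X)$, as required by Definition~\ref{def: homogeneous}.

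\emph{Normalized block.} The base case is $a_0(c)=\X=a_0$. For $k\le h-\homo$, the key identity is $\frac{c\vw_k}{\|c\vw_k\|}=\frac{\vw_k}{\|\vw_k\|}$ for $c>0$ and $\vw_k\neq 0$. Together with the inductive hypothesis $a_{k-1}(c)=a_{k-1}$, this gives $a_k(c)=\frac{\vw_k}{\|\vw_k\|}\sigma_n(a_{k-1})=a_k$. No property of $\sigma_n$ is needed here, which is why arbitrary activations such as sigmoid are allowed. The footnoted variants are handled the same way. For the ResNet layer $a+\frac{\vw}{\|\vw\|}\sigma_n(a)$ and for max-pooling, the output depends only on the scale-invariant direction $\vw/\|\vw\|$ and on an input that is already invariant, so invariance is preserved.

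\emph{Unnormalized block.} For $k>h-\homo$, let $j=k-1-(h-\homo)$, so the inductive hypothesis reads $a_{k-1}(c)=c^{j}a_{k-1}$. Then
\begin{equation*}
a_k(c)=c\vw_k\,\sigma_u\bigl(c^{j}a_{k-1}\bigr).
\end{equation*}
For ReLU, LeakyReLU and the linear activation, $\sigma_u$ is positively $1$-homogeneous coordinatewise, i.e.\ $\sigma_u(\lambda a)=\lambda\sigma_u(a)$ for $\lambda>0$. Hence $a_k(c)=c^{j+1}\vw_k\sigma_u(a_{k-1})=c^{j+1}a_k$, which closes the induction. Biases are omitted, as the statement permits; a nonzero bias would break the scaling.

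The main obstacle is the power activation $\sigma(\cdot)=\max\{0,\cdot\}^u$. It is positively $u$-homogeneous rather than $1$-homogeneous, so the recursion becomes $a_k(c)=c^{1+uj}a_k$. For the first unnormalized layer, $j=0$, so the exponent is $1$ regardless of $u$; this is exactly because its input comes from the scale-invariant normalized block. So the power activation causes no trouble there. If it is used in later unnormalized layers, the total degree becomes $\sum_{i<\homo}u^{i}$ instead of $\homo$. I would therefore state the proposition with power activations allowed in the first unnormalized layer, or replace the count $\homo$ by this sum. In either form the result is still a homogeneous network in the sense of Definition~\ref{def: homogeneous}, just with the adjusted degree.
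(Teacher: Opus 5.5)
Your induction is correct and is the same layer-by-layer degree counting the paper relies on: the paper gives no formal proof, only the remark in Appendix~\ref{app:illustration_sec} that the normalized layers are $0$-homogeneous while the $\homo$ unnormalized layers ``collectively contribute a degree of $\homo$''. In your version, scale invariance of $\vw_k/\|\vw_k\|$ handles the normalized block, and positive $1$-homogeneity of $\sigma_u$ adds exactly one degree per unnormalized layer.

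Your caveat about $\max\{0,\cdot\}^u$ is also right, and it exposes an error the paper overlooks. The degree of $a_k$ obeys $d_k=1+u\,d_{k-1}$ with $d_{h-\homo}=0$. So a power activation is harmless only in the first unnormalized layer, where in fact any activation works because its input is scale-invariant. Already with $h=\homo=2$ and scalar weights, $w_2\max\{0,w_1\max\{0,x\}^2\}^2$ is $3$-homogeneous rather than $2$-homogeneous. Since $\homo$ fixes the stepsize exponent $(\homo-4)/2$ in Theorem~\ref{thm: homo leads to generalization}, your corrected statement should be used: either restrict $u>1$ to the first unnormalized layer, or replace the degree $\homo$ by $\sum_{i=0}^{\homo-1}u^i$.
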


\textbf{Loss Function.}
Following the $\homo$-homogeneous neural network $\Phi(\vw; \X)$, the main result in Theorem~\ref{thm: homo leads to generalization} focuses on the loss function
\begin{equation}
    \label{eq:max_loss}
    \ell(\vw; \X, \Y) = \max\{-\Y \Phi(\vw; \X), 0\}. 
\end{equation}
Extensions to other loss functions are further discussed in the Remark of Theorem~\ref{thm: homo leads to generalization}. 

\textbf{The Validity of the Loss Function.}
The loss function in~\eqref{eq:max_loss} is valid because 
(a) the training direction depends only on the weight direction in classification problems with homogeneous neural networks, and
(b) the generalization metric is defined on the unit sphere, meaning that the bound of the generalization gap does not necessarily converge to zero as $\| \vw \|$ approaches zero.
We choose this loss function because it harnesses the homogeneity of neural networks, namely, applying $\ell$ on an $\homo$-homogeneous neural network returns an $\homo$-homogeneous loss.
Our theorems can be generalized to other loss functions that similarly harness homogeneity, \eg, $(\max\{-\Y \Phi(\vw; \X), 0\})^u$ with $u \geq 1$.
To proceed, we derive Lemma~\ref{lem: L-homogeneous property} based on the homogeneity:

\begin{lemma}
\label{lem: L-homogeneous property}
For any iteration $t > 0$ and a non-negative $\homo$-homogeneous function $\ell_t$ where $\homo \neq 0$, 
define the effective stepsize  $\tilde{\eta}_t$ as the stepsize projected onto the sphere, namely, on $\vv_t = \vw_t / \|\vw_t \|$,
then the following properties hold:
\begin{enumerate}[leftmargin=*]
    \item Effective Stepsize on the sphere: $\tilde{\eta}_t = \eta_t \| \vw_t\|^{\homo-2}$;
    \item Inner Product: $\vw_t^\top \nabla_{\vw_t} \ell_t(\vw_t) = \homo \ell_t(\vw_t)$;
    \item {Norm Iteration over Projected Dynamics}: $\| \vw_{t+1} \|^2 = \|\vw_{t} \|^2 ( 1 + \tilde{\eta}_t^2 \|\nabla_{\vv_t} \ell_t(\vv_t) \|^2  - 2 \homo \tilde{\eta}_t \ell_t(\vv_t))$.
\end{enumerate}
\end{lemma}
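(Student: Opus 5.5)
The three items all come from two standard facts about a differentiable $\homo$-homogeneous function $\ell_t$: its gradient is $(\homo-1)$-homogeneous, and it satisfies Euler's identity. The plan is to prove these two facts first and then obtain each item by direct substitution.

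\textbf{Gradient scaling.} Differentiate $\ell_t(c\vw)=c^\homo \ell_t(\vw)$ with respect to $\vw$. This gives $c\,\nabla\ell_t(c\vw)=c^\homo\nabla\ell_t(\vw)$, that is, $\nabla\ell_t(c\vw)=c^{\homo-1}\nabla\ell_t(\vw)$. Taking $c=\|\vw_t\|$ and writing $\vw_t=\|\vw_t\|\vv_t$, we get
\begin{equation*}
\nabla_{\vw_t}\ell_t(\vw_t)=\|\vw_t\|^{\homo-1}\nabla_{\vv_t}\ell_t(\vv_t), \qquad \ell_t(\vw_t)=\|\vw_t\|^{\homo}\ell_t(\vv_t).
\end{equation*}
Because of the non-smoothness of ReLU or the $\max$, the gradient should be read as the (Clarke) gradient used by the algorithm. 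The same scaling holds for it, since it is built from the same chain rule. I will state this caveat once and not belabor it.

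\textbf{Item 2 (Euler's identity).} Differentiate $c\mapsto \ell_t(c\vw_t)=c^\homo\ell_t(\vw_t)$ at $c=1$. This yields $\vw_t^\top\nabla\ell_t(\vw_t)=\homo\,\ell_t(\vw_t)$. Combining this with the scaling relations above also gives $\vv_t^\top\nabla_{\vv_t}\ell_t(\vv_t)=\homo\,\ell_t(\vv_t)$.

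\textbf{Item 1 (effective stepsize).} Divide the update \eqref{eqn: iteration} by $\|\vw_t\|$ and substitute the gradient scaling:
\begin{equation*}
\frac{\vw_{t+1}}{\|\vw_t\|}=\vv_t-\eta_t\|\vw_t\|^{\homo-2}\nabla_{\vv_t}\ell_t(\vv_t).
\end{equation*}
Before renormalization, this is a gradient step taken from the sphere point $\vv_t$ with stepsize $\tilde\eta_t=\eta_t\|\vw_t\|^{\homo-2}$. This is exactly how the effective stepsize is defined, so item 1 amounts to reading off the coefficient.

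\textbf{Item 3 (norm iteration).} Take the squared norm of the displayed identity. Expanding gives
\begin{equation*}
\|\vv_t\|^2-2\tilde\eta_t\,\vv_t^\top\nabla_{\vv_t}\ell_t(\vv_t)+\tilde\eta_t^2\|\nabla_{\vv_t}\ell_t(\vv_t)\|^2 .
\end{equation*}
Use $\|\vv_t\|=1$, and replace the inner product by $\homo\,\ell_t(\vv_t)$ via the sphere version of Euler's identity. Multiplying by $\|\vw_t\|^2$ then gives the stated formula.

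\textbf{Main obstacle.} The only delicate point is justifying differentiation and Euler's identity at non-differentiable points of the ReLU network. I would handle it by noting that homogeneity is preserved by the chain-rule (Clarke) gradient the algorithm actually uses, as in \citet{DBLP:conf/iclr/LyuL20}. The rest is algebra.
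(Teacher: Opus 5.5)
Your proposal is correct and follows essentially the same route as the paper. It uses the gradient scaling $\nabla_{\vw_t}\ell_t(\vw_t)=\|\vw_t\|^{\homo-1}\nabla_{\vv_t}\ell_t(\vv_t)$, Euler's identity from differentiating $\ell_t(c\vw_t)=c^{\homo}\ell_t(\vw_t)$ at $c=1$, and reads off $\tilde{\eta}_t=\eta_t\|\vw_t\|^{\homo-2}$ from the rescaled update; the only cosmetic difference is in item 3, where you expand the rescaled step and apply Euler at $\vv_t$, whereas the paper expands $\|\vw_t-\eta_t\nabla_{\vw_t}\ell_t(\vw_t)\|^2$, applies Euler at $\vw_t$, and then factors out $\|\vw_t\|^2$.
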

Notably, the effective stepsize is normalized by the weight norm.
This normalization enables the possibility of having a large stepsize while maintaining a small effective stepsize. 
This observation constitutes a key aspect of our analysis. 
We defer the proof of Lemma~\ref{lem: L-homogeneous property} to Appendix~\ref{appendix: proof of H-homo}. 

In the context of homogeneity, the weight direction matters much rather than its precise values. This phenomenon aligns with common practices in classification, where we mainly focus on the sign of the prediction instead of its precise value.
Consequently, we evaluate the population loss and empirical loss on the sphere, as outlined in \eqref{eqn:lossevaluation}. 
\begin{equation}
    \label{eqn:lossevaluation}
    \begin{split}
        \cL^s(\vw) &= \cL(\vw / \|\vw\|) = \cL(\vv), \\
        \hat{\cL}^s(\vw) &= \hat{\cL}(\vw / \|\vw\|) = \hat{\cL}(\vv).
    \end{split}
\end{equation}
\textbf{Algorithmic Stability.}
Algorithmic stability is a popular technique for bounding generalization error by measuring an algorithm's sensitivity to perturbations in the training data~\citep{DBLP:journals/jmlr/BousquetE02,shalev2010learnability,DBLP:conf/icml/HardtRS16}. This paper relies on this framework to motivate our analysis. We summarize the formal definitions of uniform stability ($\epsilon_T$-stable) and the classic stability bounds for SGD in Appendix~\ref{app:stability-review}. 

\textbf{Why Algorithmic Stability Requires an $\cO(1/t)$ Stepsize under Non-Convexity?}
The distinction between convex and non-convex optimization under algorithmic stability is fundamentally linked to the expansive property, as detailed in Lemma 3.6 in \citet{DBLP:conf/icml/HardtRS16}.
In the context of convex training, if the same training sample is selected (with a probability of $1 - \frac{1}{n}$), the following inequality holds: $\|\vw_{t+1} - \vw_{t+1}^\prime\| \leq \|\vw_t - \vw_t^\prime\|$.
However, in the case of non-convex training, the inequality is modified to: $\|\vw_{t+1} - \vw_{t+1}^\prime\| \leq (1 + \beta \eta_t) \|\vw_t - \vw_t^\prime\|$.
This modification implies that errors can accumulate over iterations in non-convex training, leading to an accumulation term $\prod_{t \in [T]} (1 + \beta \eta_t) \approx \exp(\beta \sum_{t\in[T]} \eta_t)$.
To ensure that this term grows polynomially with the number of iterations $T$, we require that the sum of the stepsizes over the iterations is $\sum_{t\in[T]} \eta_t = \cO(\log T)$, which in turn necessitates a stepsize of $\eta_t = \cO(1/t)$.

\section{Generalization Under Homogeneity}
\label{sec: theory}
In this section, we present our main theorems, which establish the connection between homogeneity and generalization. 
We begin by introducing the necessary assumptions in Section~\ref{sec: assumption}. 
Next, we present our main generalization bound for general $H$-homogeneous neural networks in Section~\ref{sec: Homogeneity Leads to Generalization}. 
We then specialize this result to the important case of three-homogeneous neural networks in Section~\ref{sec: three homo leads to gen}. 
Finally, we compare the optimization performance in Section~\ref{sec: optimization} to demonstrate the separation between different stepsize choices.  

\subsection{Assumptions}
\label{sec: assumption}
Before delving into the main theorem, we introduce the following assumptions concerning the loss function and the optimization process. 

\begin{assumption}[Loss Bound]
\label{assump: loss bound}
Assume that the individual loss is upper bounded by a constant $\bar{\sigma}^2$ on the unit sphere for any sample, namely
\begin{equation*}
    \sup_{\vv: \|\vv \| = 1} \sup_{\X,\Y} \ell(\vv; \X, \Y) \leq \bar{\sigma}^2.
\end{equation*}
Additionally, assume that the training loss is lower bounded by half of the Bayesian optimal during the training process, namely, for $t \in [0, T]$ with a given iteration $T$,
\begin{equation*}
    \inf_{t} \hat{\cL}(\vv_{t}) \geq \frac{1}{2} \underline{\sigma}^2. 
\end{equation*}
\end{assumption}
Assumption~\ref{assump: loss bound} provides bounds for the training loss. 
The upper bound is standard in related analyses, ensuring that the training procedure does not result in excessively high loss values that could lead to an unstable optimization process. 
The lower bound is necessary to obtain a consistent generalization bound in noisy label settings, since without which the generalization gap would not converge to zero, as demonstrated in Proposition~\ref{prop: loss bound requirement}. 
Notably, there is a small gap here where we assume the lower bound along the trajectory, while Proposition~\ref{prop: loss bound requirement} only accounts for the necessity of the lower bound on the iteration $T$.
We argue that this gap is mild since the training loss usually decreases. 

\begin{proposition}
\label{prop: loss bound requirement}
Given the Bayesian optimal $\underline{\sigma}^2$, for any given iteration $t$, if the training loss satisfies $\hat{\cL}(\vv_{t}) \leq \frac{1}{2} \underline{\sigma}^2$, the corresponding generalization gap satisfies $|\cL(\vv_t) - \hat{\cL}(\vv_t) | \geq \frac{1}{2} \underline{\sigma}^2$.
\end{proposition}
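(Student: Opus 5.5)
The plan is to compare the population loss at $\vv_t$ with the Bayesian optimal on the sphere. Since $\vv_t = \vw_t/\|\vw_t\|$ lies on the unit sphere and $\cL$ is evaluated there, the definition $\underline{\sigma}^2 = \min_{\vw} \cL(\vw/\|\vw\|)$ gives immediately
\begin{equation*}
\cL(\vv_t) \geq \underline{\sigma}^2 .
\end{equation*}
This holds pointwise for every realization of the training set and of the algorithm's randomness, because the minimum ranges over all directions, including the (data-dependent) direction $\vv_t$.

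Next I combine this with the hypothesis $\hat{\cL}(\vv_t) \leq \frac{1}{2}\underline{\sigma}^2$. Subtracting gives
\begin{equation*}
\cL(\vv_t) - \hat{\cL}(\vv_t) \geq \underline{\sigma}^2 - \tfrac{1}{2}\underline{\sigma}^2 = \tfrac{1}{2}\underline{\sigma}^2 \geq 0 .
\end{equation*}
The difference is therefore nonnegative, so its absolute value equals itself and $|\cL(\vv_t) - \hat{\cL}(\vv_t)| \geq \frac{1}{2}\underline{\sigma}^2$, as claimed.

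There is no real obstacle. The only point needing care is that the Bayesian-optimal lower bound applies to the random, data-dependent $\vv_t$ and not just to a fixed direction. This is exactly why $\underline{\sigma}^2$ is defined as a minimum over all directions on the sphere rather than at a fixed point, and it is also why the statement is about the population loss $\cL$ rather than the empirical loss $\hat{\cL}$. I will note in passing that non-negativity of $\ell$ is not even needed here; only the definition of $\underline{\sigma}^2$ and the hypothesis are used.
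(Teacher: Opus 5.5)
Your argument is correct and is the intended one. The paper gives no separate proof; the proposition follows at once from $\cL(\vv_t)\ge\underline{\sigma}^2$ (the Bayesian-optimal definition applied pointwise to the data-dependent direction $\vv_t$) together with the hypothesis $\hat{\cL}(\vv_t)\le\frac12\underline{\sigma}^2$, and the last step can simply use $|x|\ge x$, so the sign of $\underline{\sigma}^2$ is never needed.
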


\begin{assumption}\textnormal{\textbf{(Lipschitz, Approximately Smooth)}}
\label{assump: approximate smooth and lipshitz}
Assume that the loss function $\ell(\cdot; \X, \Y)$ is $L$-Lipschitz on the sphere for any sample $(\X, \Y)$, namely, 
\begin{equation*} 
\sup_{\vv: \| \vv\|=1}\| \nabla_{\vv} \ell(\vv; \X, \Y) \| \leq L.
\end{equation*}
Additionally, we assume that the loss function is $(\gamma,\beta)$-approximately smooth on the sphere for any sample $(\X, \Y)$, meaning that for each $\ell(\vv; \X, \Y)$, there exists a globally $\beta$-smooth function $\bar{\ell}(\vv; \X, \Y)$ such that for each $\vv$
\begin{equation*}
    \sup_{\vv: \| \vv\|=1}\| \nabla_\vv \ell(\vv; \X, \Y) - \nabla_\vv \bar{\ell}(\vv; \X, \Y) \| \leq \gamma . 
\end{equation*} 
\end{assumption}
The Assumption~\ref{assump: approximate smooth and lipshitz} introduces constraints on the loss function to ensure its Lipschitz and approximate smooth properties, which are widely used in related literature~\citep{DBLP:conf/icml/HardtRS16}. 
Note that Assumption~\ref{assump: approximate smooth and lipshitz} degenerates into the standard smoothness assumption when $\gamma = 0$.
We extend the previous smoothness assumption to approximate smoothness to accommodate the non-smooth behavior of ReLU activations.
It is worth noting that there exist smooth activation functions that maintain homogeneity, \eg, $\sigma(\cdot) = (\max\{\cdot, 0 \})^u$ with $u > 1$, which opens up possibilities where $\gamma = 0$. 

\subsection{Generalization Under Homogeneity}
\label{sec: Homogeneity Leads to Generalization}
This section provides the generalization bounds for general homogeneous neural networks. 
We defer the complete proof to Appendix~\ref{sec: proof sketch}. 
\begin{theorem}[Generalization Under Homogeneity]
\label{thm: homo leads to generalization}
Assume that the neural network $\Phi(\vw; \X)$ is $\homo$-homogeneous for any $\X$, with $\homo > 2$.
Given any fixed $T>0$, let Assumption~\ref{assump: loss bound} hold with loss bounds $\underline{\sigma}^2$ and $\bar{\sigma}^2$, and Assumption~\ref{assump: approximate smooth and lipshitz} hold with $L$-Lipschitz and $(\gamma,\beta)$-approximately smooth loss, where $\gamma = o(T/n)$.
Then, there exist infinitely many stepsizes satisfying $\bE_{\cA, \cD} \eta_t = \Omega(t^{\frac{\homo-4}{2}})$, for which the generalization bound holds:
\begin{equation*}
\bE_{\cA,\cD} \cL^s(\vw_T) - \hat{\cL}^s(\vw_T) \leq \left[L m_2\right]^{\frac{1}{m_1 + 1}}\left[1 + \frac{1}{m_1}\right] \frac{T^\frac{m_1}{m_1 + 1}}{n},
\end{equation*}
where $m_1 = \frac{4c_2}{\underline{\sigma}^2} \left[\bar{\sigma}^2 + \frac{\beta}{\homo}\right]$ and $m_2 = \frac{8c_2}{\homo \underline{\sigma}^2} \left[L + \gamma n\right]$, with constant $c_2 \geq 1$ related to the stepsize.
\end{theorem}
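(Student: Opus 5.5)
We follow the Hardt--Recht--Singer argument for non-convex SGD, but run it on the projected trajectory $\vv_t=\vw_t/\|\vw_t\|$ instead of on $\vw_t$.

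\textbf{Step 1: projected dynamics and the stepsize choice.} By Lemma~\ref{lem: L-homogeneous property}, the projected iterate moves, up to second-order terms, as $\vv_{t+1}\approx \vv_t-\tilde\eta_t\nabla_{\vv_t}\ell_t(\vv_t)$. Here the effective stepsize is $\tilde\eta_t=\eta_t\|\vw_t\|^{H-2}$. The third item of the lemma, combined with Assumption~\ref{assump: loss bound} (namely $\bE_I\ell_t(\vv_t)=\hat{\cL}(\vv_t)\ge\underline{\sigma}^2/2$ and $\ell\le\bar\sigma^2$) and the Lipschitz bound $\|\nabla_\vv\ell\|\le L$, gives
\[
\bE\|\vw_{t+1}\|^2\le\|\vw_t\|^2\bigl(1-H\tilde\eta_t\underline{\sigma}^2+\tilde\eta_t^2L^2\bigr).
\]
So the norm contracts, and the contraction is tied to $\tilde\eta_t$.

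We choose stepsizes adaptively so that the effective stepsize is $\tilde\eta_t=c/t$ with $c\le c_2/t$-type constants. The parameter $c_2$ enters exactly here. This choice yields infinitely many admissible schedules, one for each admissible $c$.

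The remaining task is to translate $\tilde\eta_t=\Theta(1/t)$ back into $\eta_t=\tilde\eta_t\|\vw_t\|^{2-H}$. Iterating the norm recursion gives $\|\vw_t\|^2\approx\prod_s(1-H\underline{\sigma}^2c/s)\approx t^{-\Theta(1)}$, and with the right constant $\|\vw_t\|^2\lesssim t^{-1}$. For $H>2$ this gives $\eta_t\gtrsim t^{-1}\cdot t^{(H-2)/2}=t^{(H-4)/2}$. Taking expectations, with a Jensen step on $\|\vw_t\|^{2-H}$, which is convex in $\|\vw_t\|^2$, yields $\bE\eta_t=\Omega(t^{(H-4)/2})$.

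\textbf{Step 2: stability on the sphere.} Couple two runs on datasets $S,S'$ that differ in a single sample, and set $\delta_t=\|\vv_t-\vv'_t\|$. Proceed as in Hardt et al.

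When the shared sample is picked (probability $1-1/n$), use approximate smoothness. Write $\nabla\ell=\nabla\bar\ell+(\nabla\ell-\nabla\bar\ell)$ to get
\[
\delta_{t+1}\le(1+\tilde\eta_t\beta')\delta_t+2\tilde\eta_t\gamma,
\]
where $\beta'$ absorbs $\beta$ together with the curvature of the sphere projection. The normalization contributes terms bounded by $\bar\sigma^2$, which is how $\bar\sigma^2+\beta/H$ enters $m_1$.

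When the differing sample is picked (probability $1/n$), $\delta_{t+1}\le\delta_t+2\tilde\eta_tL$.

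Since $\tilde\eta_t=c/t$, the accumulated product is $\prod(1+c\beta'/t)\approx (T/t_0)^{m_1}$, which is polynomial rather than exponential.

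As in Theorem 3.8 of Hardt et al., condition on the event that the differing sample is not touched before time $t_0$; this costs $t_0/n$ times $\bar\sigma^2$, normalized. Then, by $L$-Lipschitzness on the sphere,
\[
\bE|\ell(\vv_T)-\ell(\vv'_T)|\le \frac{t_0}{n}+L\,\frac{m_2}{m_1}\Bigl(\frac{T}{t_0}\Bigr)^{m_1},
\]
where $m_2/n$ collects $(L+\gamma n)\cdot c/n$. The $\gamma n$ term arises because the $\gamma$-error is incurred at every step rather than only with probability $1/n$. The hypothesis $\gamma=o(T/n)$ keeps this term from dominating.

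\textbf{Step 3: optimize $t_0$.} Minimizing over $t_0$ gives $t_0=(Lm_2)^{1/(m_1+1)}T^{m_1/(m_1+1)}$. This produces the stated factor $(Lm_2)^{1/(m_1+1)}(1+1/m_1)T^{m_1/(m_1+1)}/n$, and the uniform-stability-to-generalization bound finishes the proof.

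\textbf{Main obstacle.} The hardest step is the expansiveness bound on the sphere. The map $\vw\mapsto \vw/\|\vw\|$ couples the direction update with the norm, and $\|\vw_t\|\ne\|\vw'_t\|$ across the two runs. We would define the stepsizes through the effective stepsize of each run separately and compare the two runs directly in $\vv$-space, using Lemma~\ref{lem: L-homogeneous property} to write the normalized update exactly. A further difficulty is showing that the second-order $\tilde\eta_t^2$ terms only inflate constants, which is what makes $c_2\ge1$ appear in $m_1$ and $m_2$.
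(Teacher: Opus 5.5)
Your outline matches the paper's proof. The paper imposes a deterministic effective stepsize $\tilde\eta_t=\frac{2c_2}{H\underline{\sigma}^2(t+2)}$ on each run separately, so $\eta_t$ itself is random. It gets norm decay $\bE\|\vw_t\|^2\lesssim 1/t$ from the lower loss bound, then applies Jensen to $x\mapsto x^{-(H-2)/2}$ to obtain $\bE\eta_t=\Omega(t^{(H-4)/2})$. Finally it runs the Hardt--Recht--Singer recursion on the $\vv$-trajectory, conditions on the first $t_0$ steps, and optimizes over $t_0$.

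The gap is the step you flag as the main obstacle, which is exactly where the stability argument departs from Hardt et al. You correctly anticipate that normalization contributes a $\bar{\sigma}^2$-dependent factor. However, your recurrence $\delta_{t+1}\le(1+\tilde\eta_t\beta')\delta_t+2\tilde\eta_t\gamma$ is only asserted. The difficulties you name (``curvature of the sphere projection'', second-order $\tilde\eta_t^2$ terms) are also not where the work lies.

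The missing idea is that the projected update is exact and the cost of projecting is purely radial. By the update rule, $\vw_{t+1}=\|\vw_t\|\,\bar\vv_{t+1}$ with $\bar\vv_{t+1}=\vv_t-\tilde\eta_t\nabla_{\vv}\ell_t(\vv_t)$. Hence $\vv_{t+1}=\bar\vv_{t+1}/\|\bar\vv_{t+1}\|$ and $\|\bar\vv_{t+1}\|=\|\vw_{t+1}\|/\|\vw_t\|$, with no approximation. Run the two usual cases on $\bar\vv$, which give $(1+\tilde\eta_t\beta)\delta_t+2\tilde\eta_t\gamma$ and $\delta_t+2\tilde\eta_t L$ respectively. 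Then use that for $\|b\|\le\|a\|$,
\[
\Big\|\frac{a}{\|a\|}-\frac{b}{\|b\|}\Big\|=\frac{1}{\|b\|}\Big\|\frac{\|b\|}{\|a\|}a-b\Big\|\le\frac{1}{\|b\|}\|a-b\|,
\]
since rescaling $a$ onto the sphere of radius $\|b\|$ cannot move it farther from $b$. This gives $\delta_{t+1}\le\max\{\|\vw_t\|/\|\vw_{t+1}\|,\ \|\vw'_t\|/\|\vw'_{t+1}\|\}\,\|\bar\vv_{t+1}-\bar\vv'_{t+1}\|$.

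Item 3 of Lemma~\ref{lem: L-homogeneous property} gives $\|\vw_{t+1}\|^2/\|\vw_t\|^2\ge 1-2H\tilde\eta_t\ell_t(\vv_t)\ge 1-2H\bar{\sigma}^2\tilde\eta_t$. Here the nonnegative term $\tilde\eta_t^2\|\nabla_{\vv}\ell_t(\vv_t)\|^2$ is simply dropped. So each norm ratio is at most $(1-u)^{-1/2}\le 1+u$ with $u=\frac{4c_2\bar{\sigma}^2}{\underline{\sigma}^2(t+2)}$. This holds once $u\le 1/2$, i.e.\ for $t\gtrsim 8c_2\bar{\sigma}^2/\underline{\sigma}^2$, so take $t_0$ beyond that. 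This factor is the $\bar{\sigma}^2$-part of $m_1$, and the $\beta/H$-part comes from $\tilde\eta_t\beta$. Note that the upper loss bound is used here, while the lower bound is used only for the norm decay.

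Relatedly, $c_2\ge 1$ is not an artifact of second-order terms. It is what makes $\bE\|\vw_t\|^2\lesssim 1/t$ in your Step 1, which the Jensen step needs. It enters $m_1,m_2$ only because $\tilde\eta_t\propto c_2$. The $\tilde\eta_t^2L^2$ term has to be controlled only in that norm recursion, via $\tilde\eta_t\le H\underline{\sigma}^2/(2L^2)$.
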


\textbf{Remark: Validity for Smaller Stepsizes.} 
While Theorem~\ref{thm: homo leads to generalization} specifies stepsizes satisfying $\bE \eta_t=\Omega(t^{\frac{\homo-4}{2}})$ to achieve $\Theta(1/t)$ effective stepsizes, the generalization result remains valid for smaller stepsizes, such as $\Theta(t^{\frac{\homo-4}{2}})$.  
This is because a smaller stepsize induces a smaller effective stepsize, which naturally ensures generalization, even if it may lead to slower optimization. 

\textbf{Remark: Extensions of Loss Functions.}
The core derivation can be extended to other loss functions beyond $\ell_t\left(z\right)=\max\{-\Y_t z, 0\}$. 
For a general loss form $\ell_t(\Phi(\vw_t))$ with a homogeneous neural network $\Phi(\vw_t)$, the effective stepsize $\tilde{\eta}_t$ is derived as 
\begin{equation*}
    \tilde{\eta}_t = \rho_t \|\vw_t\|^{H-2}\eta_t,
\end{equation*}
where $\rho_t \triangleq \ell'_t(\Phi(\vw_t)) /{\ell'_t(\Phi(\vv_t))}$. 
Notably, the effective stepsize in Lemma~\ref{lem: L-homogeneous property} is a specific case with $\rho_t=1$. 
For the hinge loss $\ell_t(z) = \max\{0, 1 - \Y_t z\}$, we assume that the zero-loss sets $\{ \vw_t: \ell(\Phi(\vw_t))=0\}$ and $\{\vv_t: \ell(\Phi(\vv_t))=0\}$ approximately overlap. 
This implies the ratio $\rho_t = \ell'_t(\Phi(\vw_t)) / \ell'_t(\Phi(\vv_t))  = 1$ and the property is preserved whenever $\ell_t(\Phi(\vw_t)) \neq 0$ and $\ell_t(\Phi(\vv_t)) \neq 0$. 
We next provide in Corollary~\ref{cor: extension of loss} a relaxed version of Theorem~\ref{thm: homo leads to generalization} regarding the loss form.

\begin{corollary}
\label{cor: extension of loss}
Under the assumptions of Theorem~\ref{thm: homo leads to generalization}, if the loss function $\ell(\cdot)$ satisfies $0 < \rho_t=\ell_t'(\Phi(\vw_t)) / \ell_t'(\Phi(\vv_t)) \allowbreak \le k_1$ with constant $k_1 \ge 1$, and $z \ell'_t(z) \le k_2 \ell_t(z)$ with constant $k_2 > 0$ uniformly for all $t \in [T]$, the generalization bound in Theorem~\ref{thm: homo leads to generalization} remains valid with modified constants: 
\begin{equation*}
    m_1 = \frac{4 c_2k_1}{\underline{\sigma}^2} \left[ k_1 k_2\bar{\sigma}^2 + \frac{\beta}{H} \right], \quad m_2 = \frac{8 c_2k_1}{H \underline{\sigma}^2} (L + \gamma n).
\end{equation*}
Notably, these conditions can be satisfied by many homogeneous loss functions, including $\ell_t(z)=(\max\{-\Y_t z, 0\})^u$ with $u \geq 1$. 
\end{corollary}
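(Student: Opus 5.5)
The plan is to rerun the proof of Theorem~\ref{thm: homo leads to generalization} with the generalized effective stepsize and track how the two constants $k_1$ and $k_2$ enter. There are three steps: generalize Lemma~\ref{lem: L-homogeneous property}, use it to control the norm and the effective stepsizes, and then feed those into the stability argument.

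\textbf{Step 1 (generalized lemma).} For a loss of the form $\ell_t(\Phi(\vw))$, the chain rule together with homogeneity of $\Phi$ gives
\[
\nabla_{\vw}\ell_t(\Phi(\vw_t)) = \ell_t'(\Phi(\vw_t))\,\|\vw_t\|^{H-1}\nabla_\vw\Phi(\vv_t).
\]
Here $\nabla\Phi$ is $(H-1)$-homogeneous, so $\nabla_\vw\Phi(\vw_t)=\|\vw_t\|^{H-1}\nabla_\vw\Phi(\vv_t)$. Factoring out $\ell_t'(\Phi(\vv_t))$ yields
\[
\nabla_{\vw}\ell_t(\vw_t)=\rho_t\|\vw_t\|^{H-1}\nabla_\vv \ell_t(\vv_t).
\]
Projecting the SGD step onto the sphere therefore gives the effective stepsize $\tilde\eta_t=\rho_t\|\vw_t\|^{H-2}\eta_t$.

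Next comes the norm recursion. Euler's identity gives $\vv_t^\top\nabla\Phi(\vv_t)=H\Phi(\vv_t)$. The condition $z\ell_t'(z)\le k_2\ell_t(z)$ then replaces the identity $\vv^\top\nabla\ell=H\ell$ by
\[
\vv_t^\top\nabla_\vv\ell_t(\vv_t)=H\,\Phi(\vv_t)\ell_t'(\Phi(\vv_t))\le Hk_2\ell_t(\vv_t).
\]
This gives the two-sided relation
\[
\|\vw_{t+1}\|^2=\|\vw_t\|^2\bigl(1+\tilde\eta_t^2\|\nabla_\vv\ell_t(\vv_t)\|^2-2\tilde\eta_t\,\vv_t^\top\nabla_\vv\ell_t(\vv_t)\bigr).
\]

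\textbf{Step 2 (norm control and stepsize choice).} I would reuse the original argument that picks $\eta_t$ so that $\tilde\eta_t\approx c/t$. Wherever the original proof bounds the growth or decay of $\|\vw_t\|$ through $\ell_t(\vv_t)\le\bar\sigma^2$, the term $H\ell_t$ is now replaced by at most $Hk_2\ell_t$. The factor $\rho_t\le k_1$ inflates $\tilde\eta_t$, and it enters twice: once from the effective stepsize itself and once more when relating the realized norm decrease to the chosen $\eta_t$. These two appearances produce the $k_1k_2\bar\sigma^2$ term inside the bracket. Because $\rho_t>0$, the update direction keeps its sign, so the lower bound $\hat\cL(\vv_t)\ge\underline\sigma^2/2$ still drives the norm shrinkage needed to keep $\bE\eta_t=\Omega(t^{(H-4)/2})$.

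\textbf{Step 3 (stability and the main obstacle).} On the sphere, the Hardt et al.\ style argument proceeds as before. With approximate smoothness, each non-differing step is $(1+\tilde\eta_t\beta)$-expansive up to an additive $2\gamma\tilde\eta_t$ error. The differing step, chosen with probability $1/n$, adds $2L\tilde\eta_t$. Since $\tilde\eta_t\le k_1c_2/(\cdots t)$, the accumulation exponent acquires an extra factor $k_1$, which gives the overall prefactor $k_1$ in both $m_1$ and $m_2$. Optimizing the burn-in time $t_0$ exactly as in the theorem then returns the same functional form of the bound.

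The main obstacle is Step 2. In the original proof, $\rho_t=1$ and the exact identity of Lemma~\ref{lem: L-homogeneous property} pins down $\tilde\eta_t$. Now only the inequalities $0<\rho_t\le k_1$ and $z\ell'\le k_2\ell$ are available. I would first try to show that replacing every equality by the corresponding one-sided bound still preserves the direction of each inequality the original proof uses, checking in particular where the lower bound on norm decay is needed.
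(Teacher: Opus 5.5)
Your plan is the paper's proof. You keep the stepsize class of Theorem~\ref{thm: homo leads to generalization} and use $0<\rho_t\le k_1$ to get $\tilde\eta_t\le \frac{2k_1c_2}{\homo\underline{\sigma}^2(t+1)}$. The paper does this with the contradiction-plus-Jensen argument from the second statement of Lemma~\ref{lem: (effective) stepsize}, now using $1/\rho_t\ge 1/k_1$. You then bound $\|\vw_t\|/\|\vw_{t+1}\|$ via $z\ell_t'(z)\le k_2\ell_t(z)$ and rerun Lemma~\ref{lem: generalization error initial} with $c_2\mapsto k_1c_2$ and $\bar{\sigma}^2\mapsto k_1k_2\bar{\sigma}^2$.

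One bookkeeping point: your ``enters twice'' account of the inner $k_1k_2$ does not match your own Step~1. With $\tilde\eta_t=\rho_t\|\vw_t\|^{\homo-2}\eta_t$, the factor $\rho_t$ is already absorbed, so the recursion gives $\|\vw_{t+1}\|^2\ge\|\vw_t\|^2(1-2\homo k_2\bar{\sigma}^2\tilde\eta_t)$. The paper's extra $k_1$ comes from bounding $\rho_t\le k_1$ inside $\vw_t^\top\nabla_{\vw_t}\ell_t(\vw_t)$ and then substituting $\eta_t=\tilde\eta_t/\|\vw_t\|^{\homo-2}$ without $\rho_t$. Since $k_1\ge 1$ this is harmless: you get a smaller $m_1$, the stability recursion with the stated larger $m_1$ still holds, and the corollary follows.

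There are two cautions on Step~2.

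First, your reason for norm decay is not valid, and your closing check is exactly where it fails. The fact that $\rho_t>0$ says nothing about $\vv_t^\top\nabla_{\vv}\ell_t(\vv_t)=\homo\,\Phi(\vv_t)\ell_t'(\Phi(\vv_t))$, and $z\ell_t'(z)\le k_2\ell_t(z)$ is only an upper bound. For the hinge loss $\max\{0,1-\Y_tz\}$, one has $z\ell_t'(z)=-\Y_tz<0<\ell_t(z)$ when $0<\Y_tz<1$, so the norm can grow. The one-sided bound points the right way for the norm ratio, i.e.\ the stability side. But the decay $\bE\|\vw_t\|^2=\cO(1/t)$ behind $\bE\eta_t=\Omega(t^{(\homo-4)/2})$ needs a reverse bound $z\ell_t'(z)\ge k_3\ell_t(z)$ with $k_3>0$. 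The paper's proof relies on this decay as well: its contradiction step uses $\bE\|\vw_{t+1}\|^2<1/(t+2)$ without re-deriving it. So this is a shared soft spot rather than a departure from the paper. It is harmless for $(\max\{-\Y_tz,0\})^u$, where $z\ell_t'(z)=u\,\ell_t(z)$.

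Second, keep $\tilde\eta_t$ deterministic, as the paper does, so both coupled runs share it. If you reuse the original $\eta_t$ and let a trajectory-dependent $\rho_t$ inflate $\tilde\eta_t$, the same-sample step picks up an extra $|\tilde\eta_t-\tilde\eta_t'|\,L$ term that is not damped by $1/n$. Then the Step~3 recursion does not close as written.
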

\textbf{Comparison with $T/n$-type Bound.}
Under reasonable assumptions, one can construct $T/n$-type generalization bounds via algorithmic stability.
The intuition is that for small iteration $T$, the difference between $S$ and $S^\prime$ is selected with probability $T/n$~\citep{DBLP:conf/icml/HardtRS16}.
However, this type of bound remains valid only when $T = o(n)$, indicating that not all samples are utilized during training. 
Clearly, this does not accurately reflect real-world scenarios.
As for the results in Theorem~\ref{thm: homo leads to generalization}, when $L$, $\homo$, $\beta$, $\bar{\sigma}^2$, and $\underline{\sigma}^2$ are all on a constant scale, the bound is approximately of order $\tilde{\gamma}^{\frac{1}{m_1 + 1}} [T/n]^\frac{m_1}{m_1 + 1}$, where $\tilde{\gamma} = \max\{\gamma, 1/n\}$.
Notably, this new bound remains consistent when setting $T = o(n/\tilde{\gamma}^{1/m_1})$, allowing for multiple uses of each sample during the training process. 

\subsection{Generalization Under Three-Homogeneity}
\label{sec: three homo leads to gen}
This section specializes the general bound in Theorem~\ref{thm: homo leads to generalization} to three-homogeneous neural networks (Corollary~\ref{cor: Three-Homogeneous Leads to Generalization}). 
We focus on this specific case for two main reasons: (a) setting $H=3$ could yield the stepsize $\bE_{\cA, \cD} \eta_t = \Omega(1/\sqrt{t})$, a rate that commonly appears in theoretical analysis \citep{DBLP:journals/siamjo/NemirovskiJLS09}; 
and (b) three-homogeneous neural networks themselves are  widely adopted in practice, such as VGG19~\citep{DBLP:journals/corr/SimonyanZ14a}.
This alignment is somewhat surprising since the above two reasons are independent. 
We leave further discussion on this alignment for future work.
\begin{figure*}
\vspace{1.5em}
    \subfloat[Normalization]{\includegraphics[width=0.5\textwidth]{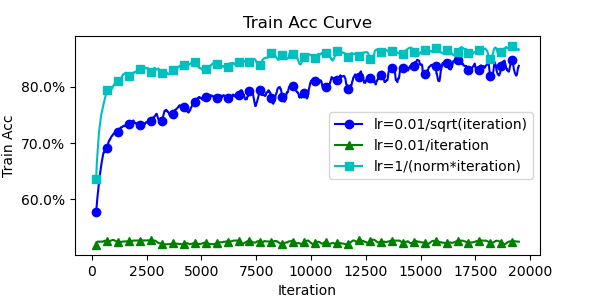}}
    \subfloat[Loss curve illustration]{\includegraphics[width=0.45\textwidth]{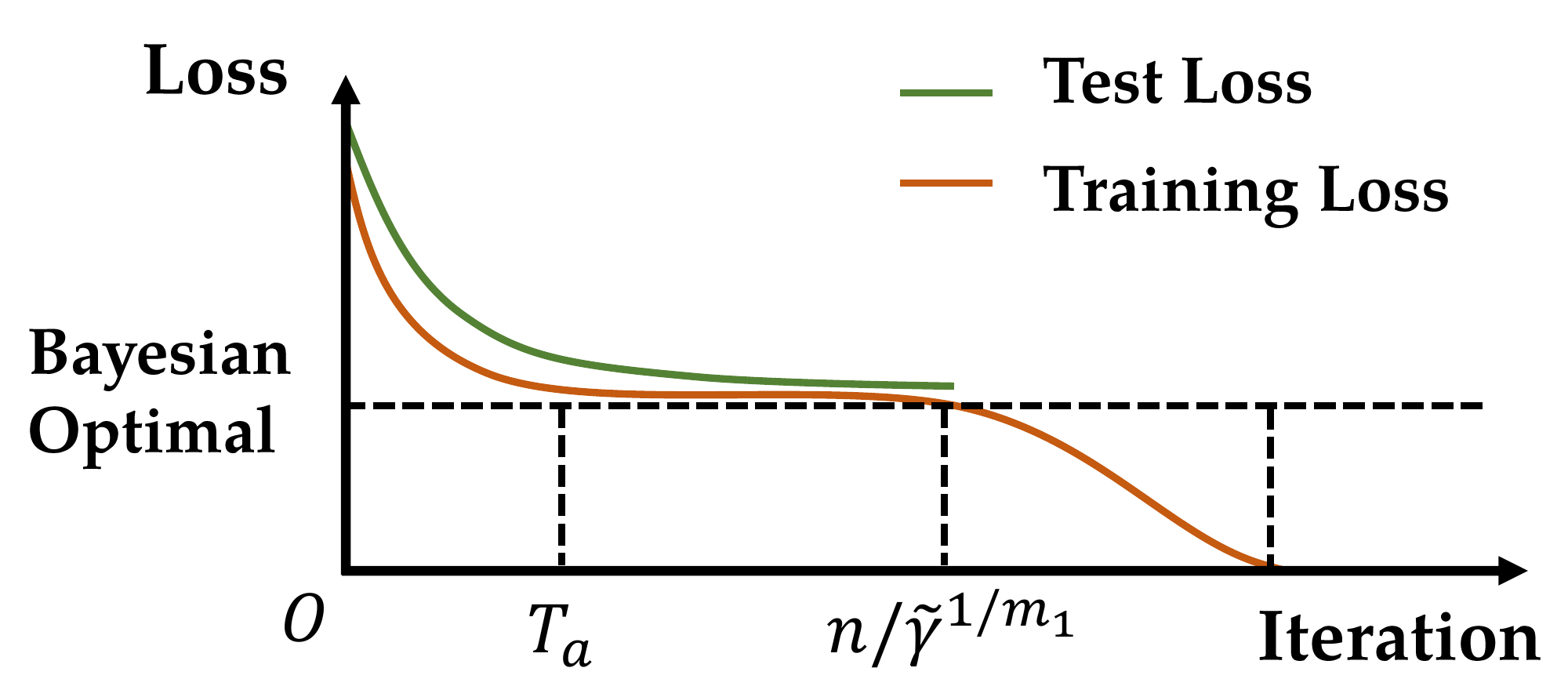}}
    \caption{(left) Training accuracy of SGD with different schedulers over a three-layer ReLU network. The initial learning rate is selected via grid search over $\{10^{-5}, 10^{-4},\cdots, 10^1\}$. SGD with stepsize $\Theta(1/t\Vert  \vw_t \Vert )$ achieves similar training accuracy as SGD with stepsize $\Theta(1/\sqrt{t})$, both outperforming SGD with stepsize $\Theta(1/t)$; (right) When $T_a = o(n/\tilde{\gamma}^{1/m_1})$, the training and test loss curve does not decrease at $t \in (T_a, o(n/\tilde{\gamma}^{1/m_1}))$.}
    \label{fig: loss curve}
    \vspace{0.5em}
\end{figure*}
\begin{corollary}
\label{cor: Three-Homogeneous Leads to Generalization}
Consider a three-homogeneous neural network as constructed in Proposition~\ref{prop: homo-property} with $\homo=3$. 
Let $T_a$ denote the iteration at which the training error achieves a non-zero Bayesian optimal value\footnote{Here we allow for an $o(1)$ relaxation.}, namely,
\begin{equation*}
    T_a = \argmin_t \left\{ \bE_{\cA, \cD} \hat{\cL}(\vv_t) \leq \underline{\sigma}^2 \right\},
\end{equation*}
where we omit the dependency on stepsize and dataset for simplicity. 
Let Assumption~\ref{assump: loss bound} and Assumption~\ref{assump: approximate smooth and lipshitz} hold with $(\gamma,\beta)$-approximately smooth loss, where $\gamma = o(T_a / n)$.
Then, there exist infinitely many stepsizes satisfying $\bE_{\cA, \cD} \eta_t = \Omega(t^{-1/2})$, such that if $T_a = o(n/\tilde{\gamma}^{1/m_1})$, where $\tilde{\gamma} = \max\{\gamma, 1/n\}$  and $m_1$ denote a constant related to loss bound and smoothness, 
it holds that
\begin{equation*}
\lim_{n \to \infty} \bE_{\cA, \cD} \cL^s(\vv_{T_a}) = \underline{\sigma}^2.
\end{equation*}
\end{corollary}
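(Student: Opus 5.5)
The plan is to sandwich $\bE_{\cA,\cD}\cL^s(\vv_{T_a})$ between $\underline{\sigma}^2$ from below and $\underline{\sigma}^2 + o(1)$ from above. The lower bound is immediate: by definition $\underline{\sigma}^2 = \min_{\vw}\cL(\vw/\|\vw\|)$, so $\cL^s(\vv_{T_a}) \ge \underline{\sigma}^2$ pointwise, and hence also in expectation. For the upper bound we decompose
\begin{equation*}
\bE_{\cA,\cD}\cL^s(\vv_{T_a}) = \bE_{\cA,\cD}\hat{\cL}^s(\vv_{T_a}) + \bE_{\cA,\cD}\bigl[\cL^s(\vv_{T_a}) - \hat{\cL}^s(\vv_{T_a})\bigr].
\end{equation*}
By the definition of $T_a$ (allowing the $o(1)$ relaxation from the footnote), the first term is at most $\underline{\sigma}^2 + o(1)$. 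The work is therefore to show that the generalization gap at $T_a$ vanishes as $n\to\infty$.

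For the gap we apply Theorem~\ref{thm: homo leads to generalization} with $H=3$ and $T=T_a$. Proposition~\ref{prop: homo-property} guarantees that the network is $3$-homogeneous, and $H=3>2$ as required. The theorem's stepsize condition $\bE\eta_t=\Omega(t^{(H-4)/2})$ becomes $\Omega(t^{-1/2})$, which supplies the infinitely many admissible stepsizes. Assumptions~\ref{assump: loss bound} and~\ref{assump: approximate smooth and lipshitz} are assumed in the corollary, and the hypothesis $\gamma=o(T_a/n)$ matches the theorem's requirement $\gamma=o(T/n)$.

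One subtlety is that $T_a$ depends on the stepsize and on $\cD$. I would treat $T_a$ as a deterministic number: it is defined through expectations $\bE_{\cA,\cD}$, so it depends only on the stepsize schedule and the distribution. We first fix the schedule, then fix $T=T_a$, and invoke the theorem for that fixed $T$, which is permitted since the theorem holds for any fixed $T>0$. The theorem then gives
\begin{equation*}
\bE_{\cA,\cD}\bigl[\cL^s(\vw_{T_a}) - \hat{\cL}^s(\vw_{T_a})\bigr] \le [Lm_2]^{\frac{1}{m_1+1}}\Bigl[1+\frac{1}{m_1}\Bigr]\frac{T_a^{\frac{m_1}{m_1+1}}}{n}.
\end{equation*}

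The main obstacle is turning this into $o(1)$ while tracking $m_2=\frac{8c_2}{3\underline{\sigma}^2}(L+\gamma n)$, which grows with $n$ through $\gamma n$. Since $L+\gamma n \lesssim n\max\{\gamma,1/n\} = n\tilde{\gamma}$, we get $m_2 \lesssim n\tilde{\gamma}$, with all other quantities treated as constants. The bound is then of order
\begin{equation*}
(n\tilde{\gamma})^{\frac{1}{m_1+1}}\,T_a^{\frac{m_1}{m_1+1}}\,n^{-1} = \tilde{\gamma}^{\frac{1}{m_1+1}}\Bigl(\frac{T_a}{n}\Bigr)^{\frac{m_1}{m_1+1}} = \Bigl(\frac{\tilde{\gamma}^{1/m_1}T_a}{n}\Bigr)^{\frac{m_1}{m_1+1}},
\end{equation*}
which tends to $0$ precisely when $T_a=o(n/\tilde{\gamma}^{1/m_1})$.

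Combining this with the first term yields $\limsup_{n\to\infty}\bE\cL^s(\vv_{T_a})\le\underline{\sigma}^2$, and together with the lower bound the limit equals $\underline{\sigma}^2$. Throughout I use $\cL^s(\vw_t)=\cL(\vv_t)$ to identify the quantities evaluated at $\vw$ and at $\vv$.
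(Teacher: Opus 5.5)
Your proposal is correct and takes the paper's route. The paper's proof simply invokes Theorem~\ref{thm: homo leads to generalization} with $H=3$, together with its remark that the bound is of order $\tilde{\gamma}^{\frac{1}{m_1+1}}(T/n)^{\frac{m_1}{m_1+1}}$ and hence vanishes when $T=o(n/\tilde{\gamma}^{1/m_1})$. Your explicit sandwich argument (a pointwise lower bound from the definition of $\underline{\sigma}^2$, and an upper bound from the definition of $T_a$ plus the vanishing gap) just fills in the step the paper leaves implicit.
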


Corollary~\ref{cor: Three-Homogeneous Leads to Generalization} demonstrates that the generalization gap would be small within $o(n/\tilde{\gamma}^{1/m_1})$ iterations.
Therefore, if the training loss achieves the Bayesian optimal within $o(n/\tilde{\gamma}^{1/m_1})$ iterations, the corresponding test loss would also achieve the Bayesian optimal value. 
Additionally, $o(n/\tilde{\gamma}^{1/m_1})$ enables multiple uses of each sample during the training given $\gamma = o(1)$, which is rarely achieved in previous works with a similar order of stepsize~\citep{lei2021learning,lei2023stability}. 

\textbf{The Role of Normalization.}
Doubts might arise regarding whether the results in Corollary~\ref{cor: Three-Homogeneous Leads to Generalization} suggest that a three-layer MLP suffices in real-world scenarios, potentially rendering deeper architectures obsolete.
To address this, we emphasize the pivotal role that normalization layers play. 
The significance lies in two key aspects. 
Firstly, they introduce non-linearity, effectively reducing approximation errors. 
Secondly, increasing the depth of a network might facilitate the optimization process~\citep{DBLP:conf/icml/AroraCH18}, leading to faster attainment of Bayesian optimality in training error (smaller $T_a$).

\textbf{Loss Curve.}
One can observe from Theorem~\ref{thm: homo leads to generalization} that the generalization gap remains within $o(1)$ for cases where $T = o(n/\tilde{\gamma}^{1/m_1})$. This observation implies that if $T_a$ is relatively small, indicating a fast convergence of the training loss to its Bayesian optimal, it cannot sustain this decrease until the point where $T$ reaches $\Theta(n/\tilde{\gamma}^{1/m_1})$. Otherwise, such a trend would result in a test error smaller than the Bayesian optimal, which is impossible. 
{We refer to Figure~\ref{fig: loss curve} (right) for an illustration.}
This phenomenon is empirically observed in \citet{DBLP:conf/iclr/WenTZ23}.

\subsection{Additional Discussions on Optimization Performance}
\label{sec: optimization}
This section demonstrates a separation in optimization performance between the stepsize in this paper and those in prior works. 
Following the setting $\homo=3$ in Section~\ref{sec: three homo leads to gen}, we prove that a stepsize of order $\bE_{\cA, \cD} \eta_t=\Omega(1/\sqrt{t})$ potentially optimizes better compared to $\bE_{\cA, \cD} \eta_t=\cO(1/t)$. 
For convenience of discussion, we consider the class of stepsizes that could eliminate the randomness of the corresponding effective stepsize. 
The result is provided in Theorem~\ref{thm: separation}. 

\begin{theorem}\textnormal{\textbf{(Separation Between Stepsize $\Omega(1/\sqrt{t})$ and $\cO(1/t)$)}}
\label{thm: separation}
Under the settings of Theorem~\ref{thm: homo leads to generalization} with $H=3$, assume that the individual loss function $\ell_i(\cdot)=\ell(\cdot; \X_i, \Y_i)$ on the sphere ($\| \vv\| = 1$) satisfies

1. Smoothness: $\| \nabla_{\vv_1} \ell_i(\vv_1) - \nabla_{\vv_2} \ell_i(\vv_2) \| \leq \beta \| \vv_1 - \vv_2 \|$; 

2. PL condition: $\frac{1}{2}\| \nabla_{\vv} \ell_i(\vv)\|^2 \geq \mu (\ell_i(\vv) - \min_\vv \ell_i(\vv))$; 

3. Strong growth: $\max_{i \in [n]} \{ \|  \nabla_{\vv} \ell_i(\vv)\| \} \leq B \| \nabla_\vv \hat{\cL}(\vv) \|$;

4. $\bE_I \hat{\cL}(\vv_t)  \ge (1+\alpha)/\alpha \hat{\cL}(\vv^*)$,

then for SGD with $\bE_{\cA} \| \vw_0 \|^2 = \cO(1)$ and $\tilde{\eta}_t \leq \min\{\frac{1}{\beta B^2}, \allowbreak \frac{\homo \underline{\sigma}^2}{2 L^2}\}$, if $\mu > 4B^2 \homo^2\bar{\sigma}^2(1+\alpha)$, it holds that for some given constant $c>0$, 
\begin{itemize}[leftmargin=*]
    \item For learning rates satisfying $\bE_{\cA, \cD} \eta_t = \Omega(1/\sqrt{t})$ used in Theorem~\ref{thm: homo leads to generalization}, the convergence rate would be 
\begin{equation*}
    \bE_{\cA, \cD} \hat{\cL}(\vv_{T}) - \min_\vv \hat{\cL}(\vv) \leq \frac{\bE_{\cA, \cD} (\hat{\cL} (\vv_0)-\min_\vv \hat{\cL}(\vv) )}{T^{c\lambda}},
\end{equation*}
\item There exist infinitely many learning rates satisfying $\bE_{\cA, \cD} \eta_t = \cO(1/t)$, leading to the convergence rate 
\begin{equation*}
    \bE_{\cA, \cD} \hat{\cL}(\vv_{T}) - \min_\vv \hat{\cL}(\vv) \leq \frac{\bE_{\cA, \cD} (\hat{\cL} (\vv_0)- \min_\vv \hat{\cL}(\vv)  )}{[\log T]^{c\lambda}}.
\end{equation*}
\end{itemize}
where $\lambda = \frac{\mu}{B^2} -4\homo^2\bar{\sigma}^2(1+\alpha)$.
\end{theorem}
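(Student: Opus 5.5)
We work entirely with the projected dynamics on the sphere and treat SGD on $\vv_t$ as SGD with the effective stepsize $\tilde{\eta}_t = \eta_t\|\vw_t\|^{\homo-2} = \eta_t\|\vw_t\|$ from Lemma~\ref{lem: L-homogeneous property}. The update $\vw_{t+1} = \vw_t - \eta_t\nabla\ell_t(\vw_t)$, normalized, gives $\vv_{t+1} = (\vv_t - \tilde{\eta}_t \nabla_{\vv_t}\ell_t(\vv_t))/\sqrt{1+\tilde{\eta}_t^2\|\nabla_{\vv_t}\ell_t(\vv_t)\|^2 - 2\homo\tilde{\eta}_t \ell_t(\vv_t)}$ by item 3 of Lemma~\ref{lem: L-homogeneous property}. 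The plan is in three steps.

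\emph{Step 1 (one-step descent).} Apply $\beta$-smoothness of $\hat{\cL}$ on the sphere to the normalized step. The normalization factor is $1+\cO(\tilde{\eta}_t)$. Its first-order effect is a radial term proportional to $\homo\tilde{\eta}_t\ell_t(\vv_t)\vv_t$, which we control with Assumption~\ref{assump: loss bound} ($\ell_t\le\bar{\sigma}^2$) and strong growth. Taking $\bE_I$ yields
\[
\bE_I\hat{\cL}(\vv_{t+1}) \le \hat{\cL}(\vv_t) - \tilde{\eta}_t\Bigl(1-\tfrac{\beta B^2\tilde{\eta}_t}{2}\Bigr)\|\nabla\hat{\cL}(\vv_t)\|^2 + c'\,\homo^2\bar{\sigma}^2\tilde{\eta}_t\,\hat{\cL}(\vv_t).
\]
The condition $\tilde{\eta}_t\le 1/(\beta B^2)$ absorbs the quadratic term. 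The condition $\tilde{\eta}_t\le \homo\underline{\sigma}^2/(2L^2)$, together with $\hat{\cL}(\vv_t)\ge\underline{\sigma}^2/2$, keeps the norm factor in Lemma~\ref{lem: L-homogeneous property} in $(0,1)$, so the normalization is well defined and $\|\vw_t\|$ is monotone.

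\emph{Step 2 (contraction).} Use PL for each $\ell_i$ and strong growth to lower bound the gradient term: $\|\nabla\hat{\cL}\|^2\ge \frac{1}{B^2}\max_i\|\nabla\ell_i\|^2\ge \frac{2\mu}{B^2}(\hat{\cL}-\frac1n\sum_i\min\ell_i)\ge\frac{2\mu}{B^2}(\hat{\cL}-\hat{\cL}(\vv^*))$. Then use condition 4, namely $\hat{\cL}(\vv_t)\le(1+\alpha)(\hat{\cL}(\vv_t)-\hat{\cL}(\vv^*))$, to convert the bad term $\homo^2\bar{\sigma}^2\tilde{\eta}_t\hat{\cL}(\vv_t)$ into a multiple of the suboptimality gap. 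This gives
\[
\bE\,\Delta_{t+1}\le (1-c\lambda\tilde{\eta}_t)\,\bE\,\Delta_t,\qquad \lambda=\tfrac{\mu}{B^2}-4\homo^2\bar{\sigma}^2(1+\alpha)>0,
\]
where $\Delta_t=\hat{\cL}(\vv_t)-\min\hat{\cL}$. Here positivity uses $\mu>4B^2\homo^2\bar{\sigma}^2(1+\alpha)$. Taking stepsizes for which $\tilde{\eta}_t$ is deterministic (the class singled out before the theorem), we get $\bE\Delta_T\le \bE\Delta_0\exp(-c\lambda\sum_{t<T}\tilde{\eta}_t)$.

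\emph{Step 3 (separation).} For the stepsizes of Theorem~\ref{thm: homo leads to generalization} with $\homo=3$, the construction there makes $\tilde{\eta}_t=c_2'/t$ with $\eta_t=\tilde{\eta}_t/\|\vw_t\|$. Item 3 of Lemma~\ref{lem: L-homogeneous property} with $\ell_t\le\bar{\sigma}^2$ gives $\|\vw_t\|^2\gtrsim\prod(1-2\homo\bar{\sigma}^2\tilde{\eta}_s)\sim t^{-\Theta(1)}$. The lower bound $\ell\ge\underline{\sigma}^2/2$ in expectation gives $\|\vw_t\|^2\lesssim \exp(-\homo\underline{\sigma}^2\sum\tilde{\eta}_s)$, so $\|\vw_t\|\lesssim t^{-1/2}$ for a suitable constant. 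Hence $\bE\eta_t=\Omega(1/\sqrt{t})$ and $\sum\tilde{\eta}_t\approx c\log T$, giving the $T^{-c\lambda}$ rate after rescaling $c$. For the $\cO(1/t)$ regime we exhibit the family $\eta_t=a/(t\log t)$ (for $t\ge2$, and any $a>0$), or more precisely choose $\tilde{\eta}_t=a/(t\log t)$. Because $\|\vw_t\|$ decreases (and is bounded by $\|\vw_0\|$), this gives $\bE\eta_t=\cO(1/t)$ and $\sum\tilde{\eta}_t\approx a\log\log T$, yielding the $[\log T]^{-c\lambda}$ bound.

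The main obstacle is Step 1: handling the normalization on the sphere precisely enough that the only extra error term is $\cO(\homo^2\bar{\sigma}^2\tilde{\eta}_t\hat{\cL})$, without any $\tilde{\eta}_t$-independent drift. I would first expand $1/\sqrt{1-x}\le 1+x$ for $x\le 1/2$ (valid under the stepsize cap), and then bound the cross term between the gradient and the radial component. The gradient on the sphere is tangential, so the inner product vanishes, leaving only second-order terms.
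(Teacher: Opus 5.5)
\textbf{Gap in Step~3 (second bullet).} Your overall architecture matches the paper's (Lemma~\ref{lem: optimization} together with Lemma~\ref{lem: (effective) stepsize} and Lemma~\ref{lem: effective stepsize for 1/t}): a one-step contraction $\bE\Delta_{t+1}\le(1-c\lambda\tilde\eta_t)\bE\Delta_t$ from PL, strong growth and condition~4, a deterministic $\tilde\eta_t$, and $\sum_t\tilde\eta_t\asymp\log T$ versus $\log\log T$. The second bullet of Step~3, however, has a genuine gap.

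You argue that since $\|\vw_t\|$ decreases and is bounded by $\|\vw_0\|$, the choice $\tilde\eta_t=a/(t\log t)$ gives $\bE\eta_t=\cO(1/t)$ for any $a>0$. With $\homo=3$, $\eta_t=\tilde\eta_t/\|\vw_t\|$. So an upper bound on $\|\vw_t\|$ only gives a \emph{lower} bound on $\eta_t$; the shrinking norm is exactly what inflates $\eta_t$. The claim is in fact false for large $a$. Lemma~\ref{lem: weight norm iteration} gives $\bE\|\vw_t\|^2\lesssim(\log t)^{-\homo\underline{\sigma}^2a/2}$, and Jensen then gives $\bE\eta_t\ge\tilde\eta_t/(\bE\|\vw_t\|^2)^{1/2}\gtrsim(\log t)^{\homo\underline{\sigma}^2a/4-1}/t$. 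This is $\omega(1/t)$ once $a>4/(\homo\underline{\sigma}^2)$.

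What you need is a lower bound on the norm. Item~3 of Lemma~\ref{lem: L-homogeneous property} with $\ell_t\le\bar\sigma^2$ gives, pathwise, $\|\vw_{t+1}\|^2\ge(1-2\homo\bar\sigma^2\tilde\eta_t)\|\vw_t\|^2$. Hence $\|\vw_t\|\gtrsim\|\vw_0\|(\log t)^{-\homo\bar\sigma^2a}$ and $\eta_t\lesssim(\log t)^{\homo\bar\sigma^2a-1}/(t\|\vw_0\|)$. This is $\cO(1/t)$ for every $a\le1/(\homo\bar\sigma^2)$ (with $\|\vw_0\|$ bounded away from $0$), which still gives infinitely many stepsizes. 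You already wrote this lower bound in the first bullet, where it plays no role.

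Also drop the variant with deterministic $\eta_t=a/(t\log t)$. There $\tilde\eta_t=\eta_t\|\vw_t\|$ is random, so your deterministic-$\tilde\eta_t$ contraction does not apply as stated. Moreover, because $\|\vw_t\|$ shrinks, $\sum_t\tilde\eta_t$ grows more slowly than $a\log\log T$, so this variant does not deliver $[\log T]^{-c\lambda}$.

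\textbf{Step~1: a different route, with a false remark and unfinished bookkeeping.} Here $\nabla_{\vv}\ell(\vv)$ is the Euclidean gradient at a point of the sphere, and it is \emph{not} tangential. Lemma~\ref{lem: inner product} gives $\langle\vv,\nabla_{\vv}\ell(\vv)\rangle=\homo\ell(\vv)$, which is the very source of the $-2\homo\tilde\eta_t\ell_t$ term in item~3. Let $N_I^2=1+\tilde\eta_t^2\|\nabla\ell_I(\vv_t)\|^2-2\homo\tilde\eta_t\ell_I(\vv_t)$. Your first-order error $c'\homo^2\bar\sigma^2\tilde\eta_t\hat\cL(\vv_t)$ comes precisely from $(N_I^{-1}-1)\langle\nabla\hat\cL(\vv_t),\vv_t\rangle=(N_I^{-1}-1)\homo\hat\cL(\vv_t)$. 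So ``the inner product vanishes'' contradicts your own display. Similarly, $\|\vw_t\|$ is not pathwise monotone: $\hat\cL(\vv_t)\ge\underline{\sigma}^2/2$ does not prevent $\ell_I(\vv_t)=0$, and only the $\bE_I$ version holds.

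The sphere-to-sphere descent lemma also forces you to control $\bE_I[N_I^{-1}\langle\nabla\hat\cL(\vv_t),\nabla\ell_I(\vv_t)\rangle]$, where $N_I$ is correlated with the sampled gradient. Since the inner product can be negative, you need a two-sided bound $|N_I^{-1}-1|=\cO(\tilde\eta_t)$. Combined with strong growth, this leaves an extra $\cO(\homo B\bar\sigma^2\tilde\eta_t^2)\|\nabla\hat\cL(\vv_t)\|^2$ that the stated caps on $\tilde\eta_t$ do not obviously absorb. You must also verify $c'\le 4$, because positivity is assumed only for the stated $\lambda$. This can be done: the radial term contributes $2\homo^2\bar\sigma^2$ and $\frac{\beta}{2}(N_I^{-1}-1)^2$ at most $2\homo^2\bar\sigma^2/B^2$, up to higher order.

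The paper sidesteps all of this. It applies the Schmidt--Le Roux strong-growth descent bound to the \emph{unnormalized} step $\bar\vv_{t+1}=\vv_t-\tilde\eta_t\nabla\ell_t(\vv_t)$. It then uses homogeneity: $\hat\cL(\vv_{t+1})=(\|\vw_t\|/\|\vw_{t+1}\|)^{\homo}\hat\cL(\bar\vv_{t+1})\le(1+4\homo^2\bar\sigma^2\tilde\eta_t)\hat\cL(\bar\vv_{t+1})$. Because this ratio bound is uniform over the sampled index and $\hat\cL\ge 0$, the normalization decouples from the stochastic gradient. The constant $4\homo^2\bar\sigma^2$ in $\lambda$ then comes out directly.
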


Theorem~\ref{thm: separation} establishes a separation between the effects of different stepsizes during the optimization process. 
It asserts that there are certain training landscapes for which an optimization algorithm with stepsize $\Omega(1/\sqrt{t})$ achieves a significantly better convergence rate compared to one with stepsize $\cO(1/t)$.
The PL condition is used to ensure convergence, while the strong growth condition is utilized to enhance the convergence rate of SGD. 
Notably, this result pertains to optimization performance, distinguishing it from prior works that apply the PL condition to derive generalization results via algorithmic stability. 
We provide the proof in Appendix~\ref{app:seperation}. 

\begin{example}[Compatibility between Optimization and Generalization]
\label{ex: compatibility_2d}
Consider the 3-homogeneous loss function:
\begin{equation*}
    \ell(\vv) = (v_1^2 + 2v_2^2)^{3/2}.
\end{equation*}
This case illustrates that the optimization guarantees in Theorem~\ref{thm: separation} and the generalization requirements in Corollary~\ref{cor: Three-Homogeneous Leads to Generalization} can be satisfied simultaneously. 
In the unit sphere, this loss satisfies the required regularity conditions with $\underline{\sigma}^2=1\leq \ell(\vv)\le2^{3/2}=\bar{\sigma}^2$, $L=6\sqrt{2}$, $\gamma=0$, $\beta = 12\sqrt{2}$ and $\mu \approx 19.7$. 
Without affecting asymptotic behavior, relax the definition of hitting time in Corollary~\ref{cor: Three-Homogeneous Leads to Generalization} to 
\begin{equation*}
    T_a(\tau(n)) = \min \{ t : \bE \hat{\cL}(\vv_t) \leq \underline{\sigma}^2 + \tau(n) \}
\end{equation*}
for some $\tau(n) \rightarrow 0$. 
Theorem~\ref{thm: separation} implies $\ell(\vv_T)-\underline{\sigma}^2=\cO(T^{-c\mu})$, which yields $T_a \leq \tau(n)^{-1/c\mu}$ for a constant $c$. 
To satisfy the generalization requirement $T_a = o(n/\tilde{\gamma}^{1/m_1})$ with $\tilde{\gamma} = 1/n$, we can choose 
\begin{equation*}
  \tau(n) = n^{-c\mu(1 + \epsilon/m_1)}
\end{equation*}
for some $\epsilon \in (0, 1)$.   
This ensures $T_a \leq n^{1 + \epsilon/m_1} = o(n^{1 + 1/m_1})$.  
This argument naturally extends to $\ell(\vv)=(v_1^2+2\Sigma_{i=2}^Dx_i^2)^{3/2}$. 
\end{example}
\section{Generalization Beyond Lipschitz}
\label{sec:relax-lipschitz}
This section demonstrates that the stepsize regime in Theorem~\ref{thm: homo leads to generalization} ensures generalization even without the Lipschitz assumption. 
The analysis proceeds by first defining a stability metric and establishing its relationship with generalization in Section~\ref{sec:gen_via_conditional_stability}, subsequently deriving the non-Lipschitz generalization bound in Section~\ref{sec:gen_under_non_lipschitz}. 

\subsection{Stability and Generalization}
\label{sec:gen_via_conditional_stability}
This section starts from the definition of the conditional on-average stability, adapted from the \emph{on-average model stability} \citep{DBLP:conf/icml/LeiY20} in Appendix~\ref{app:on-avg-background}. 

\begin{definition}[Conditional On-Average Stability]
\label{def: stability}
Let $S = \{z_1, \dots, z_n\}$ and $S^{(i)} = \{z_1, \dots,\allowbreak z'_i, \dots, \allowbreak z_n\}$ be two datasets differing only at the $i$-th index, where $z_i \triangleq (\X_i, \Y_i)$. 
$\vv_T$ and $\vv_T^{(i)}$ are trained on $S$ and $S^{(i)}$ over $T$ iterations, respectively. 
For any given $t_0 \in [T]$, let $\mathcal{E}_{(i)}$ be the event that the index $i$ is not selected during the first $t_0$ iterations. 
The algorithm $\cA$ is said to be $\epsilon_{\text{stab}}$-conditionally stable on average if:
\begin{equation*}
\epsilon^2_{\text{avg}, T \mid t_0} \triangleq \frac{1}{n} \sum_{i=1}^n \bE_{\cA, \cD} \left[ \|\vv_T - \vv_T^{(i)}\|^2 \mid \mathcal{E}_{(i)} \right] \leq \epsilon_{\text{stab}}.
\end{equation*}
\end{definition}
The following Lemma~\ref{thm:gen_decomposition} connects the generalization gap to the conditional on-average stability. 
We focus on the setting where $\gamma=0$. 
Notably, the assumptions of homogeneity and $\beta$-smoothness can hold simultaneously, \eg, with the loss $\ell_t(z)=(\max\{-\Y_t z, 0\})^2$ and activation $\sigma(\cdot)=\max\{0,\cdot\}^2$. 

\begin{lemma}
\label{thm:gen_decomposition}
Under Bounded Loss (Assumption~\ref{assump: loss bound}) and $\beta$-smoothness, for any $t_0 > 0$ and $\zeta > 0$, the generalization bound holds:
\begin{equation*}
\bE_{\cA,\cD} [\cL^s(\vw_T) - \hat{\cL}^s(\vw_T)] 
    \le 
    \left(\frac{t_0}{n} + \frac{\beta}{\zeta}\right)\bar{\sigma}^2 
    + 
    \frac{\beta+\zeta}{2} \epsilon^2_{\text{avg}, T \mid t_0},
\end{equation*}
\end{lemma}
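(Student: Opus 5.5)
We follow the on-average model stability argument of \citet{DBLP:conf/icml/LeiY20}, adapted to the conditioning event $\mathcal{E}_{(i)}$. We begin with the standard symmetrization identity. Since $z_i$ and $z_i'$ are exchangeable and $\vv_T^{(i)}$ is independent of $z_i$,
\begin{equation*}
\bE_{\cA,\cD}[\cL^s(\vw_T) - \hat{\cL}^s(\vw_T)] = \frac{1}{n}\sum_{i=1}^n \bE_{\cA,\cD}\big[\ell(\vv_T^{(i)}; z_i) - \ell(\vv_T; z_i)\big].
\end{equation*}
Each summand is then split according to whether $\mathcal{E}_{(i)}$ occurs. The event $\mathcal{E}_{(i)}^c$ requires index $i$ to be sampled at least once in the first $t_0$ iterations. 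A union bound gives $\Pr(\mathcal{E}_{(i)}^c)\le t_0/n$. On this event, both losses lie in $[0,\bar{\sigma}^2]$ on the sphere by Assumption~\ref{assump: loss bound}, so the difference is at most $\bar{\sigma}^2$. This yields the contribution $\frac{t_0}{n}\bar{\sigma}^2$. On $\mathcal{E}_{(i)}$ we bound the conditional expectation and multiply by $\Pr(\mathcal{E}_{(i)})\le 1$. This step is only valid if the conditional term is nonnegative, so we keep it in the nonnegative form produced below. One caveat must be checked here: conditioning on $\mathcal{E}_{(i)}$ does not depend on $z_i,z_i'$. Hence the exchangeability used above survives the conditioning, and $\vv_T^{(i)}$ remains independent of $z_i$ given $\mathcal{E}_{(i)}$.

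On $\mathcal{E}_{(i)}$ we use $\beta$-smoothness. The descent-type bound gives
\begin{equation*}
\ell(\vv_T^{(i)};z_i)-\ell(\vv_T;z_i)\le \langle \nabla\ell(\vv_T;z_i),\vv_T^{(i)}-\vv_T\rangle + \frac{\beta}{2}\|\vv_T^{(i)}-\vv_T\|^2.
\end{equation*}
Young's inequality then gives $\langle a,b\rangle\le \frac{1}{2\zeta}\|a\|^2+\frac{\zeta}{2}\|b\|^2$. This produces the term $\frac{\beta+\zeta}{2}\|\vv_T-\vv_T^{(i)}\|^2$, whose average over $i$ is exactly $\frac{\beta+\zeta}{2}\epsilon^2_{\text{avg},T\mid t_0}$.

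The remaining term is $\frac{1}{2\zeta}\|\nabla\ell(\vv_T;z_i)\|^2$. We control it by self-bounding of nonnegative smooth functions, $\|\nabla\ell\|^2\le 2\beta\ell$. Combined with $\ell\le\bar{\sigma}^2$, this gives $\frac{1}{2\zeta}\|\nabla\ell\|^2\le \frac{\beta}{\zeta}\bar{\sigma}^2$, matching the claimed $\frac{\beta}{\zeta}\bar{\sigma}^2$.

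The main obstacle is making this self-bounding step rigorous on the sphere. The gradient $\nabla_\vv\ell$ and the smoothness are stated for the restriction to $\|\vv\|=1$, while the standard self-bounding lemma needs global smoothness and nonnegativity. We would try to apply the lemma to the ambient $\beta$-smooth extension, which is nonnegative because $\ell=\max\{\cdot,0\}^u$. If only the spherical version of smoothness is available, we would fall back on the Lipschitz/tangential structure from Lemma~\ref{lem: L-homogeneous property}. A further subtlety is that the descent inequality is taken along the chord between two unit vectors, which leaves the sphere. We would handle this either by using the ambient smoothness or by noting that the $H$-homogeneous extension's gradient is controlled in a neighborhood of the sphere. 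Once this step is settled, summing the two cases gives the stated bound.
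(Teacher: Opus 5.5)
Your proposal is correct and follows the paper's proof: symmetrization via $z_i \leftrightarrow z_i'$, splitting on $\mathcal{E}_{(i)}$ with the union bound $\Pr(\mathcal{E}_{(i)}^c)\le t_0/n$, and then the smoothness--Young--self-bounding inequality of Lei and Ying, which the paper cites directly in an absolute-value form rather than re-deriving it one-sidedly as you do. The paper also does not address your concern about smoothness on the sphere versus the ambient space; it simply takes the Lei--Ying hypotheses (nonnegative, $\beta$-smooth $\ell(\cdot;z)$) at face value, and under that reading your argument is complete.
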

where $\zeta$ is a free parameter. The proof is deferred to Appendix~\ref{subsec:proof_thm_5_1}. 

\subsection{Generalization Beyond Lipschitz}
\label{sec:gen_under_non_lipschitz}
This section derives the generalization bounds without the Lipschitz assumption. 
\begin{theorem}[Generalization Beyond Lipschitz]
\label{thm:gen_without_lipschitz}
Given any fixed $T > 0$, let Bounded Loss (Assumption~\ref{assump: loss bound}) and $\beta$-smoothness hold. Then, for an $H$-homogeneous network ($H>2$) and the stepsize regime in Theorem~\ref{thm: homo leads to generalization}, the generalization gap satisfies:
\begin{equation*}
\bE_{\cA,\cD} [\cL^s(\vw_T) - \hat{\cL}^s(\vw_T)] = \cO \left( n^{-\frac{m'_1+2}{m'_1+3}} \mathcal{K}_T^{\frac{1}{m'_1+3}} T^{\frac{m'_1}{m'_1+3}} \right),
\end{equation*}
where $\mathcal{K}_T = 1 + T/n$ and $m'_1 = 8\frac{c_2\bar{\sigma}^2}{\underline{\sigma}^2} + \frac{4c_2\beta}{H\underline{\sigma}^2}$. 
\end{theorem}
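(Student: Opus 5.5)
We start from Lemma~\ref{thm:gen_decomposition}, which reduces the generalization gap to bounding the conditional on-average stability $\epsilon^2_{\text{avg}, T \mid t_0}$. After that, $t_0$ and $\zeta$ are optimized. The stepsize regime of Theorem~\ref{thm: homo leads to generalization} is taken as given: by Lemma~\ref{lem: L-homogeneous property} the effective stepsize on the sphere satisfies $\bE\tilde{\eta}_t \le c_2/t$. This follows from the norm recursion $\|\vw_{t+1}\|^2 = \|\vw_t\|^2(1 + \tilde\eta_t^2\|\nabla\ell_t(\vv_t)\|^2 - 2H\tilde\eta_t\ell_t(\vv_t))$ together with the lower bound $\hat{\cL}(\vv_t)\ge \underline{\sigma}^2/2$ from Assumption~\ref{assump: loss bound}. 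In the Lipschitz proof, the gradient-norm term $\tilde\eta_t^2\|\nabla\|^2$ was handled by $L$. Here it must instead be controlled through smoothness plus bounded loss, via the self-bounding inequality $\|\nabla \ell(\vv)\|^2 \le 2\beta\,\ell(\vv) \le 2\beta\bar\sigma^2$ for nonnegative $\beta$-smooth losses. This is where the extra $\beta/(H\underline\sigma^2)$ term in $m_1'$ comes from.

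\textbf{Stability recursion.} Couple the two runs on $S$ and $S^{(i)}$ with the same index sequence, and condition on $\mathcal{E}_{(i)}$, so that $\delta_{t_0}=0$. For $t>t_0$, define $\delta_t = \|\vv_t-\vv_t^{(i)}\|$.

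When a common sample is drawn (probability $1-1/n$), the projected update is an expansive map with factor at most $1+\tilde\eta_t\beta$. The projection back to the sphere contributes a further factor of order $1+\tilde\eta_t H\bar\sigma^2$, through the derivative of the normalization, which involves $\ell_t(\vv_t)$.

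When the differing sample is drawn (probability $1/n$), we avoid the Lipschitz constant by working in squared form. We use $\|a+b\|^2\le(1+p)\|a\|^2+(1+1/p)\|b\|^2$ and bound $\|\nabla\ell(\vv)\|^2+\|\nabla\ell'(\vv')\|^2 \le 2\beta(\ell(\vv)+\ell'(\vv'))$. Taking expectations over the on-average index $i$ and using the bounded loss, this yields a recursion of the form
\begin{equation*}
\bE\delta_{t+1}^2 \le \Bigl(1+\tfrac{p}{n}+2\tilde\eta_t\beta + 4\tilde\eta_t H\bar\sigma^2\Bigr)\bE\delta_t^2 + \Bigl(1+\tfrac{n}{p}\Bigr)\tfrac{c\,\tilde\eta_t^2\beta\bar\sigma^2}{n}.
\end{equation*}
The parameter $p$ is chosen as $p\approx n/T$, which produces the factor $\mathcal{K}_T=1+T/n$ when the multiplicative $(1+1/T)^T$ factor is absorbed. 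We substitute $\tilde\eta_t\le c_2/t$; the normalization by $\underline\sigma^2$ enters through how $c_2$ is tied to $H\underline\sigma^2$, as in the main theorem. Unrolling then gives $\prod_{t>t_0}(1+m_1'/t)\le (T/t_0)^{m_1'}$. Summing the additive terms, which are $O(\mathcal{K}_T/(n t^2))$ weighted by $(T/t)^{m_1'}$, gives
\begin{equation*}
\epsilon^2_{\text{avg},T\mid t_0} \lesssim \frac{\mathcal{K}_T}{n}\,\frac{T^{m_1'}}{t_0^{m_1'+1}}.
\end{equation*}
The doubled coefficient $8c_2\bar\sigma^2/\underline\sigma^2$ in $m_1'$, versus $4$ in $m_1$, reflects the squaring.

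\textbf{Optimization.} Plug this into Lemma~\ref{thm:gen_decomposition}. First choose $\zeta$ to balance $\beta\bar\sigma^2/\zeta$ against $\zeta\epsilon^2/2$, which gives $\zeta\asymp \epsilon^{-1}$ and a term of order $\epsilon_{\text{avg}}$, that is, $\sqrt{\mathcal{K}_T T^{m_1'}/(n t_0^{m_1'+1})}$. Then balance this against $t_0/n$: setting $t_0^2/n^2 = \mathcal{K}_T T^{m_1'}/(n\,t_0^{m_1'+1})$ gives $t_0^{m_1'+3}=n\,\mathcal{K}_T T^{m_1'}$. The resulting bound is $t_0/n = n^{-\frac{m_1'+2}{m_1'+3}}\mathcal{K}_T^{\frac{1}{m_1'+3}}T^{\frac{m_1'}{m_1'+3}}$, as claimed.

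\textbf{Main obstacle.} The main obstacle is the differing-sample step and the sphere projection without a Lipschitz constant. We must show that the normalization map's expansion is controlled by loss values, via the norm identity of Lemma~\ref{lem: L-homogeneous property}, rather than by gradient norms. We must also handle the randomness of $\tilde\eta_t$, which is correlated with $\delta_t$. For the latter, I would first try to use the stepsize construction from Theorem~\ref{thm: homo leads to generalization}, which makes $\tilde\eta_t\le c_2/t$ hold almost surely, so that it can be pulled out of the expectations.
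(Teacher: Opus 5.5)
Your proposal follows essentially the same route as the paper. It uses Lemma~\ref{thm:gen_decomposition}, then a squared-distance recursion in which Young's inequality with $p=n/T$ and the self-bounding gradient bound are applied only on the differing-sample event. The projection is handled by the norm-ratio factor $(1+2H\tilde{\eta}_t\bar{\sigma}^2)^2$, the recursion is unrolled to $\epsilon^2_{\text{avg},T\mid t_0}\lesssim \mathcal{K}_T T^{m'_1}/(n\,t_0^{m'_1+1})$, and $\zeta$ and $t_0$ are then optimized.

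Two small slips do not affect the argument. First, the additive factor in your displayed recursion should be $(1+1/p)/n$ rather than $(1+n/p)/n$; otherwise $p=n/T$ gives $(1+T)/n$ instead of $\mathcal{K}_T/n$. Second, the $4c_2\beta/(H\underline{\sigma}^2)$ part of $m'_1$ comes from the $(1+\beta\tilde{\eta}_t)^2$ expansion factor, not from the self-bounding control of the weight norm; that control only serves to keep the stepsize regime valid without the Lipschitz assumption.
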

Notably, this bound still supports multi-pass training.
We defer the proof to Appendix~\ref{subsec:proof_thm_5_2}. 

\textbf{Remark: Comparison of Bounds.}
Theorem~\ref{thm:gen_without_lipschitz} removes the Lipschitz requirement in Assumption~\ref{assump: approximate smooth and lipshitz}, better reflecting practical scenarios. 
Interestingly, despite this relaxation, it sometimes yields a faster growth in the iteration upper bound compared to that in Theorem~\ref{thm: homo leads to generalization}. 
Specifically, it supports $T = o(n^{1 +2/(m'_1+1)})$ while ensuring generalization, which surpasses $o(n^{1+1/m_1})$ when $4c_2\beta > \underline{\sigma}^2 \homo$, allowing for more training epochs. 
This improvement potentially stems from the different stability metrics utilized. 
It reveals that, in our setting, uniform stability may contain redundancy relative to the Assumptions~\ref{assump: loss bound} and~\ref{assump: approximate smooth and lipshitz}. 
We leave the application of our stepsize regime to other stability metrics for further exploration. 
\section{Conclusion}
\label{sec: diss}
In this paper, we establish generalization bounds for homogeneous neural networks. 
We prove in Theorem~\ref{thm: homo leads to generalization} that for $\homo$-homogeneous neural networks, there exist stepsizes of order $\Omega(t^{\frac{H-4}{2}})$ that ensure generalization via algorithmic stability. 
This is grounded in the observation that while the stepsize may be large, its effective stepsize on the unit sphere might be considerably small. 
Applying the main result to the specific case of $\homo=3$, Corollary~\ref{cor: Three-Homogeneous Leads to Generalization} establishes generalization for stepsizes of order $\Omega(1/\sqrt{t})$, relaxing the previous requirement of $\cO(1/t)$. 
Our results also extend to a broader range of loss functions, activation functions, and architectures, as detailed in Corollary~\ref{cor: extension of loss} and Proposition~\ref{prop: homo-property}. 
Besides, we provide in Theorem~\ref{thm: separation} an optimization result on the separation between stepsize $\cO(1/t)$ and stepsize $\Omega(1/\sqrt{t})$. 
Furthermore, we strengthen our framework in Section~\ref{sec:relax-lipschitz} by deriving non-Lipschitz generalization bounds, better aligning our theory with real-world scenarios. 
Overall, our findings highlight homogeneity as a helpful property that enhances algorithmic stability, yielding benefits in both generalization and optimization. 

\bibliography{reference}
\bibliographystyle{unsrtnat}
\clearpage

\newpage
\appendix
\renewcommand{\appendixpagename}{\centering Appendix}
\appendixpage
\startcontents[section]
\printcontents[section]{l}{1}{\setcounter{tocdepth}{2}}

\newpage
\begin{center}
    \begin{huge}
        Appendix
    \end{huge}
\end{center}
\section{Additional Discussions}
This appendix provides additional discussion in Section~\ref{app:extend_rw} on algorithmic stability and generalization analysis to complement Section~\ref{sec:related_works}. 
Subsequently, Section~\ref{app:illustration_sec} offers a detailed explanation and visualization of the homogeneous neural network construction introduced in Proposition~\ref{prop: homo-property}.

\subsection{Additional Related Work}
\label{app:extend_rw}
\textbf{Algorithmic Stability.} Algorithmic stability is a prominent generalization technique that may generalize to a broader range of loss functions and models in multi-pass settings~\citep{DBLP:journals/jmlr/BousquetE02,DBLP:conf/colt/FeldmanV19,DBLP:conf/colt/BousquetKZ20,bassily2020stability,DBLP:conf/iclr/TengMY22,yangstability}. 
This argument shows that if the model and the training method are not overly sensitive to data perturbations, the generalization error can be effectively bounded. 
It is provably effective under Lipschitz, convex, and smooth regimes~\citep{DBLP:conf/icml/HardtRS16,DBLP:conf/icml/AsiFKT21,DBLP:journals/corr/abs-2303-10758}. 
There is a line of work focusing on relaxing the constraints of Lipschitz condition~\citep{DBLP:conf/icml/LeiY20,DBLP:conf/nips/AroraBGMU22,DBLP:conf/iclr/NikolakakisHKK23} and smoothness~\citep{DBLP:conf/aistats/YangLLY21,DBLP:conf/nips/WangLYZ22,DBLP:conf/alt/LowyR23}. 

\textbf{Generalization in Stochastic Optimization.} 
Generalization in stochastic optimization has been thoroughly studied~\citep{shalev2010learnability}, across various scenarios such as one-pass SGD \citep{pillaud2018exponential,lugosi2024convergence}, multi-pass SGD \citep{pillaud2018statistical,sekhari2021sgd,lei2021generalization}, DPSGD \citep{bassily2019private,ma2022dimension}, and ERM solutions \citep{feldman2016generalization,aubin2020generalization}. Early research primarily focused on gradient descent in convex learning problems \citep{amir2021never}, where generalization can be bounded by uniform convergence \citep{DBLP:conf/nips/BartlettFT17,DBLP:conf/nips/NagarajanK19,DBLP:conf/iclr/WeiM20,DBLP:conf/iclr/GlasgowWW023}. However, this approach struggles in high-dimensional spaces and can be inadequate \citep{shalev2010learnability,feldman2016generalization}. 
An alternative is online-to-batch conversion \citep{nemirovskij1983problem}, which mitigates the high-dimensional challenge and achieves minimax sample complexity, but it is limited to single-pass training, whereas in practice, longer training often results in better generalization \citep{hoffer2017train}. Recent works aim to close this gap by bounding generalization in multi-pass settings \citep{DBLP:conf/iclr/SoudryHNS18,DBLP:conf/iclr/LiLL21,DBLP:conf/nips/LyuLWA21,sekhari2021sgd}, showing that gradient descent benefits from implicit bias. However, characterizing this implicit bias for more complex models remains an open problem.
In addition to uniform convergence and implicit bias, various other approaches to generalization have been proposed, including PAC-Bayes \citep{DBLP:conf/colt/McAllester99,DBLP:conf/uai/DziugaiteR17,DBLP:journals/entropy/HaddoucheGRS21,DBLP:conf/nips/LotfiFKPGW22}, information-theoretic methods \citep{DBLP:conf/aistats/RussoZ16,DBLP:conf/nips/XuR17,DBLP:conf/nips/NegreaHDK019,DBLP:conf/nips/HaghifamNK0D20,DBLP:conf/isit/HaghifamMRD22,DBLP:conf/alt/HaghifamGTS0D23,lugosi2022generalization}, compression-based techniques \citep{DBLP:conf/icml/Arora0NZ18,DBLP:conf/iclr/0001JTW21}, and benign overfitting \citep{bartlett2020benign,DBLP:conf/colt/ZouWBGK21,koren2022benign,xu2022models,DBLP:conf/iclr/WenTZ23}.
Notably, as one of the most fundamental issues in machine learning theory, generalization theory focuses on the mechanism of deep learning, distinguishing itself from practical approaches such as validation tricks~\citep{DBLP:journals/cacm/ZhangBHRV21,chatterjee2022generalization}.

\subsection{Detailed Construction and Visualization of \texorpdfstring{$H$}{H}-Homogeneous Networks}
\label{app:illustration_sec}
First, we detail the underlying operations for the linear connections constructed in Proposition~\ref{prop: homo-property}. 
For an unnormalized layer, we define $\cU_\vw(a) \triangleq \vw \sigma_u(a)$. 
Here, $a$ denotes the input vector (the output of the previous layer), $\vw$ represents the weight matrix of the \emph{current} layer, and $\sigma_u(\cdot)$ is an element-wise activation function. 
This operation proceeds by activating $a$ followed by matrix multiplication with $\vw$. 
In contrast, the normalized layer $\cN_\vw(a) \triangleq \frac{\vw}{\| \vw\|} \sigma_n(a)$ incorporates an additional normalization step for $\vw$ prior to multiplication. 
This can be implemented either by dividing $\vw$ directly by its Frobenius norm or by normalizing each row independently. 
While both approaches ensure 0-homogeneity, they differ in signal propagation: the latter maintains the input's scale, whereas the former significantly attenuates the output scale relative to the layer's dimensions.

Next, We provide a visualization for the specific case with $H=3$ to illustrate our construction. 
This case is of particular interest as it leads to the $\Omega(1/\sqrt{t})$ stepsize regime discussed in Corollary~\ref{cor: Three-Homogeneous Leads to Generalization}.

As shown in Figure~\ref{fig: illustration of construction}, the architecture is decoupled into two segments:
\begin{itemize}
    \item \textbf{Normalized:} The initial $h-3$ layers utilize normalization $\cN_\vw$ to ensure $0$-homogeneity, allowing for complex architectural features such as skip-connections or diverse activations without increasing the overall degree of the network.
    \item \textbf{Unnormalized:} The final $3$ layers are unnormalized $\cU_\vw$, which collectively contribute a degree of $H=3$ to the network output.
\end{itemize}

\begin{figure}[t]
    \centering
    \subfloat{\includegraphics[width=0.8\linewidth]{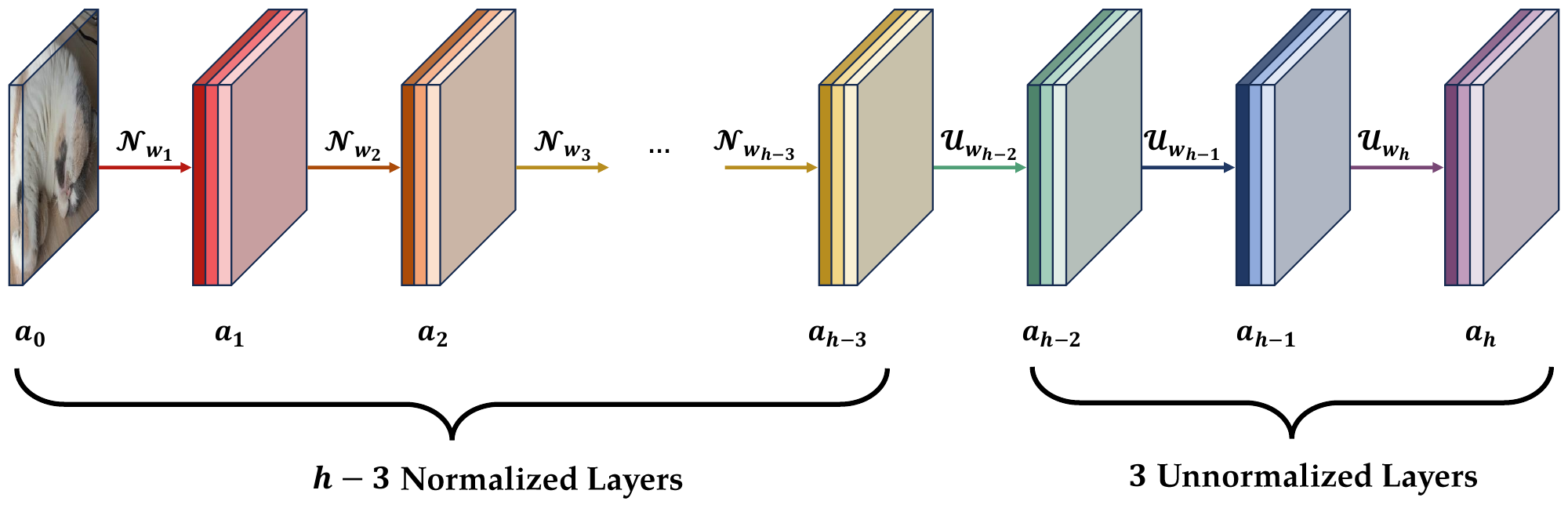}}
    \vspace{1em}
    \subfloat{\includegraphics[width=0.8\linewidth]{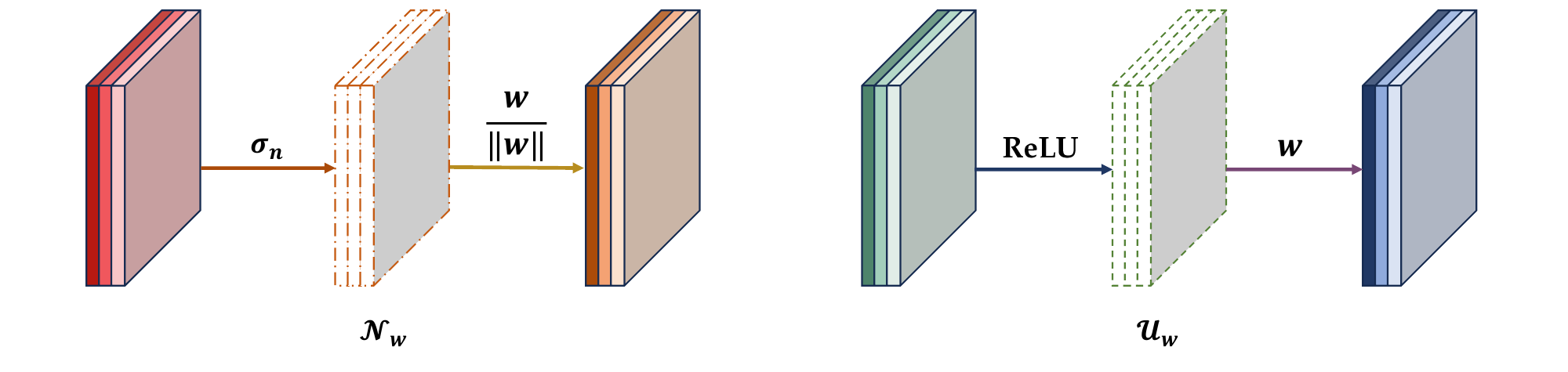}}
    \caption{An instance of construction for $\homo=3$. Following the rules in Proposition~\ref{prop: homo-property}, normalized layers (left) accomodate arbitrary activations and connections with $0$-homogeneity, while the subsequent unnormalized layers (right) compose a $3$-layer MLP to achieve the desired three-homogeneity.}
    \label{fig: illustration of construction}
\end{figure}

\section{Theoretical Results}
In this appendix, we organize the theoretical results as follows: 
Section~\ref{app:notations} details the notations used throughout the paper. 
Section~\ref{sec: proof sketch} presents the proofs and further discussions of the main results in Theorem~\ref{thm: homo leads to generalization}. 
Section~\ref{subsec:proof_cor_5_1} provides the proof for the relaxation of the loss form as shown in Corollary~\ref{cor: extension of loss}. 
Section~\ref{appendix: proof of H-homo} establishes the foundational analysis of the effective stepsize in Lemma~\ref{lem: L-homogeneous property}. 
Section~\ref{app:seperation} proves the optimization separation result between different stepsizes in Theorem~\ref{thm: separation}. 
Section~\ref{app:technical_lemmas} analyzes the weight norm iteration under both Lipschitz and non-Lipschitz conditions. 
Finally, Section~\ref{app:proof_refined} provides the detailed proofs for the non-Lipschitz results discussed in Section~\ref{sec:relax-lipschitz}.

\subsection{Notations}
\label{app:notations}
\begin{table}[htbp]
    \centering
    \caption{Summary of Notations}
    \label{tab:notations}
    \small
    \renewcommand{\arraystretch}{1.2}
    \begin{tabularx}{\textwidth}{@{}l X@{}}
        \toprule
        \textbf{Symbol} & \textbf{Description} \\
        \midrule
        \multicolumn{2}{l}{\textit{\textbf{Model}}} \\ 
        $\Phi(\vw; \X)$ & Neural network function $\Phi(\cdot; \cdot)$ with weights $\vw$ and input $\X$ \\
        $f_\vw(\cdot)$ & Equivalent notation to $\Phi(\vw; \cdot)$ \\
        $\homo$ & Degree of positive homogeneity of the neural network $\Phi$ \\
        $\vw_t$ & Weight vector at iteration $t$ \\
        $\vv_t$ & Direction of weight (normalized weight), defined as $\vv_t = \vw_t / \|\vw_t\|$ \\
        \midrule
        \multicolumn{2}{l}{\textit{\textbf{Optimization}}} \\
        $S$ & Training dataset, a realization of $\cP$ \\ 
        $\eta_t$ & Stepsize (learning rate) in SGD updates \\ 
        $\tilde{\eta}_t$ & Effective stepsize governing dynamics on the unit sphere, $\tilde{\eta}_t = \rho_t \|\vw_t\|^{H-2}\eta_t$ \\
        $\rho_t$ & Gradient alignment ratio, defined as $\rho_t = \ell'_t(\Phi(\vw_t)) /{\ell'_t(\Phi(\vv_t))}$ \\
        $T$ & Total number of training iterations \\
        $n$ & Sample size of the training dataset \\
        $[n]$ & The set $\{1, 2, \cdots, n\}$ \\
        \midrule
        \multicolumn{2}{l}{\textit{\textbf{Loss and Generalization}}} \\
        $\cP$ & Joint distribution of feature-response pair \\
        $\cD, \cA$ & Distribution of datasets and randomized algorithm \\
        $\bE$ & Expectation of the random variable following it, taken over its subscript \\
        $\ell(y, \hat{y})$ & Individual loss function $\ell(\cdot, \cdot)$ with label $y$ and prediction $\hat{y}$ \\
        $\ell(\vw; \X, \Y)$ & Individual loss function equivalent to $\ell(\Y, \Phi(\vw;\X))$ \\
        $\ell_t(z)$ & Equivalent to $\ell(\Y_t, z)$ where $z$ is the prediction \\
        $\cL(\vw)$ & Population loss, $\cL(\vw) \triangleq \bE_{(\X, \Y) \sim \cP} \ell(\vw; \X, \Y)$ \\
        $\hat{\cL}(\vw)$ & Empirical loss, $\hat{\cL}(\vw) \triangleq \frac{1}{n} \sum_{i \in [n]} \ell(\vw; \X_i, \Y_i)$ \\
        $\cL^s(\vw), \hat{\cL}^s(\vw)$ & Population and empirical risks evaluated on the unit sphere, i.e., $\cL^s(\vw) \triangleq \cL(\vv)$ and $\hat{\cL}^s(\vw) \triangleq \hat{\cL}(\vv)$ \\ 
        \midrule
        \multicolumn{2}{l}{\textit{\textbf{Constants and Bounds}}} \\
        $\bar{\sigma}^2$ & Upper bound of the individual loss on the unit sphere \\
        $\underline{\sigma}^2$ & Bayesian optimal value of the given problem on the unit sphere, $\underline{\sigma}^2 \triangleq \min_{\vw} \cL(\vw/\|\vw\|) \neq 0$ \\
        $\gamma, \beta$ & Constants of $(\gamma, \beta)$-approximate smoothness for individual loss\\ 
        $\tilde{\gamma}$ & Additional rate multiplier of generalization bound, $\tilde{\gamma} = \max\{\gamma, 1/n\}$ \\
        $L$ & Lipschitz constant of individual loss\\ 
        $m_1$ & Derived constant governing the generalization, $m_1 = \frac{4 c_2k_1}{\underline{\sigma}^2} \left[ k_1 k_2\bar{\sigma}^2 + \frac{\beta}{H} \right]$, where $k_1=k_2=1$ in Theorem~\ref{thm: homo leads to generalization} \\
        $c_2$ & Constants specifying the stepsize \\ 
        \bottomrule
    \end{tabularx}
    \normalsize
\end{table}
This paper uses the following standard notations for asymptotic analysis. For any positive functions $f(n)$ and $g(n)$, let $C > 0$ denote a constant independent of $n$:
\begin{itemize}
    \item $f(n) = \Theta(g(n))$ if $\lim_{n \to \infty} \frac{f(n)}{g(n)} = C$;
    \item $f(n) = \cO(g(n))$ if $\limsup_{n \to \infty} \frac{f(n)}{g(n)} < \infty$;
    \item $f(n) = \Omega(g(n))$ if $\liminf_{n \to \infty} \frac{f(n)}{g(n)} > 0$;
    \item $f(n) = o(g(n))$ if $\lim_{n \to \infty} \frac{f(n)}{g(n)} = 0$;
    \item $f(n) = \omega(g(n))$ if $\lim_{n \to \infty} \frac{f(n)}{g(n)} = \infty$.
\end{itemize}
A comprehensive summary of the other notations used throughout this paper is provided in Table~\ref{tab:notations}.

\subsection{Proof of Theorem~\ref{thm: homo leads to generalization}}
\label{sec: proof sketch}
This section derives the results in Theorem~\ref{thm: homo leads to generalization}. 
The outline of our approach unfolds as follows: 

First, we establish the relationship between the stepsize $\eta_t$ and the effective stepsize $\tilde{\eta}_t$ on the unit sphere. 
Utilizing Lemma~\ref{lem: L-homogeneous property}, we analyze how different orders of stepsizes induce varying orders of effective stepsizes under the homogeneous setting. 
Building on this, we strategically select the stepsize regime to ensure the effective stepsize remains sufficiently small to guarantee generalization. 
We refer to Lemma~\ref{lem: (effective) stepsize} for more details. 

The remaining part requires deriving the algorithmic stability under such effective stepsize. 
Directly achieving stability proves elusive due to the presence of $\|\vw_{t+1}\|$ and $\|\vw_{t}\|$ within the updating rule stipulated in \eqref{eqn: updating rule}. 
Consequently, bounding the ratio $\|\vw_{t} \| / \|\vw_{t+1}\|$ emerges as a requirement. 
This imperative stems from the upper bound imposed on the loss function by Assumption~\ref{assump: loss bound}.
We summarize the discussion in Lemma~\ref{lem: generalization error initial}. 

\subsubsection{(Effective) Stepsize}
\begin{lemma}[(Effective) stepsize]
\label{lem: (effective) stepsize}
Assume $\bE_\cA \| \vw_0 \|^2 = 1$ and $\homo > 2$ without loss of generality. 
Under Assumption~\ref{assump: loss bound} and Assumption~\ref{assump: approximate smooth and lipshitz}:
\begin{enumerate}
    \item there exist infinitely many stepsizes satisfying $\bE_{\cA, \cD} \eta_t = \Omega(t^{\frac{\homo-4}{2}})$, such that the effective stepsize is of order $\Theta(1 / t)$,
    \item there exist infinitely many stepsizes satisfying $\bE_{\cA, \cD} \eta_t \le \frac{2c_2}{\homo \underline{\sigma}^2} \left(t+1\right)^\frac{\homo-4}{2}=\cO(t^{\frac{\homo-4}{2}})$ with $c_2 \geq 1$, such that the effective stepsize is of order $\cO\left(1/t\right)$. 
    \item there exist infinitely many stepsizes satisfying $\bE_{\cA, \cD} \eta_t = o(t^{\frac{H-4}{2}})$, such that the effective stepsize is of order $o(1/t)$. 
\end{enumerate}
where the expectation is taken over the randomness of the dataset and the algorithm. 
\end{lemma}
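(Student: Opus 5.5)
The plan is to construct the stepsizes directly through the effective stepsize, using the identity $\tilde{\eta}_t = \eta_t \|\vw_t\|^{\homo-2}$ from Lemma~\ref{lem: L-homogeneous property}. Concretely, I would define the stepsize adaptively as
\[
\eta_t \triangleq \frac{c_t}{(t+1)\|\vw_t\|^{\homo-2}},
\]
where $(c_t)$ is a deterministic sequence. This gives $\tilde{\eta}_t = c_t/(t+1)$ exactly, which removes the randomness from the effective stepsize. The three claims then correspond to three choices. Taking $c_t \equiv c$ in a fixed positive range gives $\Theta(1/t)$. Taking $c_t \le c$ gives $\cO(1/t)$. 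Taking $c_t \to 0$ gives $o(1/t)$. Since $c$ (and the sequence $c_t$) can be chosen freely in a range, this produces infinitely many stepsizes. What remains is to translate these choices into statements about $\bE_{\cA,\cD}\eta_t$, which requires controlling $\bE\|\vw_t\|^{-(\homo-2)}$.

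The key step is a two-sided estimate of the growth of $\|\vw_t\|$, obtained from the norm iteration in Lemma~\ref{lem: L-homogeneous property}:
\[
\|\vw_{t+1}\|^2 = \|\vw_t\|^2\bigl(1 + \tilde{\eta}_t^2\|\nabla_{\vv_t}\ell_t(\vv_t)\|^2 - 2\homo\tilde{\eta}_t\ell_t(\vv_t)\bigr).
\]
I would take $\bE_I$ conditionally on the past. By Assumption~\ref{assump: loss bound}, $\bE_I\ell_t(\vv_t) = \hat{\cL}(\vv_t) \in [\underline{\sigma}^2/2, \bar{\sigma}^2]$. By Assumption~\ref{assump: approximate smooth and lipshitz}, the gradient term is at most $L^2$. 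Hence
\[
1 - 2\homo\tilde{\eta}_t\bar{\sigma}^2 \le \frac{\bE_I\|\vw_{t+1}\|^2}{\|\vw_t\|^2} \le 1 - \homo\tilde{\eta}_t\underline{\sigma}^2 + \tilde{\eta}_t^2L^2.
\]
With $\tilde{\eta}_t = c/(t+1)$ and $c$ small enough that $\tilde{\eta}_t \le \homo\underline{\sigma}^2/(2L^2)$, the upper factor is at most $1 - \homo\underline{\sigma}^2 c/(2(t+1))$. I would apply this factor iteratively (the effective stepsizes are deterministic, so the conditional bounds telescope in expectation) and use $\prod_{s\le t}(1 - a/(s+1)) \approx (t+1)^{-a}$. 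Together with $\bE\|\vw_0\|^2 = 1$, this yields $\bE\|\vw_t\|^2 \lesssim (t+1)^{-\homo\underline{\sigma}^2 c/2}$, and symmetrically $\bE\|\vw_t\|^2 \gtrsim (t+1)^{-2\homo\bar{\sigma}^2 c}$.

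Next I would choose the constant $c$ so that the exponents match the target. Setting $c = c_2 \cdot 2/(\homo\underline{\sigma}^2)$ (or an appropriate variant) makes the decay exponent of $\|\vw_t\|^2$ at least $1$, so that
\[
\|\vw_t\|^{-(\homo-2)} \gtrsim (t+1)^{(\homo-2)/2}.
\]
Then
\[
\eta_t = \frac{c}{t+1}\,\|\vw_t\|^{-(\homo-2)} \gtrsim (t+1)^{(\homo-4)/2},
\]
which gives item 1. The factor $2c_2/(\homo\underline{\sigma}^2)$ in item 2 is exactly this constant, and the upper bound in item 2 comes from the matching lower estimate on $\|\vw_t\|$ (equivalently, capping the stepsize by that expression, which only shrinks $\tilde{\eta}_t$). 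Item 3 follows by letting $c_t \to 0$ slowly, or by multiplying the item-2 stepsizes by any vanishing deterministic factor.

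The main obstacle is passing from second-moment bounds on $\|\vw_t\|^2$ to bounds on $\bE\|\vw_t\|^{-(\homo-2)}$, since powers do not commute with expectation. For the lower bound on $\bE\eta_t$, Jensen's inequality works in our favour: $x \mapsto x^{-(\homo-2)/2}$ is convex, so $\bE\|\vw_t\|^{-(\homo-2)} \ge (\bE\|\vw_t\|^2)^{-(\homo-2)/2}$, which together with the upper bound on $\bE\|\vw_t\|^2$ gives item 1 cleanly. For the upper bound in item 2, Jensen goes the wrong way. There I would work pathwise instead, bounding the ratio $\|\vw_{t+1}\|^2/\|\vw_t\|^2 \ge 1 - 2\homo\tilde{\eta}_t\bar{\sigma}^2$ almost surely (using the per-sample bound $\ell_t \le \bar{\sigma}^2$), which gives a deterministic lower bound on $\|\vw_t\|$ relative to $\|\vw_0\|$. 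Alternatively, I would define the stepsize with a cap $\min\{\cdot, \frac{2c_2}{\homo\underline{\sigma}^2}(t+1)^{(\homo-4)/2}\}$ so that item 2 holds by construction. I would then check that this cap is inactive often enough not to destroy the $\Theta(1/t)$ effective rate, and I expect this check to be the delicate part.
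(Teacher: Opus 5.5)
Item 1 is the paper's argument. It uses the contraction $\bE\|\vw_{t+1}\|^2\le(1-\tfrac12 H\underline{\sigma}^2\tilde\eta_t)\,\bE\|\vw_t\|^2$ of Lemma~\ref{lem: weight norm iteration}, with the deterministic choice $\tilde\eta_t=\frac{2c_2}{H\underline{\sigma}^2(t+1)}$, followed by Jensen for the convex map $x\mapsto x^{-(H-2)/2}$. Your clash between ``$c$ small enough that $\tilde\eta_t\le H\underline{\sigma}^2/(2L^2)$'' and $c=2c_2/(H\underline{\sigma}^2)$ is removed by shifting $t+1$ to $t+U$, as in the paper's footnote.

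For items 2 and 3 you take a different route. The paper never needs an upper bound on $\bE\|\vw_t\|^{-(H-2)}$. Item 2 is argued by contradiction: if the deterministic $\tilde\eta_k$ exceeded $\frac{2c_2}{H\underline{\sigma}^2(k+1)}$ for all $k\le t$, the contraction would force $\bE\|\vw_{t+1}\|^2<1/(t+2)$. Jensen, used in its favourable direction, would then push $\bE\eta_{t+1}$ above the stated bound. Item 3 uses the pathwise inequality $\|\vw_{t+1}\|^2\ge\|\vw_t\|^2(1-2H\bar{\sigma}^2\tilde\eta_t)$ to show $t\|\vw_t\|^2\to\infty$ when $\tilde\eta_t=o(1/t)$, which is your ``$c_t\to0$ slowly'' variant.

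One step of your plan is wrong as written: the lower estimate does not ``match''. Pathwise you get $\|\vw_t\|^2\gtrsim\|\vw_0\|^2(t+1)^{-2H\bar{\sigma}^2c}$. With $c=2c_2/(H\underline{\sigma}^2)$ this exponent is $4c_2\bar{\sigma}^2/\underline{\sigma}^2\ge4c_2$ (because $\underline{\sigma}^2\le\cL(\vv)\le\bar{\sigma}^2$), against $c_2$ in the upper estimate. So you only obtain $\eta_t\lesssim(t+1)^{-1+2(H-2)c_2\bar{\sigma}^2/\underline{\sigma}^2}$, far above $(t+1)^{(H-4)/2}$. The pathwise route gives item 2 only for a separate, much smaller constant ($c\le 1/(2H\bar{\sigma}^2)$). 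It also needs $\bE\|\vw_0\|^{-(H-2)}<\infty$, which $\bE\|\vw_0\|^2=1$ does not supply; the paper's item 3 tacitly needs the same when it passes to expectations.

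Conversely, the ``delicate check'' you anticipate for the cap is unnecessary. Item 2 asks only for $\cO(1/t)$, and the capped stepsize gives $\tilde\eta_t\le c/(t+1)$ and $\eta_t\le\frac{2c_2}{H\underline{\sigma}^2}(t+1)^{(H-4)/2}$ pathwise, so item 2 holds by construction. Multiplying by a deterministic $\epsilon_t\to0$ gives item 3 the same way. Nothing requires items 1 and 2 to be realized by one stepsize.

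Your construction buys a short, rigorous proof of the literal existential claims. It costs that $\tilde\eta_t$ becomes random and trajectory-dependent. The paper instead fixes a deterministic $\tilde\eta_t$ (its Scenario (2)), which is what lets the two coupled runs in Lemma~\ref{lem: generalization error initial} use the same step on the sphere. Its contradiction argument also targets a more informative statement: \emph{any} stepsize with deterministic effective stepsize and expected size below the bound has $\tilde\eta_t=\cO(1/t)$.
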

Lemma~\ref{lem: (effective) stepsize} is the core of our proof, which implies that while one may opt for a stepsize of the order $\Omega(t^{\frac{\homo - 4}{2}})$, its manifestation as an effective stepsize upon projection onto a sphere can be of order $\Theta(1/t)$. 
This characteristic inherently holds the potential for establishing algorithmic stability. 
We remark that the existence arguments here are general since one could set $\tilde{\eta}_t = \Theta(1/t)$ with different constants.
Hence, even in the context of existence arguments, there persists a degree of freedom attributed to constants.
{
\textbf{Discussion on Stochasticity and Stepsize Regimes.}
We clarify the stochastic nature of the relationship $\tilde{\eta}_t = \eta_t \|\vw_t\|^{\homo-2}$, which distinguishes our regime from standard SGD. 
In the SGD setting, the weight norm $\|\vw_t\|$ is a random variable dependent on the sampling trajectory. 
Consequently, $\eta_t$ and $\tilde{\eta}_t$ cannot simultaneously be deterministic unless $\|\vw_t\|$ remains constant, which generally does not hold. 

Three scenarios arise: 
(1) $\eta_t$ is deterministic, rendering $\tilde{\eta}_t$ random; 
(2) $\tilde{\eta}_t$ is deterministic, which requires $\eta_t$ to be a random variable; 
or (3) both are random variables. 
Our analysis adopts Scenario (2). 
Specifically, we impose a deterministic decay on the effective stepsize $\tilde{\eta}_t$ (\eg, $\Theta(1/t)$), to ensure simple and stable update rule on the unit sphere. 

Therefore, the \emph{existence} established in our statements is twofold: it asserts the existence of random variables $\eta_t$ capable of rendering the effective stepsize (a) deterministic, and (b) falling within the regime where it, as a non-random variable, satisfies the stated asymptotic bounds. 
}

\textbf{Discussion on $\log(T)$ convergence. }
We acknowledge the work by \citet{DBLP:conf/iclr/SoudryHNS18}, which similarly explores a classification setting with direction convergence.
Different from the $\sqrt{T}$ rate proposed in Lemma~\ref{lem: (effective) stepsize}, the weight norm in \citet{DBLP:conf/iclr/SoudryHNS18} demonstrates an approximate $\log T$ growth.
This distinction arises due to variations in loss forms, leading to non-conflicting conclusions. 
It's worth noting that deriving an effective stepsize under their particular regime presents considerable complexity.

\begin{proof}[Proof of Lemma~\ref{lem: (effective) stepsize}]
We begin by proving the \emph{first} statement of Lemma~\ref{lem: (effective) stepsize}. 
By Lemma~\ref{lem: weight norm iteration}, if $\tilde{\eta}_t \leq \frac{\homo \underline{\sigma}^2}{2 L^2}$, we have: 
    \begin{equation*}
        \bE_{\cA, \cD} \| \vw_{t+1} \|^2 \leq \bE_{\cA, \cD} \|\vw_{t} \|^2 \left( 1 - \frac{1}{2} \homo \underline{\sigma}^2 \tilde{\eta}_t \right),
    \end{equation*}
Therefore, WLOG\footnote{Setting $\tilde{\eta}_t = \frac{2 c_2}{\homo \underline{\sigma}^2(t+1)}$ with $c_2 \geq 1$ may lead to the requirement that $2 L \leq \homo \underline{\sigma}^2$ to satisfy the condition $\tilde{\eta}_t \leq \frac{\homo \underline{\sigma}^2}{2 L^2}$ for all $t \geq 0$. However, this is not strictly necessary. One can always set $\tilde{\eta}_t = \frac{2 c_2}{\underline{\sigma}^2(t+U)}$ for a sufficiently large $U$, which satisfies the requirement $\tilde{\eta}_t \leq \frac{\homo \underline{\sigma}^2}{2 L^2}$ without influencing the asymptotic order.} setting $\tilde{\eta}_t = \frac{2 c_2}{\homo \underline{\sigma}^2(t+1)}$ with $c_2 \geq 1$ leads to
\begin{equation}\label{eqn: lemma stepsize 2}
    \bE_{\cA, \cD} \| \vw_{t} \|^2 \leq \bE_{\cA, \cD} \| \vw_{t-1} \|^2 (1-1/{(t+1)}) \leq 1/{(t+1)} \bE_{\cA} \| \vw_{0} \|^2 = 1/{(t+1)}.
\end{equation}
We derive the order of the stepsize given the choice of the effective stepsize. 
Notice that based on the order of the weight and the effective stepsize, the stepsize satisfies
\begin{equation*}
\begin{aligned}
    \bE_{\cA, \cD} \eta_t 
    &\overset{(i)}{=} \bE_{\cA, \cD} \frac{\tilde{\eta}_t}{\| \vw_t\|^{\homo - 2}} \\
    &\overset{(ii)}{\geq} \frac{\tilde{\eta}_t}{[\bE_{\cA, \cD} \| \vw_t\|^{2}]^{\frac{\homo - 2}{2}}} \\
    &\overset{(iii)}{\geq} \frac{2 (t+1)^{(\homo - 2)/2}}{\homo \underline{\sigma}^2 (t+1)} = \frac{2}{\homo \underline{\sigma}^2}(t+1)^{(\homo - 4)/2}=\Omega(t^{(\homo - 4)/2}),
\end{aligned}
\end{equation*}
where the equation $(i)$ is due to the definition of $\tilde{\eta}_t$, the inequality $(ii)$ is from Jensen's inequality, and the inequality $(iii)$ is based on \eqref{eqn: lemma stepsize 2}. 
Specifically, by taking $f(x) = x^{-(\homo - 2)/2}$ which is convex when $x>0$ and $\homo > 2$, we get from Jensen's inequality that $\bE f(x) \geq f(\bE x)$. 
Therefore, setting $x = \| \vw\|^2$ leads to $\bE \| \vw\|^{-(\homo - 2)} \geq [{\bE \| \vw\|^2}]^{-(\homo - 2)/2}$. 

We next prove the \emph{second} statement of Lemma~\ref{lem: (effective) stepsize} by contradiction. 
In this case, it still holds for the weight norm that: 
\begin{equation*}
    \bE_{\cA, \cD} \| \vw_{t+1} \|^2 \leq \bE_{\cA, \cD} \|\vw_{t} \|^2 \left( 1 - \frac{1}{2} \homo \underline{\sigma}^2 \tilde{\eta}_t \right) \leq \bE_{\cA} \|\vw_{0} \|^2 \exp\left(- \frac{1}{2} \homo \underline{\sigma}^2 \sum_{k=0}^t \tilde{\eta}_k\right).
\end{equation*}
Suppose, for the sake of contradiction, that a stepsize satisfying $\bE_{\cA, \cD}\eta_t \le \frac{2c_2}{\homo \underline{\sigma}^2} \left(t+1\right)^\frac{\homo-4}{2}=\cO(t^\frac{\homo-4}{2})$ could lead to a larger effective stepsize $\tilde{\eta}_t > \frac{2 c_2}{\homo \underline{\sigma}^2}\cdot\frac{1}{t+1}$. 
Given this assumption on $\tilde{\eta}_t$, the summation in the exponent is lower bounded by: 
\begin{equation*}
    \sum_{k=0}^t \tilde{\eta}_k > \frac{2 c_2}{\homo \underline{\sigma}^2}\sum_{k=0}^t\frac{1}{k+1}\ge\left. \ln(k+1)\right|_0^{t+1}=\frac{2 c_2}{\homo \underline{\sigma}^2}\ln(t+2). 
\end{equation*}
Consequently, the weight norm would decay at a rate of $\bE_{\cA, \cD} \| \vw_{t+1} \|^2 < \frac{1}{t+2} = \cO(1/t)$.
This rapid decay implies that the stepsize satisfies:
\begin{equation*}
    \bE_{\cA, \cD} \eta_{t+1} > \frac{\frac{2 c_2}{\homo \underline{\sigma}^2}\cdot\frac{1}{t+2}}{[1/(t+2)]^{(\homo - 2)/2}} = \frac{2 c_2}{\homo \underline{\sigma}^2} \cdot\frac{(t+2)^{(\homo - 2)/2}}{t+2}  = \frac{2 c_2}{\homo \underline{\sigma}^2} \cdot (t+2)^{(\homo - 4)/2}. 
\end{equation*}
which contradicts the initial hypothesis that $\bE_{\cA, \cD} \eta_t \le \frac{2c_2}{\homo \underline{\sigma}^2} \left(t+1\right)^\frac{\homo-4}{2}$. 

Therefore, given the stepsize $\bE_{\cA, \cD} \eta_t \le \frac{2c_2}{\homo \underline{\sigma}^2} \left(t+1\right)^\frac{\homo-4}{2}$, the deterministic effective stepsize must satisfy $\tilde{\eta}_t = \cO(1/t)$. 

Finally, we prove the \emph{third} statement of Lemma~\ref{lem: (effective) stepsize}. 
According to Lemma~\ref{lem: norm iterate}, for any given sampling trajectory, the weight norm satisfies:
\begin{equation*}
\|\vw_{t+1}\|^2 = \|\vw_t\|^2 (1 - 2H \tilde{\eta}_t \ell(\vv_t) + \tilde{\eta}_t^2 \|\nabla \ell(\vv_t)\|^2) \ge \|\vw_t\|^2 (1 - 2H \bar{\sigma}^2 \tilde{\eta}_t).
\end{equation*} 
Define an auxiliary sequence $r_t \triangleq t \|\vw_t\|^2$. 
Since $\tilde{\eta}_t = o(1/t)$, the iteration of $r_t$ is governed by: 
\begin{equation*}
\frac{r_{t+1}}{r_t} = \frac{t+1}{t} \cdot \frac{\|\vw_{t+1}\|^2}{\|\vw_t\|^2} \ge \left( 1 + \frac{1}{t} \right) (1 - 2H \bar{\sigma}^2 \tilde{\eta}_t) = 1 + \frac{1}{t} - 2H \bar{\sigma}^2 \tilde{\eta}_t + o(\frac{1}{t^2}).
\end{equation*} 
Then, there exists a $t_0$ such that for all $t > t_0$, the term $2H \bar{\sigma}^2 \tilde{\eta}_t < \frac{1}{2t}$. 
It holds that:
\begin{equation}
\label{eq:rt}
\frac{r_{t+1}}{r_t} > 1 + \frac{1}{2t}.
\end{equation}
Iterating \eqref{eq:rt} from $t_0$ to $T-1$, we obtain:
\begin{equation*} 
r_T \ge r_{t_0} \prod_{t=t_0}^{T-1} (1 + \frac{1}{2t}) \ge r_{t_0}\left(1+\sum_{t=t_0}^{T-1}\frac{1}{2t}\right).
\end{equation*}
The divergence of $\sum \frac{1}{t}$ implies $\lim_{T\to\infty} r_T = \infty$, which entails $\|\vw_T\|^2 = \omega(T^{-1})$. 
Recall that $\tilde{\eta}_t \triangleq  \|\vw_t\|^{\homo-2}\eta_t$ from \eqref{eqn: updating rule}. 
Substituting the asymptotic order into this definition yields:
\begin{equation*}
\eta_t = \frac{\tilde{\eta}_t}{[\|\vw_t\|^2]^{\frac{\homo-2}{2}}} \\
=  \frac{o(1/t)}{[\omega(1/t)]^{\frac{\homo-2}{2}}} \\
= o(t^{\frac{\homo-4}{2}}). 
\end{equation*}
Notably, this upper bound holds with $t_0$ independent of sampling randomness. 
Taking the expectation yields $\bE_{\cA, \cD}\eta_t=o(t^{\frac{\homo-4}{2}})$. 
This completes the proof. 
\end{proof}
\subsubsection{Generalization Gap}
We next show in Lemma~\ref{lem: generalization error initial} that under the effective stepsize $\tilde{\eta}_t = \Theta(1/t)$, the generalization gap (or, algorithmic stability) can be bounded. 
Notably, the generalization gap is non-decreasing with respect to the effective stepsize sequence. 

\begin{lemma}[Generalization performance under effective stepsize]
\label{lem: generalization error initial}
Under the effective stepsize $\tilde{\eta}_t = \frac{2 c_2}{\homo\underline{\sigma}^2(t+2)}$ with $c_2 \geq 1$, assuming that the function is $L$-Lipschitz and $\gamma$-$\beta$ approximately smooth, it holds that
\begin{equation*}
\bE_{\cA, \cD} \cL(\vv_t) - \cL_n(\vv_t) \leq   (L m_2)^{\frac{1}{m_1 + 1}}\left[1 + \frac{1}{m_1}\right] \frac{T^\frac{m_1}{m_1 + 1}}{n},
\end{equation*}
where $m_1 = \frac{2c}{\underline{\sigma}^2} [\bar{\sigma}^2 + \frac{\beta}{\homo}]$, and $m_2 = \frac{4c}{\homo \underline{\sigma}^2} (L + \gamma n)$.
\end{lemma}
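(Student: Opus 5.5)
Following the Hardt--Recht--Singer template for non-convex SGD, carried out on the unit sphere rather than in weight space. Let $S,S'$ differ in one sample and run coupled SGD (same indices, same initialization); let $\vv_t,\vv'_t$ be the directions and $\delta_t=\|\vv_t-\vv'_t\|$. Since the loss is $L$-Lipschitz on the sphere, the gap is bounded by the uniform stability $L\,\bE\delta_T$ together with the $t_0$ trick: condition on the event that the differing index is not picked before step $t_0$. This gives
\begin{equation*}
\bE_{\cA,\cD}\,\cL(\vv_T)-\hat{\cL}(\vv_T)\le \frac{t_0}{n}\bar{\sigma}^2+L\,\bE[\delta_T\mid \delta_{t_0}=0].
\end{equation*}

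The first step is the update rule on the sphere. By homogeneity (Lemma~\ref{lem: L-homogeneous property}), $\nabla_{\vw}\ell_t(\vw_t)=\|\vw_t\|^{H-1}\nabla_{\vv}\ell_t(\vv_t)$. Hence
\begin{equation*}
\vv_{t+1}=\frac{\|\vw_t\|}{\|\vw_{t+1}\|}\bigl(\vv_t-\tilde{\eta}_t\nabla_\vv\ell_t(\vv_t)\bigr),
\end{equation*}
where the ratio satisfies $\|\vw_t\|^2/\|\vw_{t+1}\|^2=(1+\tilde\eta_t^2\|\nabla\ell_t\|^2-2H\tilde\eta_t\ell_t(\vv_t))^{-1}$ by item 3 of Lemma~\ref{lem: L-homogeneous property}.

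\emph{Same-sample step} (probability $1-1/n$). Write the map as a normalized gradient step $G(\vv)=(\vv-\tilde\eta\nabla\ell(\vv))/\|\vv-\tilde\eta\nabla\ell(\vv)\|$. Split $\nabla\ell=\nabla\bar\ell+(\nabla\ell-\nabla\bar\ell)$. The smooth part is $(1+\tilde\eta\beta)$-expansive. The approximation error contributes an additive $2\tilde\eta\gamma$. Normalization introduces the factor $1/\|\vv-\tilde\eta\nabla\ell\|$, which is roughly $1+H\tilde\eta\,\ell(\vv)\le1+H\tilde\eta\bar\sigma^2$ by Assumption~\ref{assump: loss bound}; its variation between the two trajectories is controlled by Lipschitzness of $\ell$ and of $\nabla\bar\ell$. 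Altogether I aim for
\begin{equation*}
\delta_{t+1}\le\bigl(1+c\,\tilde\eta_t(H\bar\sigma^2+\beta)\bigr)\delta_t+c\,\tilde\eta_t\gamma .
\end{equation*}

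\emph{Differing-sample step} (probability $1/n$). Here the bound is $\delta_{t+1}\le(1+c\tilde\eta_tH\bar\sigma^2)\delta_t+2c\tilde\eta_tL$.

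Combining the two cases in expectation gives
\begin{equation*}
\bE\delta_{t+1}\le\bigl(1+\tilde\eta_t\,c'(H\bar\sigma^2+\beta)\bigr)\bE\delta_t+c'\tilde\eta_t\Bigl(\gamma+\tfrac{L}{n}\Bigr).
\end{equation*}
Plugging in $\tilde\eta_t=\frac{2c_2}{H\underline\sigma^2(t+2)}$ makes the multiplicative factor at most $\exp\bigl(\frac{m_1}{t}\bigr)$-like, with $m_1\propto\frac{c_2}{\underline\sigma^2}(\bar\sigma^2+\beta/H)$. The additive term is $\frac{m_2}{n\,t}$ with $m_2\propto\frac{c_2}{H\underline\sigma^2}(L+\gamma n)$. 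Unrolling from $t_0$, using $\prod_{k=t+1}^T(1+m_1/k)\le(T/t)^{m_1}$, yields
\begin{equation*}
\bE[\delta_T\mid\delta_{t_0}=0]\le\frac{m_2}{n\,m_1}\Bigl(\frac{T}{t_0}\Bigr)^{m_1}.
\end{equation*}
Finally, minimize
\begin{equation*}
\frac{t_0}{n}+\frac{Lm_2}{nm_1}\Bigl(\frac{T}{t_0}\Bigr)^{m_1}
\end{equation*}
over $t_0$, exactly as in Hardt et al. The optimum is $t_0=(Lm_2)^{1/(m_1+1)}T^{m_1/(m_1+1)}$, which produces $(Lm_2)^{\frac1{m_1+1}}(1+\frac1{m_1})\frac{T^{m_1/(m_1+1)}}{n}$. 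The $\bar\sigma^2$ in front of $t_0/n$ is absorbed, or the sup-loss is normalized, as in the paper's constants.

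The main obstacle is the expansiveness of the normalized map, i.e.\ controlling the differing ratios $\|\vw_t\|/\|\vw_{t+1}\|$ versus $\|\vw'_t\|/\|\vw'_{t+1}\|$ across the two runs. I would handle it by working directly with $G$ as a map on the sphere, bounding its Lipschitz constant via the derivative of $x\mapsto x/\|x\|$ restricted to vectors of norm in $[1-H\tilde\eta\bar\sigma^2,\,1+\cdots]$. The lower bound $\hat{\cL}\ge\underline\sigma^2/2$ is needed only to make the deterministic effective stepsize consistent with the norm decay of Lemma~\ref{lem: (effective) stepsize}, which is why $\underline\sigma^2$ appears in $m_1,m_2$ through $\tilde\eta_t$.
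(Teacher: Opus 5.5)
Your proposal follows the paper's proof essentially step for step: Hardt et al.'s $t_0$-conditioning carried out on the sphere, the same-sample bound $(1+\tilde{\eta}_t\beta)\delta_t+2\tilde{\eta}_t\gamma$ and different-sample bound $\delta_t+2\tilde{\eta}_tL$ for the unnormalized step $\bar{\vv}_{t+1}=\vv_t-\tilde{\eta}_t\nabla\ell_t(\vv_t)$, a multiplicative normalization factor $1+4c_2\bar{\sigma}^2/(\underline{\sigma}^2(t+2))$, then the same unrolling and the same choice of $t_0$. For your ``main obstacle'', the paper (Lemmas~\ref{lem: norm ratio} and~\ref{lem: relationship between its projection}) uses the elementary fact $\bigl\|a/\|a\|-b/\|b\|\bigr\|\le\|a-b\|/\min(\|a\|,\|b\|)$, which follows from nonexpansiveness of projection onto a ball. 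That fact gives exactly the factor you want, and it avoids both the non-convexity of the shell that a derivative-along-the-segment argument would have to handle and your alternative of tracking the normalizer's variation through Lipschitzness of $\ell$, which is unnecessary and would wrongly put $L$ into $m_1$.
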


\begin{proof}[Proof of Lemma~\ref{lem: generalization error initial}]
Our results are partly inspired by the results in \citet{DBLP:conf/icml/HardtRS16}.
We focus on weight directions $\vv_t$ and $\vv^\prime_t$ which are trained on dataset differing in at most one sample over $t$ iterations. To bound the algorithmic stability, the core is to bound the difference between $\vv_{t+1}$ and $\vv^\prime_{t+1}$, which is denoted by $\Delta(\vv_{t+1}, \vv^\prime_{t+1}) = \| \vv_{t+1} - \vv^\prime_{t+1} \|$.
To do so, we first denote the weight after one iteration by $\bar{\vv}_{t+1} = \vv_t - \tilde{\eta}_t \nabla_{\vv_t} \ell_t(\vv_t)$. 

When choosing the same sample during the training,
\begin{equation}\label{eqn: lemma gen 1}
    \begin{split}
        &\Delta(\bar{\vv}_{t+1}, \bar{\vv}^\prime_{t+1}) \\
        = & \| \bar{\vv}_{t+1} - \bar{\vv}^\prime_{t+1} \| \\
        = & \|( \vv_t - \tilde{\eta}_t \nabla_{\vv_t} \ell_t(\vv_t) )- ( \vv_t^\prime - \tilde{\eta}_t \nabla_{\vv_t^\prime} \ell_t(\vv_t^\prime) ) \| \\ 
        \leq & \|\vv_t -  \vv_t^\prime \| + \tilde{\eta}_t \|\nabla_{\vv_t} \ell_t(\vv_t) - \nabla_{\vv_t^\prime} \ell_t(\vv_t^\prime) \|.
    \end{split}
\end{equation}
Note that the gradient difference can be bounded by Assumption~\ref{assump: approximate smooth and lipshitz} using a $\beta$-smooth function $\bar{\ell}$, then we have that
\begin{equation}\label{eqn: lemma gen 2}
    \begin{split}
        & \|\nabla_{\vv_t} \ell_t(\vv_t) - \nabla_{\vv_t^\prime} \ell_t(\vv_t^\prime) \| \\ 
        = & \|\nabla_{\vv_t} \ell_t(\vv_t) - \nabla_{\vv_t} \bar{\ell}_t(\vv_t) + \nabla_{\vv_t}\bar{\ell}_t(\vv_t) - \nabla_{\vv_t^\prime} \bar{\ell}_t(\vv_t^\prime) + \nabla_{\vv_t^\prime} \bar{\ell}_t(\vv_t^\prime)- \nabla_{\vv_t^\prime} \ell_t(\vv_t^\prime) \| \\
        \leq & \|\nabla_{\vv_t} \ell_t(\vv_t) - \nabla_{\vv_t} \bar{\ell}_t(\vv_t) \|  + \| \nabla_{\vv_t}\bar{\ell}_t(\vv_t) - \nabla_{\vv_t^\prime} \bar{\ell}_t(\vv_t^\prime) \| + \| \nabla_{\vv_t^\prime} \bar{\ell}_t(\vv_t^\prime)- \nabla_{\vv_t^\prime} \ell_t(\vv_t^\prime) \| \\
        \leq & 2 \gamma + \beta \|\vv_t -  \vv_t^\prime \|.
    \end{split}
\end{equation}
By plugging \eqref{eqn: lemma gen 1} into \eqref{eqn: lemma gen 2}, we obtain
\begin{equation}\label{eqn: lemma gen 3}
    \Delta(\bar{\vv}_{t+1}, \bar{\vv}^\prime_{t+1}) \leq (1 + \tilde{\eta}_t \beta) \Delta(\bar{\vv}_{t}, \bar{\vv}^\prime_{t}) + 2 \tilde{\eta}_t \gamma.
\end{equation}

When choosing a different sample during the training, it similarly holds that
\begin{equation}\label{eqn: lemma gen 4}
    \begin{split}
        &\Delta(\bar{\vv}_{t+1}, \bar{\vv}^\prime_{t+1}) \\
        = & \| \bar{\vv}_{t+1} - \bar{\vv}^\prime_{t+1} \| \\
        = & \|( \vv_t - \tilde{\eta}_t \nabla_{\vv_t} \ell_t(\vv_t) )- ( \vv_t^\prime - \tilde{\eta}_t \nabla_{\vv_t^\prime} \ell_t(\vv_t^\prime) ) \| \\ 
        \leq & \|\vv_t -  \vv_t^\prime \| + \tilde{\eta}_t \|\nabla_{\vv_t} \ell_t(\vv_t) - \nabla_{\vv_t^\prime} \ell_t(\vv_t^\prime) \| \\
        \leq & \Delta(\bar{\vv}_{t}, \bar{\vv}^\prime_{t}) + 2 \tilde{\eta}_t L.
    \end{split}
\end{equation}

Due to the randomness of SGD, where with probability $1 - 1/n$ the same sample is chosen and with probability $1/n$ a different sample is selected, based on \eqref{eqn: lemma gen 3} and \eqref{eqn: lemma gen 4}, it holds that
\begin{equation*}
    \begin{split}
        &\bE_{\cA, \cD} \Delta(\bar{\vv}_{t+1}, \bar{\vv}^\prime_{t+1}) \\
        \leq & \bE_{\cA, \cD} (1 - \frac{1}{n} ) \left[(1 + \tilde{\eta}_t \beta) \Delta(\bar{\vv}_{t}, \bar{\vv}^\prime_{t}) + 2 \tilde{\eta}_t \gamma\right] + \frac{1}{n} [\Delta({\vv}_{t}, {\vv}^\prime_{t}) + 2 \tilde{\eta}_t L] \\
        = & \bE_{\cA, \cD} \left(1 + \tilde{\eta}_t \beta(1 - \frac{1}{n}) \right) \Delta({\vv}_{t}, {\vv}^\prime_{t}) + \frac{2 \tilde{\eta}_t L}{n} + (1 - \frac{1}{n} )2 \tilde{\eta}_t \gamma\\
        \leq & \bE_{\cA, \cD} (1 + \tilde{\eta}_t \beta) \Delta({\vv}_{t}, {\vv}^\prime_{t}) + \frac{2 \tilde{\eta}_t L}{n} + 2 \tilde{\eta}_t \gamma.
    \end{split}
\end{equation*}

By the results in Lemma~\ref{lem: norm ratio} and Lemma~\ref{lem: relationship between its projection}, we conclude that 
\begin{equation}
\label{eq:norm ratio bound}
    \begin{split}
        \Delta(\vv_{t+1}, \vv^\prime_{t+1}) 
        \leq  \max\left\{ \frac{\|\vw_t \|}{\|\vw_{t+1} \|}, \frac{\|\vw_t^\prime \|}{\|\vw_{t+1}^\prime \|} \right\} \Delta(\bar{\vv}_{t+1}, \bar{\vv}^\prime_{t+1}) 
        \leq  \left[1 + 4\frac{c_2 \bar{\sigma}^2}{\underline{\sigma}^2} \frac{1}{t+2}\right] \Delta(\bar{\vv}_{t+1}, \bar{\vv}^\prime_{t+1}).
    \end{split}
\end{equation}
When $t \geq T_0 =  8 \frac{c_2 \bar{\sigma}^2}{\underline{\sigma}^2} = \cO(1)$, taking expectations over the algorithms and dataset and plugging into $\tilde{\eta}_t = \frac{2 c_2}{\homo \underline{\sigma}^2 (t+2)}$ leads to 
\begin{equation}\label{eqn: lemma gen 5}
    \begin{split}
        &\bE_{\cA, \cD} \Delta(\vv_{t+1}, \vv^\prime_{t+1}) \\
        \leq & \left[1 + 4\frac{c_2 \bar{\sigma}^2}{\underline{\sigma}^2} \frac{1}{t+2}\right] \bE_{\cA, \cD} \Delta(\bar{\vv}_{t+1}, \bar{\vv}^\prime_{t+1}) \\
        \leq & \left[1 + 4\frac{c_2 \bar{\sigma}^2}{\underline{\sigma}^2} \frac{1}{t+2}\right]\left[(1 + \tilde{\eta}_t \beta) \bE_{\cA, \cD} \Delta({\vv}_{t}, {\vv}^\prime_{t}) + \frac{2 \tilde{\eta}_t L}{n} + 2 \tilde{\eta}_t \gamma\right] \\
        = & \left[1 + 4\frac{c_2 \bar{\sigma}^2}{\underline{\sigma}^2} \frac{1}{t+2}\right] \left[(1 + \frac{2c_2\beta}{\homo\underline{\sigma}^2 (t+2)} ) \bE_{\cA, \cD} \Delta({\vv}_{t}, {\vv}^\prime_{t}) + \frac{4 c_2 L}{n \homo\underline{\sigma}^2 (t+2)} + \frac{4 c_2}{\homo\underline{\sigma}^2 (t+2)} \gamma\right]\\
        \leq & \left[1 + 4\frac{c_2 \bar{\sigma}^2}{\underline{\sigma}^2} \frac{1}{t+2} + \frac{4 c_2\beta}{\homo\underline{\sigma}^2 (t+2)}\right] \bE_{\cA, \cD}\Delta({\vv}_{t}, {\vv}^\prime_{t}) +\frac{8 c_2 L }{n \homo {\underline{\sigma}^2 (t+2)}} +  \frac{8 c_2\gamma}{\homo\underline{\sigma}^2 (t+2)}\\ 
        \leq & \exp\left(m_1 \frac{1}{t}\right) \bE_{\cA, \cD}\Delta({\vv}_{t}, {\vv}^\prime_{t}) + \frac{m_2}{nt},
    \end{split}
\end{equation}
where $\tilde{\eta}_t \leq 1/\beta$, $m_1 = \frac{4 c_2}{\underline{\sigma}^2} [\bar{\sigma}^2 + \frac{\beta}{\homo}]$, and $m_2 = \frac{8c_2}{\homo \underline{\sigma}^2} (L + \gamma n)$.

According to (3.10) in \citet{DBLP:conf/icml/HardtRS16}, for any given $t_0$, generalization gap can be bounded by 
\begin{equation}
\label{eq:technique}
   \bE_{\cA, \cD} \cL^s_T(\vw) - \hat{\cL}^s_T(\vw) \leq \frac{t_0}{n} + L \bE_{\cA, \cD} (\Delta(\vv_{T}, \vv^\prime_{T}) \ | \ \Delta(\vv_{t_0}, \vv^\prime_{t_0}) = 0).
\end{equation}
For the second term, notice that given $\Delta(\vv_{t_0}, \vv^\prime_{t_0}) = 0$ and \eqref{eqn: lemma gen 5}, it holds that 
\begin{equation*}
    \begin{split}
        &\bE_{\cA, \cD} \Delta(\vv_{T}, \vv^\prime_{T}) \\
        \leq & \sum_{t = t_0 + 1}^{T} \{ \prod_{k=t+1}^T \exp(\frac{m_1}{k}) \} \frac{m_2}{nt} \\
        = & \sum_{t = t_0 + 1}^{T} \exp(m_1 \sum_{k=t+1}^T \frac{1}{k}) \frac{m_2}{nt} \\
        \leq & \sum_{t = t_0 + 1}^{T} \exp(m_1 \log(T/t)) \frac{m_2}{nt} \\
        = & \frac{m_2}{n} T^{m_1} \sum_{t = t_0 + 1}^{T} \frac{1}{t^{m_1 + 1}} \\
        \leq & \frac{m_2}{n} \frac{1}{m_1} (\frac{T}{t_0})^{m_1}.
    \end{split}
\end{equation*}
Therefore, one can bound the generalization gap as
\begin{equation*}
    \bE_{\cA, \cD} \cL^s_T(\vw) - \hat{\cL}^s_T(\vw) \leq \frac{t_0}{n} + L \frac{m_2}{n} \frac{1}{m_1} (\frac{T}{t_0})^{m_1}.
\end{equation*}
By choosing\footnote{Here the choice of $t_0$ naturally holds that $t_0 \geq T_0 =  8 \frac{c_2 \bar{\sigma}^2}{\underline{\sigma}^2} = \cO(1).$} $t_0 = (L m_2)^{\frac{1}{m_1 + 1}} T^\frac{m_1}{m_1 + 1}$, the generalization gap is bounded by
\begin{equation*}
    \bE_{\cA, \cD} \cL^s_T(\vw) - \hat{\cL}^s_T(\vw) \leq (L m_2)^{\frac{1}{m_1 + 1}}\left[1 + \frac{1}{m_1}\right] \frac{T^\frac{m_1}{m_1 + 1}}{n}.
\end{equation*}
We finally remark that here it requires $t_0 < T$, which leads to $L m_2 < T$.
Recall that $m_2 = \frac{8c_2}{\homo \underline{\sigma}^2} (L + \gamma n)$ and therefore it suffices to require $\gamma = o(T/n)$.
\end{proof}

\begin{lemma}
    \label{lem: norm ratio}
    Under the Assumptions in Lemma~\ref{lem: generalization error initial}, there exists a $T_0 = 8 \frac{c_2 \bar{\sigma}^2}{\underline{\sigma}^2} = \cO(1)$ such that for $t > T_0$,
    \begin{equation}
        \frac{\| \vw_t \|}{\| \vw_{t+1} \|}  \leq 1 + 4\frac{c_2 \bar{\sigma}^2}{\underline{\sigma}^2} \frac{1}{t+2}.
    \end{equation}
\end{lemma}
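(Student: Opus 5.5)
We start from the norm-iteration identity in Lemma~\ref{lem: L-homogeneous property} (item 3), applied pathwise along the SGD trajectory:
\begin{equation*}
\frac{\|\vw_{t+1}\|^2}{\|\vw_t\|^2} = 1 + \tilde{\eta}_t^2 \|\nabla_{\vv_t}\ell_t(\vv_t)\|^2 - 2\homo \tilde{\eta}_t \ell_t(\vv_t).
\end{equation*}
The ratio $\|\vw_t\|/\|\vw_{t+1}\|$ is large only when this quantity is small. So we only need a lower bound, and we drop the nonnegative quadratic term. The individual loss on the sphere satisfies $\ell_t(\vv_t)\le\bar{\sigma}^2$ by Assumption~\ref{assump: loss bound}. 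With the effective stepsize $\tilde{\eta}_t = \frac{2c_2}{\homo\underline{\sigma}^2(t+2)}$ fixed in Lemma~\ref{lem: generalization error initial}, this gives
\begin{equation*}
\frac{\|\vw_{t+1}\|^2}{\|\vw_t\|^2} \ge 1 - 2\homo\bar{\sigma}^2\tilde{\eta}_t = 1 - \frac{4c_2\bar{\sigma}^2}{\underline{\sigma}^2}\cdot\frac{1}{t+2} \triangleq 1 - x_t .
\end{equation*}
This bound is deterministic, so it holds for both trajectories $\vw_t$ and $\vw_t'$, as required in \eqref{eq:norm ratio bound}.

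Next we invert the bound. We get $\|\vw_t\|/\|\vw_{t+1}\| \le (1-x_t)^{-1/2}$. The threshold $T_0 = 8c_2\bar{\sigma}^2/\underline{\sigma}^2$ is chosen so that $t > T_0$ implies $x_t < \frac{4c_2\bar{\sigma}^2/\underline{\sigma}^2}{T_0} = \frac{1}{2}$. It then suffices to verify the elementary inequality $(1-x)^{-1/2} \le 1 + x$ for $x\in[0,1/2]$, which is equivalent to $(1+x)^2(1-x)\ge 1$. Expanding gives $(1+x)^2(1-x) - 1 = x - x^2 - x^3 = x(1-x-x^2)$, and this is nonnegative when $x\le 1/2$ because $1 - \frac12 - \frac14 > 0$. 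Substituting $x = x_t$ yields exactly $1 + 4\frac{c_2\bar{\sigma}^2}{\underline{\sigma}^2}\frac{1}{t+2}$.

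The only delicate point is the positivity step, namely $1-x_t>0$, so that the square root and the division are legitimate. This is precisely what forces the restriction $t>T_0$, and the constant $T_0$ in the statement is designed to guarantee it. The lower bound $\inf_t\hat{\cL}(\vv_t)\ge\underline{\sigma}^2/2$ is not needed here; it matters only for the opposite direction, the norm decay in Lemma~\ref{lem: (effective) stepsize}. Similarly, the Lipschitz constant enters only through the quadratic term, and that term is discarded.
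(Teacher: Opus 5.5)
Your proposal is correct and takes essentially the same route as the paper. Both arguments start from the norm-iteration identity in Lemma~\ref{lem: L-homogeneous property}, drop the nonnegative quadratic term, and bound $\ell_t(\vv_t)\le\bar{\sigma}^2$. Both then substitute $\tilde{\eta}_t=\frac{2c_2}{\homo\underline{\sigma}^2(t+2)}$ and invert via $(1-u)^{-1/2}\le 1+u$ on $[0,1/2]$, using $t>T_0$ to guarantee $u\le 1/2$. Your explicit check of that elementary inequality and your remark that the bound holds pathwise for both trajectories are small but welcome additions to what the paper states.
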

\begin{proof}[Proof of Lemma~\ref{lem: norm ratio}]
Due to Argument~3 in Lemma~\ref{lem: L-homogeneous property}, it holds that
\begin{equation*}
    \frac{\| \vw_t \|}{\| \vw_{t+1} \|} =  [1 + \tilde{\eta}_t^2 \|\nabla_{\vv_t} \ell_t(\vv_t) \|^2 - 2 \homo \tilde{\eta}_t \ell_t(\vv_t)]^{-1/2}.
\end{equation*}
By plugging into the effective stepsize $\tilde{\eta}_t = \frac{2 c_2}{\homo \underline{\sigma}^2(t+2)}$ and the loss upper bound in Assumption~\ref{assump: loss bound}, it holds that
\begin{equation*}
    \frac{\| \vw_t \|}{\| \vw_{t+1} \|} \leq \left[1 - 2 \homo \tilde{\eta}_t \ell_t(\vv_t)\right]^{-1/2} \leq \left[1 -  4\frac{c_2 \bar{\sigma}^2}{\underline{\sigma}^2} \frac{1}{t+2}\right]^{-1/2}.
\end{equation*}
Furthermore, due to the fact that $(1-u)^{-1/2} \leq 1+u$ when $u \in [0, 1/2]$,  
\begin{equation*}
    \frac{\| \vw_t \|}{\| \vw_{t+1} \|} \leq 1 + 4\frac{c_2 \bar{\sigma}^2}{\underline{\sigma}^2} \frac{1}{t+2},
\end{equation*}
which is guaranteed by the condition $t > T_0 = 8 \frac{c_2 \bar{\sigma}^2}{\underline{\sigma}^2}$.
\end{proof}

\begin{lemma}
\label{lem: relationship between its projection}
    Under the Assumptions in Lemma~\ref{lem: generalization error initial}, it holds that 
    \begin{equation}
    \label{eqn: relationship between its projection}
    \Delta(\vv_{t+1}, \vv^\prime_{t+1}) \leq \max\left\{ \frac{\|\vw_t \|}{\|\vw_{t+1} \|}, \frac{\|\vw_t^\prime \|}{\|\vw_{t+1}^\prime \|} \right\} \Delta(\bar{\vv}_{t+1}, \bar{\vv}^\prime_{t+1}).
\end{equation}
\end{lemma}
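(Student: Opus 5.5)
The plan is to show that the unnormalized one-step iterate $\bar{\vv}_{t+1} = \vv_t - \tilde{\eta}_t \nabla_{\vv_t}\ell_t(\vv_t)$ is exactly a rescaling of $\vw_{t+1}$. Then $\vv_{t+1}$ is just the radial projection of $\bar{\vv}_{t+1}$, and an elementary inequality on normalizing two vectors finishes the proof.

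\textbf{Step 1 (rescaling identity).} Since $\ell_t$ is $\homo$-homogeneous in $\vw$, differentiating $\ell_t(c\vw) = c^\homo \ell_t(\vw)$ shows that $\nabla\ell_t$ is $(\homo-1)$-homogeneous. Hence $\nabla_{\vw_t}\ell_t(\vw_t) = \|\vw_t\|^{\homo-1}\nabla_{\vv_t}\ell_t(\vv_t)$. Substituting into \eqref{eqn: iteration} gives
\begin{equation*}
\vw_{t+1} = \|\vw_t\|\bigl(\vv_t - \eta_t\|\vw_t\|^{\homo-2}\nabla_{\vv_t}\ell_t(\vv_t)\bigr) = \|\vw_t\|\,\bar{\vv}_{t+1},
\end{equation*}
using $\tilde{\eta}_t = \eta_t\|\vw_t\|^{\homo-2}$ from Lemma~\ref{lem: L-homogeneous property}. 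Consequently $\vv_{t+1} = \bar{\vv}_{t+1}/\|\bar{\vv}_{t+1}\|$ and $\|\bar{\vv}_{t+1}\| = \|\vw_{t+1}\|/\|\vw_t\|$. The same holds for the primed trajectory with the same deterministic $\tilde{\eta}_t$, which is the reason that regime (Scenario (2)) was adopted. I also need $\|\vw_{t+1}\| > 0$. This follows from the norm iteration in Lemma~\ref{lem: L-homogeneous property} together with $2\homo\tilde{\eta}_t\bar{\sigma}^2 < 1$, which holds for the chosen stepsize once $t$ is large, as in Lemma~\ref{lem: norm ratio}.

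\textbf{Step 2 (normalization inequality).} I will prove that for nonzero vectors $a, b$,
\begin{equation*}
\Bigl\|\frac{a}{\|a\|} - \frac{b}{\|b\|}\Bigr\| \le \frac{\|a-b\|}{\min\{\|a\|,\|b\|\}}.
\end{equation*}
Write $r = \|a\| \le s = \|b\|$ and let $\theta$ be the angle between $a$ and $b$. Then the left side squared is $2 - 2\cos\theta$, and
\begin{equation*}
\|a-b\|^2 - r^2(2-2\cos\theta) = s^2 - r^2 - 2r(s-r)\cos\theta = (s-r)(s + r - 2r\cos\theta) \ge (s-r)^2 \ge 0.
\end{equation*}

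\textbf{Step 3 (conclusion).} Apply Step 2 with $a = \bar{\vv}_{t+1}$ and $b = \bar{\vv}'_{t+1}$. Since $1/\min\{\|a\|,\|b\|\} = \max\{\|\vw_t\|/\|\vw_{t+1}\|,\ \|\vw'_t\|/\|\vw'_{t+1}\|\}$, this yields \eqref{eqn: relationship between its projection} directly.

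The only potentially delicate point is Step 1. It requires $\nabla_{\vv}\ell_t(\vv)$ to denote the ambient gradient of the homogeneous function evaluated at $\vv$, not a Riemannian gradient. The identity must also be taken on a.e./subgradient sense for ReLU-type nonsmoothness, where it still holds because positive scaling preserves the activation pattern. Step 2 is elementary.
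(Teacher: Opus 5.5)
Your proposal is correct and follows the paper's argument. The paper also uses $\vv_{t+1} = \frac{\|\vw_t\|}{\|\vw_{t+1}\|}\bar{\vv}_{t+1}$ and reduces to $\bigl\|\frac{a}{\|a\|}-\frac{b}{\|b\|}\bigr\| \le \frac{\|a-b\|}{\min\{\|a\|,\|b\|\}}$ by radially shrinking the longer of $\bar{\vv}_{t+1},\bar{\vv}'_{t+1}$ to the shorter's length; your identity $(s-r)(s+r-2r\cos\theta)\ge 0$ supplies the justification, which the paper only asserts, that this shrinking does not increase the distance.
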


\begin{proof}[Proof of Lemma~\ref{lem: relationship between its projection}]
We first notice that ${\vv}_{t+1}$ is the projection of $\bar{\vv}_{t+1}$ to a standard sphere.
Therefore, due to the iteration in \eqref{eqn: updating rule}, we derive that 
\begin{equation*}
    {\vv}_{t+1} = \frac{\| \vw_t \|}{ \| \vw_{t+1}\|} \bar{\vv}_{t+1}.
\end{equation*}

We next show the results in \eqref{eqn: relationship between its projection}.
Due to the symmetry, we assume without loss of generality that $\frac{\|\vw_t \|}{\|\vw_{t+1} \|}\leq \frac{\|\vw_t^\prime \|}{\|\vw_{t+1}^\prime \|}$, which indicates that $\| \bar{\vv}^\prime_{t+1} \| \leq \| \bar{\vv}_{t+1} \|$ since $\| \vv_{t+1}\| = \| \vv^\prime_{t+1}\| =1$.
Therefore, it suffices to show that $\Delta(\vv_{t+1}, \vv^\prime_{t+1}) \leq \frac{\|\vw_t^\prime \|}{\|\vw_{t+1}^\prime \|} \Delta(\bar{\vv}_{t+1}, \bar{\vv}^\prime_{t+1}).$
\begin{figure*}[htbp]
  \centering
  \begin{tikzpicture}[scale=0.4]
  \label{fig: illustration of proof}
    \coordinate[label=below:$\vv_{t+1}$] (A) at (-2,0);
    \coordinate[label=below:$\vv_{t+1}^\prime$] (B) at (2,0);
    \coordinate[label=above:$O$] (C) at (0,4);
    \coordinate[label=left:$\bar{\vv}_{t+1}$] (D) at (-4/3,4/3);
    \coordinate[label=left:$\omega \bar{\vv}_{t+1}$] (E) at (-1,2);
    \coordinate[label=right:$\bar{\vv}_{t+1}^\prime$] (F) at (1,2);

    \draw (A) -- (B) -- (C) -- cycle;
    \draw[dashed] (E) -- (F);
    \filldraw[black] (A) circle (2pt);
    \filldraw[black] (B) circle (2pt);
    \filldraw[black] (C) circle (2pt);
    \filldraw[black] (D) circle (2pt);
    \filldraw[black] (E) circle (2pt);
    \filldraw[black] (F) circle (2pt);
    \draw (C) circle ({sqrt{20}});
  \end{tikzpicture}
\end{figure*}

We next only focus on the space spanned by vector $\vv_{t+1}$ and $\vv_{t+1}^\prime$. 
Let $\omega = \frac{\|\bar{\vv}^\prime_{t+1} \|}{\|\bar{\vv}_{t+1}\|}$ denotes a ratio, which guarantees that $\| \omega \bar{\vv}_{t+1}\| = \|\bar{\vv}^\prime_{t+1}  \|$.
 When $\| \bar{\vv}^\prime_{t+1} \| \leq \| \bar{\vv}_{t+1} \|$, we have 
$\Delta(\omega \bar{\vv}_{t+1}, \bar{\vv}_{t+1}^\prime) \leq \Delta( \bar{\vv}_{t+1}, \bar{\vv}_{t+1}^\prime) $.
Therefore, it holds that
\begin{equation*}
\begin{split}
    \Delta(\vv_{t+1}, \vv^\prime_{t+1}) = \frac{\|\vw_t^\prime \|}{\|\vw_{t+1}^\prime \|} \Delta(\omega \bar{\vv}_{t+1}, \bar{\vv}_{t+1}^\prime)  \leq  \frac{\|\vw_t^\prime \|}{\|\vw_{t+1}^\prime \|} \Delta( \bar{\vv}_{t+1}, \bar{\vv}_{t+1}^\prime),
\end{split}
\end{equation*}
where the first equation is due to the projection. 
This completes the proof. 
\end{proof}

\subsection{Proof of Corollary~\ref{cor: extension of loss}}
\label{subsec:proof_cor_5_1}
\begin{proof} 
For clarity, this proof follows the derivation of the second statement in Lemma~\ref{lem: (effective) stepsize}. 
The rest follows from a similar argument. 
The proof proceeds by substituting specific constants with new values while reusing the analytical framework established in the proof of Theorem~\ref{thm: homo leads to generalization}. 

Let $c_2' = k_1 c_2$.  
We first show that for the identical stepsizes satisfying $\bE_{\cA, \cD} \eta_t \le \frac{2c_2}{\homo \underline{\sigma}^2} (t+1)^\frac{\homo-4}{2}$, the effective stepsize in Corollary~\ref{cor: extension of loss} is bounded by:
\begin{equation}
\label{eq:trans_effective_stepsize}
    \tilde{\eta}_t \le \frac{2 c_2'}{\homo \underline{\sigma}^2(t+1)}.
\end{equation}

Suppose, for the sake of contradiction, there exists some $t$ such that $\tilde{\eta}_t > \frac{2 c_2'}{\homo \underline{\sigma}^2(t+1)}$. 
We have: 
\begin{align*}
    \bE_{\cA, \cD} \eta_{t+1} &= \bE_{\cA, \cD} \frac{\tilde{\eta}_{t+1}}{\rho_{t+1}\| \vw_{t+1}\|^{\homo - 2}} 
    \geq \frac{\tilde{\eta}_{t+1}}{\rho_{t+1}[\bE_{\cA, \cD} \| \vw_{t+1}\|^{2}]^{(\homo - 2)/2}}
    \geq \frac{1}{k_1} \cdot \frac{\tilde{\eta}_{t+1}}{[\bE_{\cA, \cD} \| \vw_{t+1}\|^{2}]^{(\homo - 2)/2}} \\
    &> \frac{1}{k_1} \cdot \frac{\frac{2 c_2'}{\homo \underline{\sigma}^2 (t+2)}}{(1/(t+2))^{\frac{\homo-2}{2}}}
    = \frac{2 (c_2'/k_1)}{\homo \underline{\sigma}^2} (t+2)^{\frac{\homo-4}{2}}.
\end{align*}
Substituting $c_2'/k_1 = c_2$, we obtain $\bE_{\cA, \cD} \eta_{t+1} > \frac{2 c_2}{\homo \underline{\sigma}^2} (t+2)^{\frac{\homo-4}{2}}$, which contradicts the given condition on the stepsize. 
Thus, the effective stepsize must satisfy $\tilde{\eta}_t \le \frac{2 c_2'}{\homo \underline{\sigma}^2(t+1)}$. 

Next, we bound the norm ratio $\frac{\|\vw_t\|}{\|\vw_{t+1}\|}$. 
Combining the assumption that $\rho_t \le k_1$, $z \ell'_t(z) \le k_2 \ell_t(z)$, and the homogeneity of the neural network $\Phi$, we have:
\begin{equation*}
\label{eq:trans_inner_product}
\begin{split}
    \vw_t \top \nabla_{\vw_t} \ell_t(\vw_t) 
    &= \ell'_t(\Phi(\vw_t)) \vw_t\top \nabla_{\vw_t}\Phi(\vw_t) \\
    &= \rho_t \ell'_t(\Phi(\vv_t)) \vw_t\top \nabla_{\vw_t}\Phi(\vw_t) \\
    &= \rho_t\|\vw_t\|^\homo  \ell'_t(\Phi(\vv_t)) \vv_t\top\nabla_{\vv_t} \Phi(\vv_t) \\
    &= \rho_t \homo \|\vw_t\|^\homo \Phi(\vv_t) \ell'_t(\Phi(\vv_t)) \\
    &\le k_1  \homo \|\vw_t\|^\homo \Phi(\vv_t) \ell'_t(\Phi(\vv_t)) \\
    &\le k_1 k_2 \homo\|\vw_t\|^\homo \ell_t(\Phi(\vv_t)) \\
    &\le k_1 k_2 \bar{\sigma}^2\homo\|\vw_t\|^\homo. 
\end{split}
\end{equation*}
Substituting this bound into the norm iteration $\|\vw_{t+1}\|^2 = \|\vw_t \|^2 + \eta_t^2 \|\nabla_{\vw_t} \ell_t (\vw_t) \|^2 - 2 \eta_t \vw_t^\top \allowbreak \nabla_{\vw_t} \ell_t (\vw_t)$, we obtain:
\begin{align*}
    \|\vw_{t+1}\|^2 
    &\ge \|\vw_t\|^2 - 2\eta_t \left( k_1 k_2 \homo \bar{\sigma}^2 \|\vw_t\|^\homo \right) \\
    &= \|\vw_t\|^2 - 2 \left( \frac{\tilde{\eta}_t}{\|\vw_t\|^{\homo-2}} \right) k_1 k_2 \homo \bar{\sigma}^2 \|\vw_t\|^\homo \\
    &= \|\vw_t\|^2 \left( 1 - 2 k_1 k_2 \homo \bar{\sigma}^2 \tilde{\eta}_t \right).
\end{align*}
Then, we can replace the upper bound in \eqref{eq:norm ratio bound} with $1 + 4\frac{c_2 k_1k_2\bar{\sigma}^2}{\underline{\sigma}^2} \frac{1}{t+2}$. 

As the remaining derivation for the generalization gap in Lemma~\ref{lem: generalization error initial} remains valid, we directly apply the established results by substituting $c_2$ with $c_2' = k_1 c_2$, and $\bar{\sigma}^2$ with $\bar{\sigma}^2_0=k_1 k_2 \bar{\sigma}^2$. 
Finally, updating the definitions of $m_1$ and $m_2$ with these values yields the modified constants stated in Corollary~\ref{cor: extension of loss}.
\end{proof}

The proof of Theorem~\ref{cor: Three-Homogeneous Leads to Generalization} can be directly derived from Theorem~\ref{thm: homo leads to generalization} given $H=3$. 

\subsection{Proof of Lemma~\ref{lem: L-homogeneous property}}
\label{appendix: proof of H-homo}

Lemma~\ref{lem: L-homogeneous property} is a combination of Lemma~\ref{lem: effective lr}, Lemma~\ref{lem: inner product}, and Lemma~\ref{lem: norm iterate}.

\begin{lemma}[Effective stepsize]
\label{lem: effective lr}
For an $\homo$-homogeneous function $\ell_t$ with $H \neq 0$, 
the effective stepsize satisfies $\tilde{\eta}_t = \eta_t \| \vw_t \|^{\homo - 2}$, where the effective stepsize $\tilde{\eta}_t$ is defined as the stepsize in updating the direction $\vv_t = \vw_t / \|\vw_t\|$. 
\end{lemma}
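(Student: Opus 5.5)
The plan is to compute the update of the direction $\vv_t = \vw_t/\|\vw_t\|$ induced by the SGD step \eqref{eqn: iteration}, and to show that it is a gradient step on the sphere with stepsize $\eta_t\|\vw_t\|^{\homo-2}$, up to the radial rescaling that defines the projection.

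The first step relates gradients at $\vw_t$ to gradients at $\vv_t$. Differentiating the identity $\ell_t(c\vw)=c^{\homo}\ell_t(\vw)$ with respect to $\vw$ gives $c\,\nabla\ell_t(c\vw)=c^{\homo}\nabla\ell_t(\vw)$. So the gradient is $(\homo-1)$-homogeneous:
\begin{equation*}
\nabla_{\vw}\ell_t(c\vw)=c^{\homo-1}\nabla_{\vw}\ell_t(\vw).
\end{equation*}
Taking $c=\|\vw_t\|$ and the base point $\vv_t$ yields $\nabla_{\vw_t}\ell_t(\vw_t)=\|\vw_t\|^{\homo-1}\nabla_{\vv_t}\ell_t(\vv_t)$. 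Here $\nabla_{\vv_t}\ell_t(\vv_t)$ means the ambient gradient evaluated at $\vv_t$, consistent with its use in Lemma~\ref{lem: L-homogeneous property}.

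Next, divide the update by $\|\vw_t\|$:
\begin{equation*}
\frac{\vw_{t+1}}{\|\vw_t\|}=\vv_t-\eta_t\|\vw_t\|^{\homo-2}\nabla_{\vv_t}\ell_t(\vv_t)\triangleq \bar{\vv}_{t+1}.
\end{equation*}
Normalizing, $\vv_{t+1}=\bar{\vv}_{t+1}/\|\bar{\vv}_{t+1}\|=\frac{\|\vw_t\|}{\|\vw_{t+1}\|}\bar{\vv}_{t+1}$. This is exactly the relation used in Lemma~\ref{lem: relationship between its projection} and labelled \eqref{eqn: updating rule}. The direction therefore evolves by a gradient step of size $\tilde{\eta}_t=\eta_t\|\vw_t\|^{\homo-2}$ on $\ell_t$ at $\vv_t$, followed by projection back to the unit sphere. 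This identifies the effective stepsize.

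The main obstacle is conceptual: pinning down what ``the stepsize in updating the direction'' means, since the projection adds a nonlinear rescaling. I would resolve it by defining $\tilde{\eta}_t$ as the coefficient in the pre-projection step $\bar{\vv}_{t+1}$. This convention makes the claim an identity and matches how $\bar{\vv}_{t+1}$ is used later.

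A secondary technical point is that $\ell_t$ may be nondifferentiable, as with ReLU or the $\max$ loss. There I would interpret $\nabla$ as a chosen (sub)gradient and note that the scaling identity holds for Clarke subgradients as well, because positive scaling maps the nondifferentiability set to itself. The assumption $\homo\neq0$ is not needed for this scaling itself; it matters only for the accompanying inner-product identity.
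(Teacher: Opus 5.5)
Your proposal is correct and is essentially the paper's proof: both use homogeneity to get $\nabla_{\vw_t}\ell_t(\vw_t)=\|\vw_t\|^{\homo-1}\nabla_{\vv_t}\ell_t(\vv_t)$, then factor $\|\vw_t\|$ out of the SGD step and read off the coefficient $\eta_t\|\vw_t\|^{\homo-2}$ in \eqref{eqn: updating rule}. One small quibble with your aside: Euler's identity also holds for $\homo=0$, where it reads $\vw^\top\nabla\ell_t(\vw)=0$, so $\homo\neq 0$ is not needed for the inner-product identity either.
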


\begin{proof}[Proof of Lemma~\ref{lem: effective lr}]

For an $\homo$-homogeneous function $\ell_t$ with $H\neq0$, it holds that 
\begin{equation}
    \nabla_{\vw_t} \ell_t(\vw_t) = \|\vw_t \|^{H-1} \nabla_{\vv_t} \ell_t(\vv_t),
\end{equation}
where $\vv_t = \vw_t / \|\vw_t \|$. This is due to the fact that $\ell_t(\vw_t) = \| \vw_t\|^{H} \ell_t(\vv_t)$ and plug it into the definition of derivation. 
Therefore, we rewrite the update in \eqref{eqn: iteration} as 
\begin{equation}
\label{eqn: updating rule}
    \vv_{t+1} \|\vw_{t+1} \| = \| \vw_t\| ( \vv_t - \eta_t \|\vw_t \|^{\homo-2} \nabla_{\vv_{t}} \ell_t(\vv_t)).
\end{equation}
This implies that the update on the direction $\vv_t$ has effective stepsize as $\eta_t \|\vw_t \|^{\homo-2} $.
\end{proof}

\begin{lemma}[Inner Product]
\label{lem: inner product}
For an $\homo$-homogeneous function $\ell_t$ with $\homo \neq 0$, it holds that 
\begin{equation}
    \vw_t^{\top} \nabla_{\vw_t} \ell_t(\vw_t) = \homo \ell_t(\vw_t).
\end{equation}
\end{lemma}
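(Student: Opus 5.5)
This is Euler's identity for homogeneous functions, and I would prove it by differentiating the homogeneity relation along the ray through $\vw_t$. Fix $\vw_t$ and define the scalar function $g(c) \triangleq \ell_t(c\vw_t)$ for $c>0$. By $\homo$-homogeneity (Definition~\ref{def: homogeneous}), $g(c) = c^{\homo}\ell_t(\vw_t)$. I would then differentiate $g$ with respect to $c$ in two ways and compare the results.

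\emph{First way.} The explicit form $g(c) = c^{\homo}\ell_t(\vw_t)$ gives
\begin{equation*}
g'(c) = \homo c^{\homo-1}\ell_t(\vw_t).
\end{equation*}

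\emph{Second way.} The chain rule gives
\begin{equation*}
g'(c) = \vw_t^\top \nabla \ell_t(c\vw_t).
\end{equation*}

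\emph{Conclusion.} Evaluating both expressions at $c=1$ and equating them yields
\begin{equation*}
\vw_t^\top \nabla_{\vw_t}\ell_t(\vw_t) = \homo\,\ell_t(\vw_t),
\end{equation*}
which is the claim. As a consistency check, the identity also follows from the gradient relation in the proof of Lemma~\ref{lem: effective lr}, namely $\nabla_{\vw_t}\ell_t(\vw_t) = \|\vw_t\|^{\homo-1}\nabla_{\vv_t}\ell_t(\vv_t)$. Applying the same ray argument at $\vv_t$ gives $\vv_t^\top\nabla_{\vv_t}\ell_t(\vv_t) = \homo\,\ell_t(\vv_t)$. Multiplying by $\|\vw_t\|^{\homo}$ and using $\ell_t(\vw_t) = \|\vw_t\|^{\homo}\ell_t(\vv_t)$ recovers the statement.

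The main obstacle is differentiability. The loss $\max\{-\Y\Phi,0\}$ composed with ReLU layers is not differentiable everywhere, so the chain-rule step needs justification. I would handle it as follows:
\begin{enumerate}
  \item On the set where $\ell_t$ is differentiable, the argument above applies verbatim.
  \item At a kink, note that $g$ is still differentiable in $c$, since it equals $c^{\homo}\ell_t(\vw_t)$. Hence the directional derivative of $\ell_t$ along $\vw_t$ exists and equals $\homo\,\ell_t(\vw_t)$.
  \item For the Clarke subgradient (or whichever subgradient is used in the update), I would use the standard fact (cf.~\citet{DBLP:conf/iclr/LyuL20}) that every element $g\in\partial\ell_t(\vw_t)$ of the Clarke subdifferential of a locally Lipschitz, positively homogeneous function satisfies $\vw_t^\top g = \homo\,\ell_t(\vw_t)$.
\end{enumerate}
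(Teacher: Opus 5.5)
Your proof is correct and is essentially the paper's argument: both differentiate $\ell_t(c\vw_t) = c^{H}\ell_t(\vw_t)$ with respect to $c$ via the chain rule and set $c=1$. Your extra treatment of non-differentiable points (the two-sided directional derivative along $\vw_t$, and Euler's identity for Clarke subgradients) refines the paper's proof, which tacitly assumes $\ell_t$ is differentiable at $\vw_t$.
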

\begin{proof}[Proof of Lemma~\ref{lem: inner product}]
    Note that for an $\homo$-homogeneous $\ell_t$, it holds that $\ell_t(c\vw_t) = c^{\homo} \ell_t(\vw_t)$. By taking derivation on $c$, it holds that
    \begin{equation*}
        \vw_t^\top \frac{\nabla \ell_t(c\vw_t)}{\nabla c\vw_t} = \homo c^{\homo-1} \ell_t(\vw_t).
    \end{equation*}
    Taking $c=1$ leads to
    \begin{equation*}
        \vw_t^\top \nabla_{\vw_t} \ell_t(\vw_t) = \homo \ell_t(\vw_t).
    \end{equation*}
\end{proof}

\begin{lemma}[Norm Iterate]
\label{lem: norm iterate}
For an $\homo$-homogeneous function $\ell_t$ with $\homo \neq 0$, it holds that 
\begin{equation}
    \| \vw_{t+1} \|^2 =  \|\vw_t \|^2 ( 1 + \tilde{\eta}_t^2 \|\nabla_{\vv_t} \ell_t(\vv_t) \|^2 - 2 \homo \tilde{\eta}_t \ell_t(\vv_t)).
\end{equation}
\end{lemma}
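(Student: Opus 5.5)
The plan is to expand the squared norm of the SGD update \eqref{eqn: iteration} directly and then rewrite each term on the unit sphere, using Lemma~\ref{lem: inner product} and the gradient scaling from the proof of Lemma~\ref{lem: effective lr}. Expanding gives
\begin{equation*}
\|\vw_{t+1}\|^2 = \|\vw_t\|^2 - 2\eta_t \vw_t^\top \nabla_{\vw_t}\ell_t(\vw_t) + \eta_t^2 \|\nabla_{\vw_t}\ell_t(\vw_t)\|^2 .
\end{equation*}
The two correction terms are then handled separately.

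For the cross term, Lemma~\ref{lem: inner product} gives $\vw_t^\top \nabla_{\vw_t}\ell_t(\vw_t) = \homo\,\ell_t(\vw_t)$. Homogeneity gives $\ell_t(\vw_t) = \|\vw_t\|^{\homo}\ell_t(\vv_t)$. Multiplying by $\eta_t$ and using $\tilde{\eta}_t = \eta_t\|\vw_t\|^{\homo-2}$ (Lemma~\ref{lem: effective lr}), this term becomes $2\homo\,\tilde{\eta}_t\,\|\vw_t\|^2\,\ell_t(\vv_t)$.

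For the quadratic term, the relation $\nabla_{\vw_t}\ell_t(\vw_t) = \|\vw_t\|^{\homo-1}\nabla_{\vv_t}\ell_t(\vv_t)$ from the proof of Lemma~\ref{lem: effective lr} gives
\begin{equation*}
\eta_t^2\|\nabla_{\vw_t}\ell_t(\vw_t)\|^2 = \eta_t^2\|\vw_t\|^{2\homo-2}\|\nabla_{\vv_t}\ell_t(\vv_t)\|^2 = \|\vw_t\|^2\,\tilde{\eta}_t^2\,\|\nabla_{\vv_t}\ell_t(\vv_t)\|^2 .
\end{equation*}
Factoring out $\|\vw_t\|^2$ from all three terms yields exactly the claimed identity.

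The only delicate point is justifying the gradient scaling identity. Differentiating $\ell_t(c\vw) = c^{\homo}\ell_t(\vw)$ in $\vw$ shows that $\nabla\ell_t$ is $(\homo-1)$-homogeneous. Here $\nabla_{\vv_t}\ell_t(\vv_t)$ must be read as the ambient gradient of $\ell_t$ evaluated at $\vv_t$, not the Riemannian gradient on the sphere. For the nonsmooth ReLU-type losses considered in the paper, I would work with a fixed selection of (sub)gradients that is consistent along rays; Euler's identity for positively homogeneous, locally Lipschitz functions makes this consistent. With that convention fixed, the rest is routine algebra.
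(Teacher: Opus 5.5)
Your proof is correct and matches the paper's argument essentially line by line: expand $\|\vw_t - \eta_t \nabla_{\vw_t}\ell_t(\vw_t)\|^2$, apply Lemma~\ref{lem: inner product} to the cross term, rewrite via $\ell_t(\vw_t)=\|\vw_t\|^{\homo}\ell_t(\vv_t)$ and $\nabla_{\vw_t}\ell_t(\vw_t)=\|\vw_t\|^{\homo-1}\nabla_{\vv_t}\ell_t(\vv_t)$, then substitute $\tilde{\eta}_t=\eta_t\|\vw_t\|^{\homo-2}$. Your extra remarks are correct refinements that the paper leaves implicit: reading $\nabla_{\vv_t}\ell_t(\vv_t)$ as the ambient gradient, and fixing a ray-consistent subgradient selection in the nonsmooth case.
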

\begin{proof}[Proof of Lemma~\ref{lem: norm iterate}]
    We consider the iteration of $\| \vw_{t+1} \|$, as follows:
\begin{equation*}
    \begin{split}
        &\| \vw_{t+1} \|^2 \\
        =& \|\vw_t - \eta_t \nabla_{\vw_t} \ell_t (\vw_t) \|^2 \\
        =& \|\vw_t \|^2 + \eta_t^2 \|\nabla_{\vw_t} \ell_t (\vw_t) \|^2 - 2 \eta_t \vw_t^\top \nabla_{\vw_t} \ell_t (\vw_t) \\
        \overset{(i)}{=}&\|\vw_t \|^2 + \eta_t^2 \|\nabla_{\vw_t} \ell_t (\vw_t) \|^2  - 2 \homo \eta_t \ell_t(\vw_t) \\
         \overset{(ii)}{=}& \|\vw_t \|^2 + \eta_t^2 \| \vw_t\|^{2(\homo-1)} \|\nabla_{\vv_t} \ell_t(\vv_t) \|^2 - 2 \homo \eta_t \| \vw_t\|^{\homo} \ell_t(\vv_t) \\
        =& \|\vw_t \|^2 ( 1 + \eta_t^2 \| \vw_t\|^{2(\homo-2)} \|\nabla_{\vv_t} \ell_t(\vv_t) \|^2 - 2 \homo \eta_t \| \vw_t\|^{\homo-2} \ell_t(\vv_t) ) \\
         \overset{(iii)}{=}& \|\vw_t \|^2 ( 1 + \tilde{\eta}_t^2 \|\nabla_{\vv_t} \ell_t(\vv_t) \|^2 - 2 \homo \tilde{\eta}_t \ell_t(\vv_t)).
    \end{split}
\end{equation*}
where the equation $(i)$ is due to Lemma \ref{lem: inner product}, the equation $(ii)$ is based on the fact that $\|\nabla_{\vw_t} \ell_t (\vw_t)\| = \|\vw_t\|^{H-1} \allowbreak \|\nabla_{\vv_t} \ell_t(\vv_t)\|$ and $\ell_t(\vw_t) = \|\vw_t\|^H \ell_t(\vv_t)$, and the equation $(iii)$ follows from the effective stepsize result in Lemma~\ref{lem: effective lr}.
\end{proof}
{
\subsection{Proof of Theorem~\ref{thm: separation}}
\label{app:seperation}
This section provides the proofs of Theorem~\ref{thm: separation}.
This result is based on the following Lemma~\ref{lem: effective stepsize for 1/t} and Lemma~\ref{lem: optimization}.
Lemma~\ref{lem: effective stepsize for 1/t} demonstrate that the effective stepsize would be $\cO(\frac{1}{t\log t})$ given the stepsize $\bE_{\cA, \cD} \eta_t = \cO(1/t)$. 
As a comparison, the effective stepsize used in the first statement of Lemma~\ref{lem: (effective) stepsize} would be $\Theta(1/t)$, given a stepsize $\Omega(1/\sqrt{t})$. 
Lemma~\ref{lem: optimization} further derives the convergence rate given the effective stepsize. 
Combining Lemma~\ref{lem: effective stepsize for 1/t} and Lemma~\ref{lem: optimization} leads to Theorem~\ref{thm: separation}.
}

\begin{lemma}[The Effective Stepsize for Stepsize $\cO(1/t)$]
\label{lem: effective stepsize for 1/t}
Under the settings in Theorem~\ref{thm: separation}, there exist infinitely many stepsizes satisfies $\bE_{\cA, \cD} \eta_t = \cO(1/t)$, such that the effective stepsize satisfies $\tilde{\eta}_t = \cO(\frac{1}{t\log t})$. 
\end{lemma}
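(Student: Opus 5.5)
Following the convention of Lemma~\ref{lem: (effective) stepsize}, we fix a deterministic effective stepsize and define the (random) stepsize through $\eta_t = \tilde{\eta}_t / \|\vw_t\|^{\homo-2}$. With $\homo=3$ this reads $\eta_t = \tilde{\eta}_t/\|\vw_t\|$. Our candidate is $\tilde{\eta}_t = \frac{c}{(t+2)\log(t+2)}$ for an arbitrary constant $c>0$ small enough that $\tilde{\eta}_t \le \min\{\frac{1}{\beta B^2},\frac{\homo\underline{\sigma}^2}{2L^2}\}$. Shifting the index by a large $U$, as in the footnote of Lemma~\ref{lem: (effective) stepsize}, handles small $t$. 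Since every such $c$ works, we obtain infinitely many stepsizes. It remains to show that $\bE_{\cA,\cD}\eta_t = \bE_{\cA,\cD}\tilde{\eta}_t\|\vw_t\|^{-1} = \cO(1/t)$, which reduces to bounding $\bE\|\vw_t\|^{-1}$ by $\cO(\log t)$.

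The key step is a lower bound on the weight norm along every trajectory. We use Lemma~\ref{lem: norm iterate}, drop the nonnegative term $\tilde{\eta}_t^2\|\nabla\ell_t(\vv_t)\|^2$, and apply the loss upper bound from Assumption~\ref{assump: loss bound}. This gives
\begin{equation*}
\|\vw_{t+1}\|^2 \ge \|\vw_t\|^2\left(1 - 2\homo\bar{\sigma}^2\tilde{\eta}_t\right).
\end{equation*}
Iterating, and using $\log(1-u)\ge -2u$ for small $u$, we get
\begin{equation*}
\|\vw_t\|^2 \ge \|\vw_0\|^2 \exp\Big(-4\homo\bar{\sigma}^2 \sum_{k<t}\tilde{\eta}_k\Big).
\end{equation*}
Because $\sum_{k<t}\frac{c}{(k+2)\log(k+2)} \le c\log\log(t+2) + \cO(1)$, this yields $\|\vw_t\|^{-1} \le C\,\|\vw_0\|^{-1}(\log(t+2))^{2\homo\bar{\sigma}^2 c}$. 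This bound is deterministic given $\vw_0$.

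Taking expectations then gives $\bE\eta_t \le \frac{C'}{(t+2)}(\log(t+2))^{2\homo\bar{\sigma}^2c-1}\,\bE\|\vw_0\|^{-1}$. Choosing $c \le 1/(2\homo\bar{\sigma}^2)$ makes the log factor bounded, so $\bE\eta_t=\cO(1/t)$ as required.

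The main obstacle is the initialization factor $\bE\|\vw_0\|^{-1}$. The assumption $\bE\|\vw_0\|^2=\cO(1)$ controls only the upper tail of $\|\vw_0\|$, not the lower tail. I would handle this by assuming, or restricting to, initializations with $\bE\|\vw_0\|^{-1}<\infty$. A Gaussian initialization in dimension at least $2$ satisfies this, and so does a fixed-norm initialization. Alternatively, we can define the stepsize conditionally on $\vw_0$, which absorbs this factor into the constant. The remaining care is in checking the shifted-index details. Finally, we note that $\tilde{\eta}_t=\cO(\frac{1}{t\log t})$ holds by construction, which is the claimed effective-stepsize rate that feeds into Lemma~\ref{lem: optimization}.
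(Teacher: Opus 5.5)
Your proof is correct, with the initialization caveat you already flag. That caveat is genuinely needed under the deterministic-$\tilde\eta_t$ convention: the same norm recursion with Lipschitzness gives $\|\vw_t\|\le C\|\vw_0\|$, so $\bE\eta_t=\infty$ whenever $\bE\|\vw_0\|^{-1}=\infty$. However, your argument runs in the opposite direction from the paper's.

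The paper never lower-bounds the weight norm in this proof. It fixes $\tilde\eta_t=\frac{4}{\homo\underline{\sigma}^2(t+2)\log(t+2)}$ and uses the \emph{decay} estimate of Lemma~\ref{lem: weight norm iteration}, driven by the lower loss bound $\underline{\sigma}^2/2$, to get $\bE\|\vw_t\|^2\lesssim(\log t)^{-2}$. Jensen then gives $\bE\eta_t\ge\tilde\eta_t/[\bE\|\vw_t\|^2]^{1/2}=\Omega(1/t)$. The conclusion is drawn by contradiction: an effective stepsize $\omega(1/(t\log t))$ would, by the same Jensen bound, force $\bE\eta_t=\omega(1/t)$. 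So the paper targets a necessity statement: every stepsize with $\bE\eta_t=\cO(1/t)$ has effective stepsize $\cO(1/(t\log t))$.

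You instead prove sufficiency. Your pathwise bound $\|\vw_t\|^{-1}\le C\|\vw_0\|^{-1}(\log(t+2))^{2\homo\bar{\sigma}^2c}$ is built from the upper loss bound $\bar{\sigma}^2$, as in the third statement of Lemma~\ref{lem: (effective) stepsize}. It upper-bounds $\eta_t$ and actually certifies $\bE\eta_t=\cO(1/t)$ for your exhibited family. The paper's first step does not do this, since it only yields $\Omega(1/t)$. The costs of your route are the restriction $c\le 1/(2\homo\bar{\sigma}^2)$ and the inverse-moment condition on $\vw_0$.

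What your route does not give is the converse, and that converse is what lends Theorem~\ref{thm: separation} its meaning as a separation. Read literally, the existence claim is nearly vacuous: your argument works equally well for $\tilde\eta_t=c/t^2$. To support the separation narrative, you would want to add the paper's Jensen-plus-contradiction step.
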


\begin{proof}[Proof of Lemma~\ref{lem: effective stepsize for 1/t}]
    Firstly, notice that according to Lemma~\ref{lem: weight norm iteration}, it holds that
\begin{equation*}
    \bE_{\cA, \cD} \| \vw_{t+1} \|^2 \leq \bE_{\cA, \cD} \|\vw_{t} \|^2 \left( 1 - \frac{1}{2}  \homo \underline{\sigma}^2 \tilde{\eta}_t \right) \leq \bE_{\cA, \cD} \|\vw_{t} \|^2 \exp\left(- \frac{1}{2} \homo \underline{\sigma}^2 \tilde{\eta}_t\right),
\end{equation*}
By taking $\eta_t$ such that $\tilde{\eta}_t  = \frac{4}{\homo \underline{\sigma}^2((t+2) \log (t+2))} $, it holds that for $T>0$
\begin{equation*}
\begin{split}
    \bE_{\cA, \cD} \| \vw_{T} \|^2 \leq& \exp\left(- \frac{1}{2} \homo \underline{\sigma}^2 \tilde{\eta}_{T-1}\right) \bE_{\cA, \cD} \|\vw_{T-1} \|^2 \\
    \leq& \exp\left(- \frac{1}{2} \homo \underline{\sigma}^2 \sum_{t \in [T]} \tilde{\eta}_t\right) \bE_{\cA} \|\vw_{0} \|^2 \\
    \leq& \frac{\bE_{\cA} \|\vw_{0} \|^2}{(\log T)^2},
\end{split}
\end{equation*}
where we use the fact that $\sum_{t \in [T]} \frac{1}{(t+2) \log (t+2)} \leq \log(\log(T))$. 
This leads to the learning rate
\begin{equation*}
    \bE_{\cA, \cD} \eta_t =\bE_{\cA, \cD}  \frac{\tilde{\eta}_t}{\| \vw_t \|} \geq \frac{\tilde{\eta}_t}{[\bE_{\cA, \cD} \| \vw_t \|^2]^{1/2}} = \frac{4 \log t}{\sqrt{\bE_{\cA} \|\vw_{0} \|^2} \homo \underline{\sigma}^2 (t+2) \log (t+2)} = \Omega(1/t),
\end{equation*}
where we use Jensen's inequality, and the assumption that $\bE_{\cA} \|\vw_{0} \|^2$ is a constant. 

We next prove Lemma~\ref{lem: effective stepsize for 1/t} by contradiction.
Assume that the stepsize $\bE_{\cA, \cD} \eta_t = \cO(1/t)$ corresponds to a larger effective stepsize $\tilde{\eta}_t = \omega(\frac{1}{t\log t})$.
In this case, it still holds for the weight norm that $\bE_{\cA, \cD} \| \vw_{T} \|^2 = \cO( \frac{\bE_{\cA, \cD} \|\vw_{0} \|^2}{(\log T)^2})$.
Therefore, the corresponding learning rate would become larger according to the inequality that $\bE_{\cA} \eta_t \geq \frac{\tilde{\eta}_t}{[\bE_{\cA, \cD} \| \vw_t \|^2]^{1/2}} = \omega(1/t)$, which contradict with the statement that $\eta_t = \cO(1/t)$.
Therefore, given the stepsize $\eta_t = \cO(1/t)$, the corresponding effective learning rate must be $\tilde{\eta}_t = \cO(\frac{1}{t \log t})$.
The proof is done.
\end{proof}

\begin{lemma}
\label{lem: optimization}
Under the assumptions in Theorem~\ref{thm: separation} with smoothness (parameter $\beta$), PL condition (parameter $\mu$), strong growth condition (parameter $B$) and $\bE_I \hat{\cL}(\vv_t)  \ge (1+\alpha)/\alpha \hat{\cL}(\vv^*)$, given the effective learning rate $\tilde{\eta}_t \leq \frac{1}{\beta B^2}$, the convergence rate would be
\begin{equation*}
    \bE_{\cA, \cD}  \hat{\cL}(\vv_{T}) - \min_\vv \hat{\cL}(\vv) \leq \exp\left (-\lambda\sum \tilde{\eta}_t\right) \bE_{\cA, \cD} (\hat{\cL} (\vv_0)- \min_\vv \hat{\cL}(\vv) ).
\end{equation*}
Therefore, taking $\tilde{\eta}_t = c/t$ leads to 
\begin{equation*}
    \bE_{\cA, \cD} \hat{\cL}(\vv_{T}) - \min_\vv \hat{\cL}(\vv) \leq \frac{1}{T^{c\lambda}}  \bE_{\cA, \cD} (\hat{\cL} (\vv_0)-\min_\vv \hat{\cL}(\vv)  ).
\end{equation*}
Taking $\tilde{\eta}_t = c/[t \log t]$ leads to 
\begin{equation*}
   \bE_{\cA, \cD} \hat{\cL}(\vv_{T}) - \min_\vv \hat{\cL}(\vv)  \leq \frac{1}{[\log T]^{c\lambda}}  \bE_{\cA, \cD} (\hat{\cL} (\vv_0)- \min_\vv \hat{\cL}(\vv)  ).
\end{equation*}
\end{lemma}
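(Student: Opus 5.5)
The plan is to carry out a one-step descent analysis of SGD on the unit sphere. Following Lemma~\ref{lem: effective lr}, we write the update as $\bar{\vv}_{t+1} = \vv_t - \tilde{\eta}_t \nabla_{\vv_t}\ell_t(\vv_t)$ and $\vv_{t+1} = \bar{\vv}_{t+1}\,\|\vw_t\|/\|\vw_{t+1}\|$. We then track the suboptimality $D_t \triangleq \hat{\cL}(\vv_t) - \hat{\cL}(\vv^*)$, where $\vv^*$ minimizes $\hat{\cL}$ on the sphere. Because $\hat{\cL}$ is not smooth off the sphere in a usable way, the first step is to compare $\hat{\cL}(\vv_{t+1})$ with $\hat{\cL}(\bar{\vv}_{t+1})$. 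By homogeneity, $\hat{\cL}(\vv_{t+1}) = (\|\vw_t\|/\|\vw_{t+1}\|)^{\homo}\hat{\cL}(\bar{\vv}_{t+1})$. Lemma~\ref{lem: norm iterate} gives $(\|\vw_t\|/\|\vw_{t+1}\|)^2 = [1+\tilde{\eta}_t^2\|\nabla\ell_t\|^2 - 2\homo\tilde{\eta}_t\ell_t(\vv_t)]^{-1}$, so the projection factor is $1 + \cO(\homo^2\bar{\sigma}^2\tilde{\eta}_t)$. This uses the upper bound $\bar{\sigma}^2$ together with $\tilde{\eta}_t \le \homo\underline{\sigma}^2/(2L^2)$, which keeps the bracket bounded away from zero.

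Next we apply the descent lemma to the $\beta$-smooth $\hat{\cL}$ at $\bar{\vv}_{t+1}$ and take $\bE_I$:
\[
\bE_I \hat{\cL}(\bar{\vv}_{t+1}) \le \hat{\cL}(\vv_t) - \tilde{\eta}_t\|\nabla\hat{\cL}(\vv_t)\|^2 + \tfrac{\beta\tilde{\eta}_t^2}{2}\bE_I\|\nabla\ell_t(\vv_t)\|^2 .
\]
Strong growth bounds the last term by $\tfrac{\beta B^2\tilde{\eta}_t^2}{2}\|\nabla\hat{\cL}\|^2$. With $\tilde{\eta}_t \le 1/(\beta B^2)$ the net decrease is at least $\tfrac{\tilde{\eta}_t}{2}\|\nabla\hat{\cL}(\vv_t)\|^2$. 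We then convert gradient norm into suboptimality with the PL condition. Since PL is stated for individual losses, we pass through strong growth again: $\|\nabla\hat{\cL}\|^2 \ge B^{-2}\max_i\|\nabla\ell_i\|^2 \ge B^{-2}\cdot 2\mu(\ell_i - \min\ell_i)$, averaged over $i$. We control the gap between $\frac1n\sum_i\min\ell_i$ and $\hat{\cL}(\vv^*)$ by $\hat{\cL}(\vv^*)$ itself. This yields a decrease of order $\tilde{\eta}_t(\mu/B^2)$ times $D_t$, minus a term proportional to $\hat{\cL}(\vv^*)$.

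The final step combines the two pieces. The projection inflation contributes an additive term of order $\homo^2\bar{\sigma}^2\tilde{\eta}_t\,\hat{\cL}(\bar{\vv}_{t+1})$. Here condition 4, $\bE_I\hat{\cL}(\vv_t) \ge \frac{1+\alpha}{\alpha}\hat{\cL}(\vv^*)$, is used to rewrite both $\hat{\cL}(\vv_t)$ and $\hat{\cL}(\vv^*)$ as multiples of $D_t$: it gives $\hat{\cL}(\vv_t) \le (1+\alpha)D_t$ and $\hat{\cL}(\vv^*) \le \alpha D_t$. This produces a one-step contraction
\[
\bE D_{t+1} \le \Bigl(1 - \tilde{\eta}_t\bigl[\tfrac{\mu}{B^2} - 4\homo^2\bar{\sigma}^2(1+\alpha)\bigr]\Bigr)\bE D_t \le e^{-\lambda\tilde{\eta}_t}\bE D_t ,
\]
with positivity of $\lambda$ guaranteed by the hypothesis on $\mu$. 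Unrolling gives $\exp(-\lambda\sum_t\tilde{\eta}_t)$. For $\tilde{\eta}_t = c/t$ the sum is at least $c\log T$, which yields $T^{-c\lambda}$. For $\tilde{\eta}_t = c/(t\log t)$ the sum is at least $c\log\log T$, which yields $(\log T)^{-c\lambda}$.

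The main obstacle is bookkeeping the projection factor so that the constant comes out exactly as $4\homo^2\bar{\sigma}^2(1+\alpha)$. This requires $(1-x)^{-\homo/2} \le 1 + 2\homo x$ for small $x = 2\homo\tilde{\eta}_t\bar{\sigma}^2$, plus a careful interplay of condition 4 with the individual-loss minima in the PL step. The first thing to try there is bounding $\frac1n\sum_i\min\ell_i \ge 0$ crudely, and absorbing the resulting $\hat{\cL}(\vv^*)$ via condition 4.
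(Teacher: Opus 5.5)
Your route is essentially the paper's. You take a descent step for the unprojected iterate $\bar{\vv}_{t+1}$ using smoothness and strong growth with $\tilde{\eta}_t\le 1/(\beta B^2)$, then convert $\|\nabla\hat{\cL}(\vv_t)\|^2$ to suboptimality via strong growth and the individual PL inequalities. You then bound the homogeneous projection factor by $(\|\vw_t\|/\|\vw_{t+1}\|)^{\homo}\le 1+4\homo^2\bar{\sigma}^2\tilde{\eta}_t$, absorb with condition~4, and unroll with $1-x\le e^{-x}$. Your accounting of the projection inflation differs only cosmetically from the paper's. You charge it to $\bE_I\hat{\cL}(\bar{\vv}_{t+1})\le\hat{\cL}(\vv_t)\le(1+\alpha)D_t$. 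The paper expands the product and charges $4\homo^2\bar{\sigma}^2\tilde{\eta}_t\hat{\cL}(\vv^*)\le 4\alpha\homo^2\bar{\sigma}^2\tilde{\eta}_t D_t$. Both give the same $\lambda$.

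The step you have wrong is the PL conversion. You need a lower bound on $\frac{1}{n}\sum_i(\ell_i(\vv_t)-\min_\vv\ell_i(\vv))=\hat{\cL}(\vv_t)-\frac{1}{n}\sum_i\min_\vv\ell_i(\vv)$, so you need an \emph{upper} bound on $\frac{1}{n}\sum_i\min_\vv\ell_i$. The bound $\frac{1}{n}\sum_i\min_\vv\ell_i\ge 0$ that you plan to try first points the wrong way and gives nothing. The right observation is $\min_\vv\ell_i(\vv)\le\ell_i(\vv^*)$. It gives $\frac{1}{n}\sum_i(\ell_i(\vv_t)-\min_\vv\ell_i)\ge D_t$ with no residual, hence $\bE_I\hat{\cL}(\bar{\vv}_{t+1})-\hat{\cL}(\vv^*)\le(1-\tilde{\eta}_t\mu/B^2)D_t$. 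This is exactly what the paper does.

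The error is not harmless. Suppose this step really left a term $-c\,\hat{\cL}(\vv^*)$ in the decrease, as you describe, and you absorbed it with condition~4. Then $\mu/B^2$ would shrink to $(1-c\alpha)\mu/B^2$. Neither the stated $\lambda$ nor its positivity under $\mu>4B^2\homo^2\bar{\sigma}^2(1+\alpha)$ would follow. Condition~4 should be used only once, for the projection inflation.

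A smaller point concerns the bracket $1+\tilde{\eta}_t^2\|\nabla\ell_t\|^2-2\homo\tilde{\eta}_t\ell_t(\vv_t)$. It is bounded below by $1-2\homo\bar{\sigma}^2\tilde{\eta}_t$ simply by dropping the nonnegative gradient term. So what keeps the projection factor at $1+\cO(\homo^2\bar{\sigma}^2\tilde{\eta}_t)$ is smallness of $2\homo\bar{\sigma}^2\tilde{\eta}_t$, which the paper leaves implicit. The condition $\tilde{\eta}_t\le\homo\underline{\sigma}^2/(2L^2)$ plays no role there.
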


\begin{proof}[Proof of Lemma~\ref{lem: optimization}]
For the ease of notation, denote $\vv^*$ as the one of the vectors which attains $\hat{\cL}(\vv^*) = \min_\vv \hat{\cL}(\vv)$. 
We first focus on the unconstrained iteration on the sphere, that is 
\begin{equation*}
    \bar{\vv}_{t+1} = \vv_{t} - \tilde{\eta}_t \nabla_{\vv_t} \ell_t(\vv_t). 
\end{equation*}
Notice that by strong growth condition, it holds that~(\citep{schmidt2013fastconvergencestochasticgradient}, (12)) when $\tilde{\eta}_t \leq \frac{1}{\beta B^2}$
\begin{equation}\label{eqn: temp1}
    \bE_{I} \hat{\cL} (\bar{\vv}_{t+1}) \leq \hat{\cL} (\vv_t) - \tilde{\eta}_t \left(1 - \frac{\tilde{\eta}_t \beta B^2}{2}\right) \| \nabla_{\vv_t} \hat{\cL}(\vv_t) \|^2 \leq \hat{\cL} (\vv_t) - \frac{\tilde{\eta}_t}{2}\| \nabla_{\vv_t} \hat{\cL}(\vv_t) \|^2.
\end{equation}
Besides, by PL condition and strong growth condition, for any $t$ and $\vv_t$, it holds that
\begin{equation*}
    \| \nabla_{\vv_t} \hat{\cL} (\vv_t)\|^2 \geq \frac{1}{B^2} \max_i \{ \|  \nabla_{\vv_t} \ell_it(\vv_t) \|^2 \} \geq \frac{2\mu}{B^2} (\ell_it (\vv_t) - \ell_it (\vv^*)).
\end{equation*}
By taking expectation $\bE_I$ over the last iteration, it holds that (note that $\vv_t$ is independent of the choice of the last iteration $\ell_t$)
\begin{equation}\label{eqn: temp2}
     \| \nabla_{\vv_t} \hat{\cL} (\vv_t)\|^2 \geq \frac{2\mu}{B^2} (\bE_I \ell_i (\vv_t) - \bE_I \ell_i (\vv^*)) \geq \frac{2\mu}{B^2} (\hat{\cL} (\vv_t) - \hat{\cL} (\vv^*)).
\end{equation}
Combining \eqref{eqn: temp1} and \eqref{eqn: temp2}leads to 
\begin{equation*}
    \bE_I \hat{\cL} (\bar{\vv}_{t+1}) \leq \left(1 - \frac{\tilde{\eta}_t \mu}{B^2}\right) \hat{\cL}(\vv_t) + \frac{\tilde{\eta}_t \mu}{B^2}\hat{\cL} (\vv^*)).
\end{equation*}
Next, we project $\bar{\vv}_{t+1}$ back onto the unit sphere. 
Analogous to Lemma~\ref{lem: norm ratio}, it holds that $\frac{\|\vw_t\|}{\|\vw_{t+1}\|} \le (1 - 2\homo\bar{\sigma}^2\tilde{\eta}_t)^{-\frac{1}{2}} \le 1 + 2\homo\bar{\sigma}^2\tilde{\eta}_t$.
Thus:
\begin{equation*}
\begin{split}
    \bE_I \hat{\cL} (\vv_{t+1}) - \hat{\cL} (\vv^*)
    &= \bE_I \left[\frac{\|\vw_t\|}{\|\vw_{t+1}\|}\right]^\homo\hat{\cL}(\bar{\vv}_{t+1}) - \hat{\cL} (\vv^*) \\
    &\le (1 + 4\homo^2\bar{\sigma}^2\tilde{\eta}_t)\left[\left(1 - \frac{\tilde{\eta}_t \mu}{B^2}\right) \bE_I\hat{\cL}(\vv_t) + \frac{\tilde{\eta}_t \mu}{B^2}\hat{\cL} (\vv^*)\right] - \hat{\cL} (\vv^*) \\
    & = (1 + 4\homo^2\bar{\sigma}^2\tilde{\eta}_t)\left(1 - \frac{\tilde{\eta}_t \mu}{B^2}\right) \left(\bE_I\hat{\cL}(\vv_t) - \hat{\cL} (\vv^*)\right) + 4\homo^2\bar{\sigma}^2\tilde{\eta}_t\hat{\cL} (\vv^*).
\end{split}
\end{equation*}
By $\bE_I \hat{\cL}(\vv_t)  \ge (1+\alpha)/\alpha \hat{\cL}(\vv^*)$, it holds that:
\begin{equation*}
\begin{split}
    \bE_I \hat{\cL} (\vv_{t+1}) - \hat{\cL} (\vv^*) 
    &\le (1 + 4\homo^2\bar{\sigma}^2\tilde{\eta}_t)\left(1 - \frac{\tilde{\eta}_t \mu}{B^2}\right) \left(\bE_I\hat{\cL}(\vv_t) - \hat{\cL} (\vv^*)\right) \\ 
    & \quad + 4\alpha\homo^2\bar{\sigma}^2\tilde{\eta}_t\left(\bE_I\hat{\cL}(\vv_t) - \hat{\cL}(\vv^*)\right) \\
    &\le \left[1 - \tilde{\eta}_t\left(\frac{\mu}{B^2} -4\homo^2\bar{\sigma}^2(1+\alpha)\right)\right]\left(\bE_I\hat{\cL} (\vv_{t+1}) - \hat{\cL}(\vv^*)\right).
\end{split}
\end{equation*}
Note that $\lambda \triangleq \frac{\mu}{B^2} -4\homo^2\bar{\sigma}^2(1+\alpha) > 0$.
Taking the expectation over the algorithm and dataset, the convergence would be 
\begin{equation*}
\begin{split}
   \bE_{\cA, \cD} \hat{\cL} (\vv_{t+1}) - \hat{\cL} (\vv^*) 
   \leq & \bE_{\cA, \cD}(\hat{\cL} (\vv_0)- \hat{\cL} (\vv^*) ) \prod (1 - \tilde{\eta}_t \lambda) \\
   \leq & \exp(-\lambda\sum \tilde{\eta}_t) \bE_{\cA, \cD}(\hat{\cL} (\vv_0)- \hat{\cL} (\vv^*) ).
\end{split}
\end{equation*}
\end{proof}

\subsection{Technical Lemmas}
\label{app:technical_lemmas}
\begin{lemma}[Weight Norm Iteration]
\label{lem: weight norm iteration}
    Under Assumption~\ref{assump: loss bound} and Assumption~\ref{assump: approximate smooth and lipshitz}, assume that $H>0$ and $\tilde{\eta}_t \leq \frac{\homo \underline{\sigma}^2}{2 L^2}$, if the loss is $H$-homogeneous, it holds that
    \begin{equation*}
        \bE_{\cA, \cD} \| \vw_{t+1} \|^2 \leq \bE_{\cA, \cD} \|\vw_{t} \|^2 ( 1 - \frac{1}{2} \homo \underline{\sigma}^2 \tilde{\eta}_t),
    \end{equation*}
    where the expectation $\bE_\cA$ is taken over the training algorithm.
\end{lemma}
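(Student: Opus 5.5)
We start from the exact norm recursion of Lemma~\ref{lem: norm iterate}, which holds pathwise for every sampling trajectory:
\begin{equation*}
\|\vw_{t+1}\|^2 = \|\vw_t\|^2\left(1 + \tilde{\eta}_t^2 \|\nabla_{\vv_t}\ell_t(\vv_t)\|^2 - 2\homo\tilde{\eta}_t \ell_t(\vv_t)\right).
\end{equation*}
The plan has four steps: bound the quadratic term, take the conditional expectation over the sampled index, absorb the quadratic term into the linear one, and then remove the conditioning.

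\textbf{Bounding the quadratic term.} By the Lipschitz part of Assumption~\ref{assump: approximate smooth and lipshitz}, $\|\nabla_{\vv_t}\ell_t(\vv_t)\|^2 \le L^2$. This gives
\begin{equation*}
\|\vw_{t+1}\|^2 \le \|\vw_t\|^2\left(1 + \tilde{\eta}_t^2 L^2 - 2\homo\tilde{\eta}_t \ell_t(\vv_t)\right).
\end{equation*}

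\textbf{Conditional expectation over the index.} Conditioning on the past, $\vw_t$ and $\vv_t$ are fixed. We use that $\tilde{\eta}_t$ is deterministic, which is Scenario (2) in the paper's discussion of stepsize regimes. Averaging over the uniformly chosen index then turns $\ell_t(\vv_t)$ into $\hat{\cL}(\vv_t)$, so $\bE_I\|\vw_{t+1}\|^2 \le \|\vw_t\|^2(1 + \tilde{\eta}_t^2L^2 - 2\homo\tilde{\eta}_t\hat{\cL}(\vv_t))$. The lower bound in Assumption~\ref{assump: loss bound} gives $\hat{\cL}(\vv_t) \ge \underline{\sigma}^2/2$ along the trajectory, so the last term is at most $-\homo\underline{\sigma}^2\tilde{\eta}_t$.

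\textbf{Absorbing the quadratic term.} The condition $\tilde{\eta}_t \le \homo\underline{\sigma}^2/(2L^2)$ gives $\tilde{\eta}_t^2L^2 \le \frac12\homo\underline{\sigma}^2\tilde{\eta}_t$. Combining, the multiplier is at most $1 - \frac12\homo\underline{\sigma}^2\tilde{\eta}_t$, and this factor is nonnegative in the relevant regime.

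\textbf{Removing the conditioning.} Since the multiplier is now deterministic, we take the total expectation $\bE_{\cA,\cD}$ via the tower property. It factors out and yields the claim. The hypothesis $H>0$ is used only to keep the sign of $-2\homo\tilde{\eta}_t\ell_t$ negative.

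\textbf{Main obstacle.} The delicate step is interchanging expectation with the random factor $\|\vw_t\|^2$. If $\tilde{\eta}_t$ were allowed to depend on the trajectory, the product $\|\vw_t\|^2\tilde{\eta}_t$ would not factor and the bound would need a covariance argument. I would handle this by stating explicitly that $\tilde{\eta}_t$ is deterministic, in line with the paper's chosen regime, and that the lower bound $\hat{\cL}(\vv_t) \ge \underline{\sigma}^2/2$ holds almost surely along the trajectory, as Assumption~\ref{assump: loss bound} asserts. With both facts in hand, the inequality holds conditionally on the past and integrates directly.
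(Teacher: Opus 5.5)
Your proposal is correct and matches the paper's proof step for step. You apply the norm recursion of Lemma~\ref{lem: norm iterate}, average over the sampled index so that $\ell_t(\vv_t)$ becomes $\hat{\cL}(\vv_t)$, insert the Lipschitz bound $L^2$ and the lower bound $\underline{\sigma}^2/2$ from Assumption~\ref{assump: loss bound}, absorb the quadratic term using $\tilde{\eta}_t \le \homo\underline{\sigma}^2/(2L^2)$, and then take the total expectation $\bE_{\cA,\cD}$. Your explicit remark that $\tilde{\eta}_t$ must be deterministic for the multiplier to pull out of $\bE_{\cA,\cD}$ is left implicit in the paper (it is the Scenario (2) convention); your nonnegativity claim about the multiplier is harmless but unnecessary, since a deterministic factor passes through the expectation regardless of sign.
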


\begin{proof}[Proof of Lemma~\ref{lem: weight norm iteration}]
Note that due to Lemma~\ref{lem: L-homogeneous property} (argument~3), it holds that
\begin{equation*}
\begin{split}
    \bE_I \| \vw_{t+1} \|^2 =& \bE_I \|\vw_{t} \|^2 ( 1 + \tilde{\eta}_t^2 \|\nabla_{\vv_t} \ell_t(\vv_t) \|^2 - 2 \homo \tilde{\eta}_t \ell_t(\vv_t)) \\
    =& \|\vw_{t} \|^2 ( 1 + \tilde{\eta}_t^2 \bE_I \|\nabla_{\vv_t} \ell_t(\vv_t) \|^2 - 2 \homo \tilde{\eta}_t \hat{\cL}(\vv_t)),
\end{split}
\end{equation*}
where the expectation $\bE_I$ is taken over the choice of the last iteration. 
Note that we use the fact that $\vv_t$ is independent of the choice of the last iteration (namely, $\ell_t$), and therefore, 
\begin{equation*}
   \bE_I \ell_t(\vv_t)=\frac{1}{n}\sum_{i\in[n]}\ell_t(\vv_t;X_i,Y_i)= \hat{\cL}_t(\vv_t).
\end{equation*}

We next plug the bound of Assumption~\ref{assump: loss bound}, that is, $\hat{\cL}(\vv_t) \geq \frac{1}{2} \underline{\sigma}^2$, and the bound of Assumption~\ref{assump: approximate smooth and lipshitz}, that is, $\sup_{\vv: \| \vv\|=1}\| \nabla_{\vv} \ell_t(\vv) \| \leq L$, namely,
\begin{equation*}
\begin{split}
    \bE_I \| \vw_{t+1} \|^2 
    \leq & \|\vw_{t} \|^2 ( 1 + \tilde{\eta}_t^2 L^2 - 2 \homo \tilde{\eta}_t \hat{\cL}(\vv_t)) \\
    \leq & \|\vw_{t} \|^2 ( 1 + \tilde{\eta}_t^2 L^2 -  \homo \tilde{\eta}_t \underline{\sigma}^2) \\
    \leq & \|\vw_{t} \|^2 ( 1 - \tilde{\eta}_t (\homo \underline{\sigma}^2- \tilde{\eta}_t L^2) ).
\end{split}
\end{equation*}
By the requirement that $\tilde{\eta}_t \leq \frac{\homo \underline{\sigma}^2}{2 L^2}$, it holds that
\begin{equation*}
\begin{split}
    \bE_I \| \vw_{t+1} \|^2 
    \leq & \|\vw_{t} \|^2 ( 1 - \frac{1}{2} \homo \underline{\sigma}^2 \tilde{\eta}_t).
\end{split}
\end{equation*}
Taking expectations over the whole algorithm and the dataset finishes the proof. 

\textbf{Remark: Extension to the non-Lipschitz setting.}
This result extends to the non-Lipschitz scenario. 
The analysis relies on the self-bounding property of $\beta$-smooth functions:
\begin{equation*}
    \begin{split}
    \bE_I \| \vw_{t+1} \|^2 
    \leq & \|\vw_{t} \|^2 ( 1 + 2\beta\tilde{\eta}_t^2 \bE_I[\ell_t(\vv_t)] - 2 \homo \tilde{\eta}_t \hat{\cL}(\vv_t)) \\
    = & \|\vw_{t} \|^2 ( 1 + 2\beta\tilde{\eta}_t^2 \hat{\cL}(\vv_t) - 2 \homo \tilde{\eta}_t \hat{\cL}(\vv_t)) \\
    = & \|\vw_{t} \|^2 ( 1 - 2\tilde{\eta}_t \hat{\cL}(\vv_t) (\homo - \beta\tilde{\eta}_t)) \\
    \leq & \|\vw_{t} \|^2 ( 1 - \underline{\sigma}^2\tilde{\eta}_t (\homo - \beta\tilde{\eta}_t) ).
\end{split}
\end{equation*}
Consequently, if $\tilde{\eta}_t \le \frac{\homo}{2\beta}$, we still obtain:
\begin{equation*}
\begin{split}
    \bE_I \| \vw_{t+1} \|^2 
    \leq & \|\vw_{t} \|^2 ( 1 - \frac{1}{2} \homo \underline{\sigma}^2 \tilde{\eta}_t).
\end{split}
\end{equation*}
\end{proof}

\subsection{Proofs for Section~\ref{sec:relax-lipschitz}}
\label{app:proof_refined}
This section provides the proof for the results presented in Section~\ref{sec:relax-lipschitz}. 
We first derive Lemma~\ref{thm:gen_decomposition}, and subsequently establish Theorem~\ref{thm:gen_without_lipschitz}. 

\subsubsection{Proof of Lemma~\ref{thm:gen_decomposition}}
\label{subsec:proof_thm_5_1}
\begin{proof}
    This proof decomposes the generalization gap by utilizing the notion of \emph{conditional on-average stability} (Definition~\ref{def: stability}).
    
    Let $z_i \triangleq (\X_i, \Y_i)$ and define the individual loss difference on the $i$-th sample as $\delta(\vv_T, \vv_T^{(i)}) \triangleq |\ell(\vv_T; z_i) - \ell(\vv_T^{(i)}; z_i)|$. 
    Following the standard stability analysis in \citet{DBLP:journals/jmlr/BousquetE02}, we have
    \begin{align*}
        \bE_{\cA,\cD} [\cL^s(\vw_T) - \hat{\cL}^s(\vw_T)] &= \bE_{\cA, \cD}[\cL(\vv_T)-\hat{\cL}(\vv_T)] \\
        &= \bE_{\cA, \cD}[\frac{1}{n}\sum_{i=1}^{n}\ell(\vv_T^{(i)}; z_i) - \frac{1}{n}\sum_{i=1}^{n}\ell(\vv_T; z_i)] \\
        &\le \frac{1}{n} \sum_{i=1}^n \bE_{\cA, \cD} [\delta(\vv_T, \vv_T^{(i)})].
    \end{align*}
    Let $\mathcal{E}_{(i)}$ denote the event that the algorithm $\cA$ does \emph{not} select the index $i$ within the first $t_0$ iterations. 
    The probability of the complement event $\mathcal{E}_{(i)}^c$ (i.e., index $i$ is selected at least once) is bounded by:
    \begin{equation*}
        P(\mathcal{E}_{(i)}^c) \le \sum_{t=1}^{t_0} \frac{1}{n} = \frac{t_0}{n},
    \end{equation*}
    Analogous to the technique in \eqref{eq:technique}, invoking the Bounded Loss assumption (Assumption~\ref{assump: loss bound}) yields:
    \begin{align}
    \label{eq:decomp_step2}
        \bE [\delta(\vv_T, \vv_T^{(i)})] 
        &= P(\mathcal{E}_{(i)}^c) \bE [\delta(\vv_T, \vv_T^{(i)}) \mid \mathcal{E}_{(i)}^c] + P(\mathcal{E}_{(i)}) \bE [\delta(\vv_T, \vv_T^{(i)}) \mid \mathcal{E}_{(i)}] \nonumber \\
        &\le \frac{t_0}{n} \cdot \bar{\sigma}^2 + \bE [\delta(\vv_T, \vv_T^{(i)}) \mid \mathcal{E}_{(i)}]. 
    \end{align}
    We recall the result derived by \citet{DBLP:conf/icml/LeiY20}, as in Proposition~\ref{prop:lei-gen-bound-appendix}. 
    Notably, its derivation relies solely on the properties of the individual loss rather than the empirical loss. 
    Consequently, under the assumption of $\beta$-smooth, the following inequality holds:
    \begin{equation}
    \label{eq:diff}
        |\ell(\vv_T; z) - \ell(\vv_T^{(i)}; z)| \le \frac{\beta}{\zeta}\ell(\vv_T; z) + \frac{\beta+\zeta}{2}\|\vv_T-\vv_T^{(i)}\|^2. 
    \end{equation}
    where $\zeta > 0$ is a free parameter. 
    Conditioning \eqref{eq:diff} on the event $\mathcal{E}_{(i)}$ and taking the expectation over the randomness of the algorithm and datasets on both sides, we obtain:
    \begin{equation*}
        \bE [\delta(\vv_T, \vv_T^{(i)}) \mid \mathcal{E}_{(i)}] 
        \le \frac{\beta}{\zeta} \bE [\ell(\vv_T; z_i) \mid \mathcal{E}_{(i)}] + \frac{\beta+\zeta}{2} \bE [\|\vv_T - \vv_T^{(i)}\|^2 \mid \mathcal{E}_{(i)}]. 
    \end{equation*}
    The first term $\bE [\ell(\vv_T; z_i) \mid \mathcal{E}_{(i)}]$ is uniformly bounded by $\bar{\sigma}^2$ under Assumption~\ref{assump: loss bound}, while the second term corresponds exactly to the definition of the conditional on-average stability. 
    Substituting this back into \eqref{eq:decomp_step2} yields:
    \begin{equation*}
        \bE [\delta(\vv_T, \vv_T^{(i)})] \le \frac{t_0}{n}\bar{\sigma}^2 + \frac{\beta}{\zeta}\bar{\sigma}^2 + \frac{\beta+\zeta}{2} \bE [\|\vv_T - \vv_T^{(i)}\|^2 \mid \mathcal{E}_{(i)}].
    \end{equation*}
    Finally, by setting a uniform $t_0$ for all indices $i$ and averaging the individual bounds over $i=1, \dots, n$, the generalization gap satisfies:
    \begin{equation}
    \label{eq:general_bound}
         \begin{split}
        \bE_{\cA,\cD} [\cL^s(\vw_T) - \hat{\cL}^s(\vw_T)] 
        &\le \frac{1}{n} \sum_{i=1}^n \left[ \left(\frac{t_0}{n} + \frac{\beta}{\zeta}\right)\bar{\sigma}^2 + \frac{\beta+\zeta}{2} \bE_{\cA, \cD} [\|\vv_T - \vv_T^{(i)}\|^2 \mid \mathcal{E}_{(i)}] \right] \nonumber \\
        &= \left(\frac{t_0}{n} + \frac{\beta}{\zeta}\right)\bar{\sigma}^2 + \frac{\beta+\zeta}{2} \underbrace{\frac{1}{n} \sum_{i=1}^n \bE_{\cA, \cD} [\|\vv_T - \vv_T^{(i)}\|^2 \mid \mathcal{E}_{(i)}]}_{\epsilon^2_{\text{avg}, T \mid t_0}}.
        \end{split}
    \end{equation}
    This concludes the proof.
\end{proof}

\subsubsection{Proof of Theorem~\ref{thm:gen_without_lipschitz}}
\label{subsec:proof_thm_5_2}
We begin by establishing a recursive bound for stability in the following lemma. 
\begin{lemma}
\label{lem:stability_recurrence_homo}
Suppose the assumptions of Theorem~\ref{thm:gen_without_lipschitz} hold. 
Using the effective stepsize $\tilde{\eta}_t$ defined in Lemma~\ref{lem: generalization error initial}, for any iteration $t > t_0$, the following recurrence holds:
\begin{equation*}
    \bE_I \|\vv_{t+1} - \vv_{t+1}^{(i)}\|^2 \le \exp\left(\frac{m'_1}{t+2} + \frac{1}{T}\right) \|\vv_t - \vv_t^{(i)}\|^2 + m'_2\left(1+\frac{T}{n}\right)\frac{1}{n}\frac{1}{(t+2)^2},
\end{equation*}
where $m^\prime_1 = 8\frac{c_2\bar{\sigma}^2}{\underline{\sigma}^2} + \frac{4c_2\beta}{\homo \underline{\sigma}^2}$ and $m^\prime_2 = \frac{32 e c_2^2 \beta\bar{\sigma}^2}{n \homo^2 \underline{\sigma}^4}$.
\end{lemma}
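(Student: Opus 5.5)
We follow the template of Lemma~\ref{lem: generalization error initial}, but track the squared distance $\Delta_t^2 \triangleq \|\vv_t - \vv_t^{(i)}\|^2$ and use smoothness in place of Lipschitzness. As before, write $\bar{\vv}_{t+1} = \vv_t - \tilde{\eta}_t \nabla_{\vv_t}\ell_t(\vv_t)$ and similarly $\bar{\vv}^{(i)}_{t+1}$. Lemma~\ref{lem: relationship between its projection} and Lemma~\ref{lem: norm ratio} (whose proof uses only the loss upper bound $\bar\sigma^2$, not Lipschitzness) give
\[
\|\vv_{t+1}-\vv^{(i)}_{t+1}\|^2 \le \Big(1+\tfrac{4c_2\bar\sigma^2}{\underline\sigma^2(t+2)}\Big)^2 \|\bar\vv_{t+1}-\bar\vv^{(i)}_{t+1}\|^2 \le \exp\Big(\tfrac{8c_2\bar\sigma^2}{\underline\sigma^2(t+2)}\Big)\|\bar\vv_{t+1}-\bar\vv^{(i)}_{t+1}\|^2 .
\]
This accounts for the first summand of $m_1'$. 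It remains to bound $\bE_I\|\bar\vv_{t+1}-\bar\vv^{(i)}_{t+1}\|^2$ by conditioning on the sampled index.

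\emph{Same sample} (probability $1-1/n$). Expand the square and use $\beta$-smoothness, so the gradient difference is at most $\beta\Delta_t$:
\[
\|\bar\vv_{t+1}-\bar\vv^{(i)}_{t+1}\|^2 \le (1+\tilde\eta_t\beta)^2\Delta_t^2 \le \exp(2\tilde\eta_t\beta)\Delta_t^2 .
\]
With $\tilde\eta_t = \frac{2c_2}{H\underline\sigma^2(t+2)}$, the exponent $2\tilde\eta_t\beta$ equals $\frac{4c_2\beta}{H\underline\sigma^2(t+2)}$, which is the second summand of $m_1'$.

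\emph{Differing sample} (probability $1/n$). Here no Lipschitz bound is available. Apply Young's inequality with parameter $p$:
\[
\|a+b\|^2\le(1+p)\|a\|^2+(1+1/p)\|b\|^2,
\]
with $a=\vv_t-\vv^{(i)}_t$ and $b=\tilde\eta_t(\nabla\ell_{z_i}(\vv_t)-\nabla\ell_{z_i'}(\vv^{(i)}_t))$. Bound $\|b\|^2\le 2\tilde\eta_t^2(\|\nabla\ell_{z_i}\|^2+\|\nabla\ell_{z_i'}\|^2)$ and apply the self-bounding property of nonnegative $\beta$-smooth functions, $\|\nabla\ell\|^2\le 2\beta\ell\le 2\beta\bar\sigma^2$ on the sphere (as in the Remark after Lemma~\ref{lem: weight norm iteration}). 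This gives $\|b\|^2\le 8\beta\bar\sigma^2\tilde\eta_t^2 = \frac{32c_2^2\beta\bar\sigma^2}{H^2\underline\sigma^4(t+2)^2}$.

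Averaging the two cases, the multiplicative factor is at most $(1-\tfrac1n)e^{2\tilde\eta_t\beta}+\tfrac{1+p}{n}\le e^{2\tilde\eta_t\beta}(1+p/n)$, and the additive term is $\frac{1+1/p}{n}\|b\|^2$. Choose $p=n/T$, so that $1+p/n\le e^{1/T}$, which produces the $\frac1T$ in the exponent, and $1+1/p = 1+T/n$, which produces the $\mathcal{K}_T$ factor. Multiplying by the projection factor $\exp(\frac{8c_2\bar\sigma^2}{\underline\sigma^2(t+2)})$ yields the claimed exponent $\frac{m_1'}{t+2}+\frac1T$.

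The additive term also picks up this projection factor, at most $e^{8c_2\bar\sigma^2/(\underline\sigma^2(t+2))}$. For $t>t_0\ge T_0$ this is bounded by a constant, which I expect is the source of the $e$ in $m_2'$. The main obstacle is constant bookkeeping: making this projection factor at most $e$ (using $t+2\ge 8c_2\bar\sigma^2/\underline\sigma^2$), and reconciling the extra $1/n$ inside $m_2'$ with the stated $\frac{1}{n}$ prefactor. The plan gives the term $\frac{e\cdot 32c_2^2\beta\bar\sigma^2}{H^2\underline\sigma^4}(1+\frac Tn)\frac1n\frac1{(t+2)^2}$. Matching the stated $m_2'$, which carries an additional $1/n$, would require a sharper treatment of the differing-sample step, for instance conditioning on the event $\mathcal{E}_{(i)}$, and I would check whether that is intended. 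Finally, the argument needs the step-size condition $\tilde\eta_t\le H/(2\beta)$, which holds for $t$ large and can be absorbed as in the footnote to Lemma~\ref{lem: (effective) stepsize}.
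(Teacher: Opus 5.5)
Your proposal is correct and follows the paper's proof essentially step for step. You use $(1+\beta\tilde\eta_t)^2$ on the shared sample, Young's inequality with $p=n/T$ plus the self-bounding property on the differing sample, and the squared norm-ratio factor from Lemma~\ref{lem: norm ratio}, bounded by $e$ on the additive term. The extra $1/n$ you flag is a typo in the statement, not a gap in your argument: the paper's own proof arrives at exactly your constant $\frac{32ec_2^2\beta\bar\sigma^2}{H^2\underline\sigma^4}$ and calls it $m_2'$, and only this $n$-free $m_2'$ matches the rate in Theorem~\ref{thm:gen_without_lipschitz} and the remark that $T=o(n^{1+2/(m_1'+1)})$ suffices. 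So no sharper treatment is needed, and conditioning on $\mathcal{E}_{(i)}$ could not supply one anyway, since for $t>t_0$ that event leaves the probability $1/n$ of drawing index $i$ unchanged.
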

\begin{proof}
We analyze the update at step $t > t_0$. 
Let $i_t$ denote the sample index selected by SGD at step $t$.

With probability $1 - 1/n$, the algorithm selects an index $i_t \neq i$ (i.e., the identical sample is used).
By the $\beta$-smoothness of the loss function, the update satisfies: 
\begin{equation}
    \label{eq:same}
    \begin{aligned}
    \|\bar{\vv}_{t+1} - \bar{\vv}_{t+1}^{(i)}\|^2 
    &= \|(\vv_t - \vv_t^{(i)}) - \tilde{\eta}_t [\nabla_{\vv_t}\ell_t(\vv_t))- \nabla_{\vv_t^{(i)}} \ell_t(\vv_t^{(i)})]\|^2 \\
    &\le [\|\vv_t - \vv_t^{(i)}\| + \tilde{\eta}_t\|\nabla_{\vv_t}\ell_t(\vv_t)) - \nabla_{\vv_t^{(i)}}\ell_t(\vv_t^{(i)})\|]^2\\
    &\le (1 + \beta \tilde{\eta}_t)^2 \|\vv_t - \vv_t^{(i)}\|^2.
    \end{aligned}
\end{equation}
With probability $1/n$, the algorithm selects the index $i_t = i$ (i.e, the samples differ). 
Using the inequality $\|a+b\|^2 \le (1+p)\|a\|^2 + (1+1/p)\|b\|^2$ with $p > 0$, the self-bounding property $\|\nabla \ell(\vv)\|^2 \le 2\beta \ell(\vv)$, and the Bounded Loss assumption, we have
\begin{equation}
    \label{eq:diff_2}
    \begin{aligned}
     \|\bar{\vv}_{t+1} - \bar{\vv}_{t+1}^{(i)}\| 
     &= \|(\vv_t - \vv_t^{(i)}) - \tilde{\eta}_t [\nabla_{\vv_t}\ell_t(\vv_t))- \nabla_{\vv_t^{(i)}} \ell_t(\vv_t^{(i)})]\|^2\\
     &\le (1+p) \|\vv_t - \vv_t^{(i)}\|^2  + \tilde{\eta}_t^2 (1+1/p) \|\nabla_{\vv_t}\ell_t(\vv_t)- \nabla_{\vv_t^{(i)}} \ell_t(\vv_t^{(i)})\|^2\\
     &\le (1+p) \|\vv_t - \vv_t^{(i)}\|^2  + 2\tilde{\eta}_t^2 (1+1/p) [\|\nabla_{\vv_t}\ell_t(\vv_t)\|^2 + \|\nabla_{\vv_t^{(i)}} \ell_t(\vv_t^{(i)})\|^2]\\
     &\le (1+p) \|\vv_t - \vv_t^{(i)}\|^2  + 4\beta(1+1/p)\tilde{\eta}_t^2 [\ell(\vv_t; z_i) + \ell(\vv_t^{(i)}; z_i')] \\
     &\le (1+p) \|\vv_t - \vv_t^{(i)}\|^2  + 8\beta(1+1/p)\tilde{\eta}_t^2 \bar{\sigma}^2. 
    \end{aligned}
\end{equation}
Taking the expectation over the choice of $i_t$, we combine \eqref{eq:same} and \eqref{eq:diff_2}:
\begin{equation}
\label{eq:until_bar}
    \bE_I \|\bar{\vv}_{t+1} - \bar{\vv}_{t+1}^{(i)}\|^2  \le \left[\left(1-\frac{1}{n}\right)(1 + \beta\tilde{\eta}_t)^2 + \frac{1}{n}(1+p)\right]  \|{\vv}_{t} - {\vv}_{t}^{(i)}\|^2 + \frac{8\beta}{n}\left(1+\frac{1}{p}\right)\tilde{\eta}_t^2 \bar{\sigma}^2.
\end{equation}
Utilizing the norm ratio bound in \eqref{eq:norm ratio bound}, we have 
\begin{equation}
\label{eq:full}
    \bE_I \|\vv_{t+1} - \vv_{t+1}^{(i)}\|^2 \le (1 + 4\frac{c_2 \bar{\sigma}^2}{\underline{\sigma}^2}\frac{1}{t+2})^2 \bE_I \|\bar{\vv}_{t+1} - \bar{\vv}_{t+1}^{(i)}\|^2. 
\end{equation}
Setting $p = n/T$ and substituting \eqref{eq:until_bar} into \eqref{eq:full}, we obtain:
{
\small
\begin{equation}
    \label{eq:terms}
    \begin{aligned}
    & \quad \bE_I \|\vv_{t+1} - \vv_{t+1}^{(i)}\|^2 \\
    &\le (1 + 4\frac{c_2 \bar{\sigma}^2}{\underline{\sigma}^2}\frac{1}{t+2})^2 \left\{\left[\left(1-\frac{1}{n}\right)(1 + \beta\tilde{\eta}_t)^2 + \frac{1}{n}(1+p)\right]  \|{\vv}_{t} - {\vv}_{t}^{(i)}\|^2 + \frac{8\beta}{n}\left(1+\frac{1}{p}\right)\tilde{\eta}_t^2 \bar{\sigma}^2\right\} \\
    &=\left\{\left(1 + 4\frac{c_2 \bar{\sigma}^2} {\underline{\sigma}^2}\frac{1}{t+2}\right)^2 \left[\left(1-\frac{1}{n}\right)(1 + \beta\tilde{\eta}_t)^2 + \frac{1}{n}\left(1+\frac{n}{T}\right)\right]\right\}\|{\vv}_{t} - {\vv}_{t}^{(i)}\|^2 \\
    & \quad +\left(1 + 4\frac{c_2 \bar{\sigma}^2}{\underline{\sigma}^2}\frac{1}{t+2}\right)^2 \frac{8\beta}{n}\left(1+\frac{1}{p}\right)\tilde{\eta}_t^2 \bar{\sigma}^2.
\end{aligned}
\end{equation}
}
For the first term in \eqref{eq:terms}, we have:
\begin{align*}
    &\quad \left(1 + 4\frac{c_2 \bar{\sigma}^2} {\underline{\sigma}^2}\frac{1}{t+2}\right)^2 \left[\left(1-\frac{1}{n}\right)(1 + \beta\tilde{\eta}_t)^2 + \frac{1}{n}\left(1+\frac{n}{T}\right)\right] \\
    &\le \left(1 + 4\frac{c_2 \bar{\sigma}^2}{\underline{\sigma}^2}\frac{1}{t+2}\right)^2 \left[ (1 + \beta\tilde{\eta}_t)^2 + \frac{1}{T} \right] \\
    &\le \left(1 + 4\frac{c_2 \bar{\sigma}^2}{\underline{\sigma}^2}\frac{1}{t+2}\right)^2 (1 + \beta\tilde{\eta}_t)^2 \left( 1 + \frac{1}{T} \right) \\
    &\le \exp\left(8\frac{c_2 \bar{\sigma}^2}{\underline{\sigma}^2}\frac{1}{t+2}\right) \exp(2\beta\tilde{\eta}_t) \exp\left(\frac{1}{T}\right) \\
    &= \exp\left( \frac{1}{T} + 8\frac{c_2 \bar{\sigma}^2}{\underline{\sigma}^2}\frac{1}{t+2} + 2\beta \frac{2c_2}{H\underline{\sigma}^2 (t+2)} \right) \\
    &= \exp\left( \frac{1}{T} + \left[ 8\frac{c_2 \bar{\sigma}^2}{\underline{\sigma}^2} + \frac{4c_2\beta}{H\underline{\sigma}^2} \right] \frac{1}{t+2} \right) = \exp\left( \frac{1}{T} + \frac{m'_1}{t+2} \right).
\end{align*}
For the second term in \eqref{eq:terms}, we have:
\begin{align*}
    &\quad \left(1 + 4\frac{c_2 \bar{\sigma}^2}{\underline{\sigma}^2}\frac{1}{t+2}\right)^2 \frac{8\beta}{n}\left(1+\frac{1}{p}\right)\tilde{\eta}_t^2 \bar{\sigma}^2 \\
    &\le \exp\left(8\frac{c_2 \bar{\sigma}^2}{\underline{\sigma}^2}\frac{1}{t+2}\right) \frac{8\beta}{n} \left(1+\frac{T}{n}\right) \left(\frac{2 c_2}{\homo\underline{\sigma}^2(t+2)}\right)^2 \bar{\sigma}^2 \\
    &\le e \cdot \frac{8\beta}{n} \left(1+\frac{T}{n}\right) \frac{4c_2^2}{\homo^2\underline{\sigma}^4 (t+2)^2} \bar{\sigma}^2 \\
    &= \left[ \frac{32 e c_2^2 \beta \bar{\sigma}^2}{ H^2 \underline{\sigma}^4} \right] \left(1+\frac{T}{n}\right) \frac{1}{(t+2)^2} \frac{1}{n}  \\
    &= m'_2 \left(1+\frac{T}{n}\right) \frac{1}{n} \frac{1}{(t+2)^2}.
\end{align*}
Consequently, \eqref{eq:terms} is equivalent to
\begin{equation}
\label{eq:single_stability}
    \bE_I \|\vv_{t+1} - \vv_{t+1}^{(i)}\|^2 \le \exp\left(\frac{m'_1}{t+2} + \frac{1}{T}\right) \|\vv_t - \vv_t^{(i)}\|^2 + m'_2\left(1+\frac{T}{n}\right)\frac{1}{n}\frac{1}{(t+2)^2}. 
\end{equation}
This completes the proof of Lemma~\ref{lem:stability_recurrence_homo}. 
\end{proof}

We then proceed to derive the generalization bound in Theorem~\ref{thm:gen_without_lipschitz}. 
We first bound the conditional on-average stability, and then optimize the resulting generalization gap.  
\begin{proof}
    Taking expectations of \eqref{eq:single_stability} over the randomness of the algorithm and datasets, conditioned on $\mathcal{E}_{(i)}$ , we obtain:
    \begin{equation*}
    \begin{aligned}
        &\bE_{\cA, \cD} [\|\vv_{t+1} - \vv_{t+1}^{(i)}\|^2 \mid \mathcal{E}_{(i)}] \\
        \le &\exp\left(\frac{m'_1}{t+2} + \frac{1}{T}\right) \bE_{\cA, \cD}[\|\vv_t - \vv_t^{(i)}\|^2\mid \mathcal{E}_{(i)}] + m'_2\left(1+\frac{T}{n}\right)\frac{1}{n}\frac{1}{(t+2)^2}. 
    \end{aligned}
\end{equation*}
Averaging over all samples $1, \dots, n$, and defining $\Delta_t \triangleq \frac{1}{n}\sum_{i=1}^{n}\bE_{\cA, \cD}[\|\vv_t - \vv_t^{(i)}\|^2 \mid \mathcal{E}_{(i)}]$, we have:
\begin{equation*}
    \Delta_{t+1} \le \exp\left(\frac{m'_1}{t+2} + \frac{1}{T}\right) \Delta_t + m'_2\left(1+\frac{T}{n}\right)\frac{1}{n}\frac{1}{(t+2)^2}.
\end{equation*}
Let $K_{n, T} \triangleq \frac{e}{m^\prime_1+1} (1 + \frac{T}{n})\frac{1}{n}m^\prime_2$, where $e$ is the base of the natural logarithm. 
Then:
\begin{equation*}
    \Delta_{t+1} \le \exp\left(\frac{m'_1}{t+2} + \frac{1}{T}\right) \Delta_t + \frac{m^\prime_1+1}{e}K_{n, T}\frac{1}{(t+2)^2}.
\end{equation*}
Unrolling the recurrence from $t = t_0$ to $T-1$ yields:
\begin{equation*}
    \begin{split}
        \Delta_T 
        &\le \frac{m^\prime_1+1}{e} K_{n, T} \sum_{t=t_0}^{T-1} \frac{1}{(t+2)^2} \prod_{k=t+1}^{T-1} \exp\left(\frac{m'_1}{k+2} + \frac{1}{T}\right) \\
        &= \frac{m^\prime_1+1}{e} K_{n, T} \sum_{t=t_0}^{T-1} \frac{1}{(t+2)^2} \exp \sum_{k=t+1}^{T-1}\left(\frac{m'_1}{k+2} + \frac{1}{T}\right) \\
        &= \frac{m^\prime_1+1}{e} K_{n, T} \sum_{t=t_0}^{T-1} \frac{1}{(t+2)^2} \exp\left( m'_1\sum_{k=t+1}^{T-1}\frac{1}{k+2} + \frac{T-t-1}{T}\right) \\
        &\le (m^\prime_1+1) K_{n, T} \sum_{t=t_0}^{T-1} \frac{1}{(t+2)^2} \exp\left( m'_1\sum_{k=t+1}^{T-1}\frac{1}{k+2}\right).
    \end{split}
\end{equation*}
Using the inequality $\sum_{k=t+1}^{T-1}\frac{1}{k+2} \le \ln(\frac{T}{t+1})$ and $\sum_{t=t_0}^{T-1} (t+1)^{-(2+m^\prime_1)} \le \int_{t_0}^{\infty} x^{-(2+m'_1)} dx = \frac{1}{1+m^\prime_1}t_0^{-(m^\prime_1+1)}$, we obtain:
\begin{equation}
\label{eq:stability_bound}
    \Delta_T \le (m^\prime_1+1) K_{n, T} T^{m^\prime_1}\sum_{t=t_0}^{T-1} (t+1)^{-(2+m^\prime_1)} \le K_{n, T} T^{m^\prime_1}t_0^{-(m^\prime_1+1)}.
\end{equation}

Substituting \eqref{eq:stability_bound} into \eqref{eq:general_bound} gives
\begin{equation*}
\begin{split}
    \bE_{\cA,\cD} [\cL^s(\vw_T) - \hat{\cL}^s(\vw_T)] 
    &\le \left(\frac{t_0}{n} + \frac{\beta}{\zeta}\right)\bar{\sigma}^2 + \frac{\beta+\zeta}{2} \Delta_T \\
    &\le \left(\frac{t_0}{n} + \frac{\beta}{\zeta}\right)\bar{\sigma}^2 + \frac{\beta+\zeta}{2}K_{n, T} T^{m^\prime_1}t_0^{-(m^\prime_1+1)} \\
    &= \frac{K_{n, T} T^{m^\prime_1}}{2} (\zeta + \beta) \cdot t_0^{-(m^\prime_1+1)} + \beta \bar{\sigma}^2 \frac{1}{\zeta} + \frac{\bar{\sigma}^2}{n}t_0.
\end{split}
\end{equation*}
Define $g(\zeta, t_0) \triangleq \frac{K_{n, T} T^{m^\prime_1}}{2} (\zeta + \beta) \cdot t_0^{-(m^\prime_1+1)} + \beta \bar{\sigma}^2 \frac{1}{\zeta} + \frac{\bar{\sigma}^2}{n}t_0$. 
Observing that the generalization gap is independent of $\zeta$ and $t_0$, we minimize $g$ to obtain:
\begin{equation*} \begin{split}
    &\bE_{\cA,\cD} [\cL^s(\vw_T) - \hat{\cL}^s(\vw_T)] \\
    \le &\min_{t_0, \zeta} g(t_0, \zeta) \\
    = &(m^\prime_1+3) / (m^\prime_1+1)^{\frac{m^\prime_1+2}{m^\prime_1+3}} \cdot\left( \beta e m^\prime_2 / 2 \right)^{\frac{1}{m^\prime_1+3}} \bar{\sigma}^{\frac{2m^\prime_1+4}{m^\prime_1+3}} \cdot n^{-\frac{m^\prime_1+2}{m^\prime_1+3}} \left( 1 + \frac{T}{n} \right)^{\frac{1}{m^\prime_1+3}} T^{\frac{m^\prime_1}{m^\prime_1+3}} \\
    = &\cO(n^{-\frac{m^\prime_1+2}{m^\prime_1+3}} \left( 1 + \frac{T}{n} \right)^{\frac{1}{m^\prime_1+3}} T^{\frac{m^\prime_1}{m^\prime_1+3}}).
\end{split}
\end{equation*}
This concludes the proof of Theorem~\ref{thm:gen_without_lipschitz}. 
\end{proof}

\subsection{Review of Several Stability Metrics}

\subsubsection{Review of Algorithmic Stability}
\label{app:stability-review}
Algorithmic stability is a popular technique in generalization theory, offering the following favorable properties:
\begin{proposition}\textnormal{\textbf{(Algorithmic stability and generalization)}}\label{prop: stability-gen}The arguments are based on~\citet{DBLP:journals/jmlr/BousquetE02,shalev2010learnability,DBLP:conf/icml/HardtRS16}.
    Assume that the algorithm is $\epsilon_T$-stable, meaning that for all datasets $S$ and $S^\prime$ differing in at most one sample, it holds that
    \begin{equation*}
        \sup_{(\X,\Y)} \bE_{\cA, \cD} [\ell(\vw_T; \X, \Y) - \ell(\vw_T^\prime; \X, \Y)] \leq \epsilon_T,
    \end{equation*}
    where $\vw_T$ and $\vw_T^\prime$ are trained on datasets $S$ and $S^\prime$ over $T$ iterations, respectively, and the expectation is taken over both the training set randomness and the algorithmic randomness. 
    Then the generalization gap can be bounded via algorithmic stability:
    \begin{equation*}
        \bE_{\cA,\cD} \cL(\vw_T) - \hat{\cL}(\vw_T) \leq \epsilon_T.
    \end{equation*}
    Furthermore, it holds that for SGD with replacement using stepsize $\eta_t$ for each iteration $t$:
    \begin{enumerate}
        \item For \textbf{convex} loss functions, provided they are $L$-Lipschitz and $\beta$-smooth, and the stepsize satisfies $\eta_T\leq 2/\beta$, it holds that $\epsilon_T \leq \frac{2 L^2}{n} \sum_{t=1}^T \eta_t$.
        \item For \textbf{non-convex} loss functions, provided they are $L$-Lipschitz and $\beta$-smooth, and the stepsize is non-increasing with $\eta_t \leq c/t$ for a given constant $c$, it holds that $\epsilon_T \leq \frac{1 + 1/\beta c}{n-1} (2cL^2)^{\frac{1}{\beta c+1}} T^\frac{\beta c}{\beta c + 1}$.
    \end{enumerate}
\end{proposition}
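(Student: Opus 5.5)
The plan is to prove the three claims separately. First comes the stability-to-generalization identity. Then come two ``growth recursion'' arguments, in the style of \citet{DBLP:conf/icml/HardtRS16}, for the convex and non-convex cases. In both recursion arguments, $\delta_t \triangleq \|\vw_t - \vw_t^\prime\|$ is tracked along two coupled SGD runs on $S$ and $S^\prime$. The coupling uses the same initialization and the same index sequence, and $S, S^\prime$ differ only at one index.

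\textbf{Stability implies generalization.} Draw $S=\{z_1,\dots,z_n\}$ and an independent ghost sample $\{z_1^\prime,\dots,z_n^\prime\}$, and let $S^{(i)}$ replace $z_i$ by $z_i^\prime$. By exchangeability, $\bE\,\cL(\vw_T) = \frac{1}{n}\sum_i \bE\,\ell(\vw_T^{(i)}; z_i)$, while $\bE\,\hat{\cL}(\vw_T) = \frac{1}{n}\sum_i \bE\,\ell(\vw_T; z_i)$. Subtracting gives an average of terms $\bE[\ell(\vw_T^{(i)};z_i)-\ell(\vw_T;z_i)]$, each of which is at most $\epsilon_T$ by the definition of $\epsilon_T$-stability. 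It then suffices to bound $\bE\,\delta_T$, since $L$-Lipschitzness gives $\epsilon_T \le L\,\bE\,\delta_T$ (and, in the non-convex case, an extra term from conditioning, see below).

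\textbf{Convex case.} If at step $t$ the two runs pick the same sample (probability $1-1/n$), the map $\vw \mapsto \vw - \eta\nabla \ell(\vw)$ is non-expansive for convex $\beta$-smooth $\ell$ with $\eta\le 2/\beta$. This follows from co-coercivity, $\langle \nabla\ell(u)-\nabla\ell(v),u-v\rangle \ge \frac{1}{\beta}\|\nabla\ell(u)-\nabla\ell(v)\|^2$, and gives $\delta_{t+1}\le\delta_t$. If they pick the differing sample (probability $1/n$), the triangle inequality and the $L$-Lipschitz gradient bound give $\delta_{t+1}\le \delta_t + 2\eta_t L$. Taking expectations yields $\bE\,\delta_{t+1}\le \bE\,\delta_t + 2\eta_t L/n$. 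Since $\delta_0=0$, summing gives $\bE\,\delta_T \le \frac{2L}{n}\sum_t \eta_t$, and multiplying by $L$ gives the claim.

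\textbf{Non-convex case.} This is where the main work lies, and I would follow the conditioning trick already used in \eqref{eq:technique}. Fix $t_0$ and let $\mathcal{E}$ be the event $\delta_{t_0}=0$. This event holds whenever the differing index is not drawn in the first $t_0$ steps, so $P(\mathcal{E}^c)\le t_0/n$. Assuming the loss is normalized to $[0,1]$ (the implicit convention of \citet{DBLP:conf/icml/HardtRS16}), we get $\epsilon_T \le \frac{t_0}{n} + L\,\bE[\delta_T\mid\mathcal{E}]$. For $t>t_0$, smoothness gives the expansion factor $1+\beta\eta_t$ on the same sample, and the differing sample again adds $2\eta_t L$. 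With $\eta_t\le c/t$ this gives
\begin{equation*}
\bE\,\delta_{t+1} \le \Bigl(1+\bigl(1-\tfrac1n\bigr)\tfrac{c\beta}{t}\Bigr)\bE\,\delta_t + \tfrac{2cL}{nt} \le \exp\Bigl(\bigl(1-\tfrac1n\bigr)\tfrac{c\beta}{t}\Bigr)\bE\,\delta_t + \tfrac{2cL}{nt}.
\end{equation*}
Unrolling from $t_0$ to $T$ with $\sum_{k=t+1}^T 1/k\le \log(T/t)$ gives a sum of the form $\sum_t \frac{2cL}{nt}(T/t)^{\beta c(1-1/n)}$. This is bounded by $\frac{2L}{\beta(n-1)}(T/t_0)^{\beta c}$; the factor $1-1/n$ in the exponent is exactly what produces the $n-1$ in the denominator.

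The final step, which I expect to be the fiddliest, is the bookkeeping of constants when choosing $t_0$. Minimizing $\frac{t_0}{n}+\frac{2L^2}{\beta(n-1)}(T/t_0)^{\beta c}$ over $t_0$ gives $t_0 \asymp (2cL^2)^{1/(\beta c+1)}T^{\beta c/(\beta c+1)}$. Substituting back and bounding $1/n \le 1/(n-1)$ should yield exactly $\frac{1+1/(\beta c)}{n-1}(2cL^2)^{1/(\beta c+1)}T^{\beta c/(\beta c+1)}$.
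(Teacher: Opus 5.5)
Your proposal is correct and is essentially the standard argument of \citet{DBLP:conf/icml/HardtRS16}: ghost-sample symmetrization, then $1$-expansiveness via co-coercivity in the convex case, then the $\delta_{t_0}=0$ conditioning with the $(1-1/n)$ exponent producing the $n-1$ in the non-convex case. The paper gives no proof of its own and simply defers to these references, so your route is the same. You are also right to make explicit two hypotheses the statement leaves implicit: the non-convex bound needs the loss normalized to $[0,1]$ to control the $t_0/n$ term, and the convex case needs $\eta_t\le 2/\beta$ at every step, not only at $t=T$.
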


\subsubsection{Review of On-Average Model Stability}
\label{app:on-avg-background}
In this section, we provide the necessary background on the \emph{on-average model stability} introduced by~\citet{DBLP:conf/icml/LeiY20}. 
Unlike uniform stability, which requires an upper bound on the loss difference for any pair of sample in the distribution, on-average model stability measures the average distance of weights trained via the same algorithm on two datasets differing at most one sample. 

\begin{definition}[$\ell_2$ On-Average Model Stability]
Let $S = \{z_1, \dots, z_n\}$ and $S^{(i)} = \{z_1, \dots, z_i', \dots, \allowbreak z_n\}$ be two datasets differing only in the $i$-th sample. A randomized algorithm $\cA$ is $\epsilon_{\text{avg}}$-stable in $\ell_2$ on-average if:
\begin{equation*}
    \mathbb{E}_{S, \tilde{S}, \cA}\left[\frac{1}{n}\sum_{i=1}^{n}\|\cA(S) - \cA(S^{(i)})\|_2^2\right] \leq \epsilon_{\text{avg}}^2.
\end{equation*}
\end{definition}

Based on this definition, the generalization gap can be bounded as follows:

\begin{proposition}[Generalization via Model Stability]
\label{prop:lei-gen-bound-appendix}
If for any $z$, the loss $\ell(\vw; z)$ is non-negative and $\beta$-smooth, then for any free parameter $\zeta > 0$:
\begin{equation*}
    \bE_{\cA,\cD} [\cL(\vw_T) - \hat{\cL}(\vw_T)] \leq \frac{\beta}{\zeta}\bE_{\cA,\cD}[\hat{\cL}(\vw_T)] + \frac{\beta + \zeta}{2}\epsilon_{\text{avg}}^2,
\end{equation*}
where $\epsilon_{\text{avg}}^2$ denotes the on-average model stability of the algorithm’s output $\vw_T$.
\end{proposition}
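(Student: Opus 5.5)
The plan is the standard symmetrization argument of \citet{DBLP:journals/jmlr/BousquetE02}, combined with a pointwise smoothness bound on the loss difference, as in \citet{DBLP:conf/icml/LeiY20}.

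\textbf{Step 1: rewrite the gap as a replace-one average.} Let $\tilde S=\{z_1',\dots,z_n'\}$ be an independent copy of $S$. Since $z_i$ is independent of $\cA(S^{(i)})$, the pair $(\cA(S^{(i)}),z_i)$ has the same law as $(\cA(S),z)$ with $z\sim\cP$ fresh. Hence
\begin{equation*}
\bE_{\cA,\cD}[\cL(\vw_T)-\hat{\cL}(\vw_T)]=\frac1n\sum_{i=1}^n \bE\bigl[\ell(\cA(S^{(i)});z_i)-\ell(\cA(S);z_i)\bigr].
\end{equation*}
This step uses only exchangeability, so it is routine.

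\textbf{Step 2: a pointwise bound on each summand.} Write $\vw=\cA(S)$, $\vw'=\cA(S^{(i)})$ and $z=z_i$. By $\beta$-smoothness,
\begin{equation*}
\ell(\vw';z)-\ell(\vw;z)\le \langle\nabla\ell(\vw;z),\vw'-\vw\rangle+\tfrac{\beta}{2}\|\vw'-\vw\|^2 .
\end{equation*}
Apply Cauchy--Schwarz to the inner product, then Young's inequality with the free parameter $\zeta$:
\begin{equation*}
\|\nabla\ell(\vw;z)\|\,\|\vw'-\vw\|\le \frac{1}{2\zeta}\|\nabla\ell(\vw;z)\|^2+\frac{\zeta}{2}\|\vw'-\vw\|^2 .
\end{equation*}
Next use the self-bounding property $\|\nabla\ell(\vw;z)\|^2\le 2\beta\,\ell(\vw;z)$. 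Combining the three displays gives
\begin{equation*}
\ell(\vw';z)-\ell(\vw;z)\le \frac{\beta}{\zeta}\ell(\vw;z)+\frac{\beta+\zeta}{2}\|\vw'-\vw\|^2 .
\end{equation*}

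\textbf{Step 3: average.} Take expectations and average over $i$. The first term gives $\frac{\beta}{\zeta}\cdot\frac1n\sum_i\bE\,\ell(\cA(S);z_i)=\frac{\beta}{\zeta}\bE[\hat{\cL}(\vw_T)]$. The second term gives exactly $\frac{\beta+\zeta}{2}\epsilon_{\text{avg}}^2$ by the definition of $\ell_2$ on-average model stability.

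\textbf{Main obstacle: the self-bounding inequality.} This is the only nontrivial point. To prove it, apply smoothness at the point $\vw-\frac1\beta\nabla\ell(\vw;z)$:
\begin{equation*}
0\le \ell\bigl(\vw-\tfrac1\beta\nabla\ell(\vw;z);z\bigr)\le \ell(\vw;z)-\frac{1}{2\beta}\|\nabla\ell(\vw;z)\|^2 .
\end{equation*}
The left inequality uses non-negativity, which is why the proposition assumes it. The argument also needs $\beta$-smoothness to hold globally, not just on a sphere, since the test point can leave any restricted domain. The proposition assumes this, so the bound goes through.
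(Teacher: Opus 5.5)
Your proof is correct and matches the argument the paper relies on: the paper does not reprove this proposition but imports it from \citet{DBLP:conf/icml/LeiY20}, whose proof uses the same three ingredients as yours. These are the replace-one symmetrization identity, the smoothness upper bound combined with Young's inequality in $\zeta$, and the self-bounding inequality $\|\nabla\ell\|^2\le 2\beta\ell$, which, as you note, needs non-negativity and global smoothness. One cosmetic point: the last step gives $\le \frac{\beta+\zeta}{2}\epsilon_{\text{avg}}^2$ rather than equality, since $\epsilon_{\text{avg}}^2$ is defined as an upper bound.
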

\end{document}